\DeclareMathOperator*{\argmin}{arg\,min}
\newcommand{\mbb}{\mathbb}
\newcommand{\mbe}{\mathbb E}
\newcommand{\lbr}{\left[}
\newcommand{\rbr}{\right]}
\newcommand{\bg}{{\mathbf g}}
\newcommand{\bgt}{{\mathbf g^{t}}}
\newcommand{\brt}{{\mathbf r^{t}}}
\newcommand{\brk}{{\mathbf r^{k}}}
\newcommand{\bx}{{\mathbf x}}
\newcommand{\bxt}{{\mathbf x^{t}}}
\newcommand{\bxk}{{\mathbf x^{k}}}
\newcommand{\bxtp}{{\mathbf x^{t+1}}}
\newcommand{\xit}{\xi^t}
\newcommand{\be}{{\mathbf e}}
\newcommand{\bet}{{\mathbf e^{t}}}
\newcommand{\bek}{{\mathbf e^{k}}}
\newcommand{\bz}{{\mathbf z}}
\newcommand{\bzt}{{\mathbf z^{t}}}
\newcommand{\bzk}{{\mathbf z^{k}}}
\newcommand{\bztj}{{\mathbf z^{t}_j}}
\newcommand{\wbzt}{\widetilde{\mathbf z}^t}
\newcommand{\obzt}{\overline{\mathbf z}^t}
\newcommand{\wP}{\widetilde{P}}
\newcommand{\bPsi}{{\boldsymbol{\Psi}}}
\newcommand{\bPsit}{{\bPsi^{t}}}
\newcommand{\bPhi}{{\boldsymbol{\Phi}}}
\newcommand{\bPhit}{{\bPhi^{t}}}
\newcommand{\obPhi}{{\overline{\boldsymbol{\Phi}}}}
\newcommand{\obPhit}{{\obPhi^{t}}}
\newcommand{\by}{{\mathbf y}}
\tikzstyle{startstop} = [rectangle, draw, rounded corners, align=center, minimum width=3cm, minimum height=1cm,text centered]
\tikzstyle{decision} = [diamond, draw, fill=blue!20, 
\tikzstyle{block} = [rectangle, draw, fill=blue!10, align=center, rounded corners, minimum width=3cm, minimum height=1cm]
\tikzstyle{blockcast} = [rectangle, draw, fill=red!10, align=center, rounded corners, minimum width=3cm, minimum height=0.45cm]
\tikzstyle{line} = [draw, -latex']
\tikzstyle{cloud} = [draw, ellipse,fill=red!20, node distance=3cm,
\newcommand{\R}{\mathbb{R}}
\newcommand{\E}{\mathbb{E}}
\newcommand{\N}{\mathbb{N}}
\newcommand{\calN}{\mathcal{N}}
\newcommand{\bs}{\mathbf{s}}
\newcommand{\bu}{\mathbf{u}}
\newcommand{\Prob}{\mathbb{P}}
\newtheorem{assumption}{Assumption}
\newtheorem{remark}{Remark}
\newtheorem{lemma}{Lemma}
\newtheorem{theorem}{Theorem}
\newtheorem{corollary}{Corollary}
\theoremstyle{definition}
\newtheorem{example}{Example}
\theoremstyle{plain}
\newenvironment{customthm}[1]
  {\innercustomthm}
  {\endinnercustomthm}
\newcommand{\tnorm}[1]{\left|\mkern-1mu\left|\mkern-1mu\left| #1 \right|\mkern-1mu\right|\mkern-1mu\right|}
\newcommand{\sfo}{$\mathcal{SFO}$\xspace}
\newcommand{\sgd}{\textbf{\texttt{SGD}}\xspace}
\newcommand{\nsgd}{\textbf{\texttt{N-SGD}}\xspace}
\newcommand{\sge}{\textbf{\texttt{SGE}}\xspace}
\newcommand{\nsge}{\textbf{\texttt{N-SGE}}\xspace}
\newcommand{\msge}{\textbf{\texttt{MSGE}}\xspace}
\newcommand{\nmsge}{\textbf{\texttt{N-MSGE}}\xspace}
\begin{document}

\title{Sharp High-Probability Rates for Nonlinear SGD under Heavy-Tailed Noise via Symmetrization}

\author[1]{Aleksandar Armacki}
\author[2]{Dragana Bajovi\'{c}}
\author[3]{Du\v{s}an Jakoveti\'{c}}
\author[4]{Soummya Kar}
\affil[1]{\'Ecole Polytechnique F\'ed\'erale de Lausanne, Lausanne, Switzerland,\linebreak \texttt{aleksandar.armacki@epfl.ch}}
\affil[2]{Faculty of Technical Sciences, University of Novi Sad, Novi Sad, Serbia\\ \texttt{dbajovic@uns.ac.rs}}
\affil[3]{Faculty of Sciences, University of Novi Sad, Novi Sad, Serbia\\ \texttt{dusan.jakovetic@dmi.uns.ac.rs}}
\affil[4]{Carnegie Mellon University, Pittsburgh, PA, USA, \linebreak \texttt{soummyak@andrew.cmu.edu }}

\date{}

\maketitle

\begingroup
\renewcommand\thefootnote{}\footnotetext{The work of A. Armacki and S. Kar was partially supported by NSF, Grant No. 2330196. The work of D. Jakoveti\'{c} was supported by the Ministry of Science, Technological Development and Innovation of the Republic of Serbia (Grants No. 451-03-137/2025-03/200125 \& 451-03-136/2025-03/200125), and by the Science Fund of the Republic of Serbia, Grant No. 7359, Project title LASCADO. The majority of work was done while A. Armacki was a PhD student at Carnegie Mellon University.}
\addtocounter{footnote}{-1}
\endgroup

\begin{abstract}
    We study convergence in high-probability of \sgd-type methods in non-convex optimization and the presence of heavy-tailed noise. To combat the heavy-tailed noise, a general black-box nonlinear framework is considered, subsuming nonlinearities like sign, clipping, normalization and their smooth counterparts. Our first result shows that nonlinear \sgd (\nsgd) achieves the rate $\widetilde{\mathcal{O}}(t^{-1/2})$, {where $t$ denotes the time index}, for any noise with unbounded moments and a symmetric probability density function (PDF). Crucially, \nsgd has exponentially decaying tails, matching the performance of \emph{linear} \sgd under \emph{light-tailed} noise. To handle non-symmetric noise, we propose two novel estimators, based on the idea of noise symmetrization. The first, dubbed Symmetrized Gradient Estimator (\sge), assumes a noiseless gradient at any reference point is available at the start of training, while the second, dubbed Mini-batch SGE (\msge), uses mini-batches to estimate the noiseless gradient. Combined with the nonlinear framework, we get \nsge and \nmsge methods, respectively, both achieving the same convergence rate and exponentially decaying tails as \nsgd, while allowing for non-symmetric noise with unbounded moments and PDF satisfying a mild technical condition, with \nmsge additionally requiring bounded noise moment of order $p \in (1,2]$. Compared to works assuming noise with bounded $p$-th moment, our results: 1) are based on a novel symmetrization approach; 2) provide a unified framework and relaxed moment conditions; 3) imply an oracle complexity of \nsgd and \nsge that is strictly better than existing works when $p < 2$, while the complexity of \nmsge is close to existing works. Compared to works assuming symmetric noise with unbounded moments, we: 1) provide a sharper analysis and improved rates; 2) facilitate state-dependent symmetric noise; 3) extend the strong guarantees to non-symmetric noise.
\end{abstract}

\section{Introduction}

In this work we are interested in the problem of non-convex stochastic optimization under the presence of heavy-tailed noise. In particular, we consider problems of the form
\begin{equation}\label{eq:problem}
    \argmin_{\bx \in \R^d} F(\bx) = \mbe_{\xi } [f(\bx; \xi)],
\end{equation} where $\bx \in \R^d$ represent the model parameters we are trying to learn, $f: \R^d \times \Xi \mapsto \R$ is a loss function, $\xi$ is a random variable residing in some space $\Xi$, while $F: \R^d \mapsto \mbb R$ is the population cost. Formulation \eqref{eq:problem} is widely encountered in many machine learning applications, subsuming problems such as online learning and empirical risk minimization. While many methods for solving \eqref{eq:problem} have been proposed, arguably the most widely-used in modern applications is the celebrated stochastic gradient descent (\sgd). First introduced in the seminal work of Robbins and Monro \cite{robbins1951stochastic}, \sgd and its variants have become the go-to methods for training deep learning models, such as transformers and large language models, e.g., \cite{pmlr-v28-sutskever13,yu2024distributed}. Among them, \emph{nonlinear} \sgd (\nsgd) methods, such as sign, clipped and normalized \sgd, have been attracting increased attention recently, due to their ability to stabilize and speed up training in challenging settings, such as the lack of classical smoothness, e.g., \cite{zhang2019gradient,cutkosky20normalized_SGD,zhang2020improved,sign-momentum}, as well as the privacy and security benefits they bring, e.g., \cite{chen2020understanding,zhang2022clip_FL_icml,shuhua-clipping}. The use of nonlinear methods has a long history dating back to the 70's and 80's and the works of Nevel'son and Has'minski{\u\i}, as well as Polyak and Tsypkin, e.g., \cite{nevel1976stochastic,polyak-adaptive-estimation,polyak1984criterial}, with applications in areas such as estimation and adaptive filtering, e.g., \cite{al2001adaptive,transient-nonlin-filters,kar_estimaton}. Another motivation for using nonlinear methods recently stems from the emergence of heavy-tailed noise in modern learning applications, e.g., \cite{csimcsekli2019heavy,heavy-tail-book}. Starting with the work \cite{zhang2020adaptive}, which noted that the distribution of gradient noise during training of BERT resembles a Levy $\alpha$-stable distribution, a number of works have studied convergence of \nsgd methods in the presence of heavy-tailed noise with \emph{bounded moments of order $p \in (1,2]$}, focusing on clipping and normalization, e.g., \cite{cutkosky2021high,sadiev2023highprobability,nguyen2023improved,hubler2024normalization,liu2025nonconvex} and recently sign \cite{kornilov2025sign}. Bounded $p$-th moment generalizes the bounded variance condition, to include a broad class of heavy-tailed noises, however, convergence rates established under this condition exhibit exponents depending on noise moments and vanishing as $p \rightarrow 1$, failing to explain the strong performance of \nsgd methods observed even in the presence of noise with unbounded first moments, such as the Cauchy noise, see, e.g., \cite{gorbunov2023breaking}. A related line of work shows both empirical and theoretical evidence of \emph{symmetric} noise arising in many modern learning applications, such as during training of deep learning models, e.g., \cite{bernstein2018signsgd,barsbey-heavy_tails_and_compressibility,heavy-tail-phenomena,pmlr-v238-battash24a}. Utilizing symmetry, works \cite{polyak-adaptive-estimation,chen2020understanding,jakovetic2023nonlinear,armacki2024_ldp+mse,armacki2023high}, are able to show strong guarantees for a broad range of popular nonlinearities, including sign, normalization, quantization and clipping. However, such assumption rules out many non-symmetric noises following naturally occurring distributions, like exponential, Pareto, Weibull and Gumbel, to name a few. 

\begin{table*}[!htp]
\caption{Oracle complexity for reaching an $\epsilon$-stationary point in non-convex optimization, under heavy-tailed noise and in high-probability sense, i.e., a point $\bx \in \R^d$, such that $\|\nabla F(\bx)\| \leq \epsilon$, with probability at least $1 - \delta$, for any $\delta \in (0,1)$ and $\epsilon > 0$. Columns ``Noise'' and ``Nonlinearity'' outline the noise assumptions and the nonlinearities facilitated in the work, with ``IID'' standing for \emph{independent, identically distributed}. Columns ``Batch size'' and ``Complexity'' refer to the batch size and overall oracle complexity for reaching an $\epsilon$-stationary point, with $\mathcal{O}(1)$ implying a constant batch is used. All works achieve a logarithmic dependence on the confidence parameter $\delta$, implying an exponentially decaying tail bound. We can see that both \nsgd and \nsge match the {(optimal) complexity} of linear \sgd in the presence of light-tailed noise, strictly better than \cite{nguyen2023improved,hubler2024normalization,kornilov2025sign} whenever $p < 2$, i.e., \emph{any heavy-tailed} noise and the same as majority vote sign \sgd from \cite{kornilov2025sign}, without requiring a mini-batch and under a much broader nonlinear framework. While \nmsge achieves a slightly worse sample complexity compared to \cite{nguyen2023improved,hubler2024normalization,kornilov2025sign}, as we discuss in Section \ref{sec:main} ahead, this is a direct consequence of our black-box treatment of the nonlinearity and the resulting metric. Finally, we can see that our analysis for \nsgd results in an improved complexity compared to \cite{armacki2023high}, while relaxing the IID condition.}
\label{tab:comp}
\begin{adjustwidth}{-1in}{-1in} 
\begin{center}
\begin{threeparttable}
\begin{small}
\begin{sc}
\begin{tabular}{ccccc}
\toprule
\multicolumn{1}{c}{\scriptsize Work} & \multicolumn{1}{c}{\scriptsize Noise} & \multicolumn{1}{c}{\scriptsize Nonlinearity} & \multicolumn{1}{c}{\scriptsize Batch size} & \multicolumn{1}{c}{\scriptsize Complexity}\\
\midrule
\scriptsize\cite{nguyen2023improved} & $\substack{\text{\scriptsize{unbiased, bounded}} \\ \text{\scriptsize{$p$-th moment}}}$ & \scriptsize{clipping} & \scriptsize $\mathcal{O}(1)$ & \scriptsize ${\mathcal{O}}\left(\epsilon^{-\frac{3p - 2}{p - 1}} \right)$ \\
\midrule
\scriptsize \cite{hubler2024normalization} & $\substack{\text{\scriptsize{unbiased, bounded}} \\ \text{\scriptsize{$p$-th moment}}}$ & \scriptsize{normalization} & \scriptsize $\mathcal{O}\left(\epsilon^{-\frac{p}{p-1}}\right)$ & \scriptsize ${\mathcal{O}}\left(\epsilon^{-\frac{3p - 2}{p - 1}} \right)$ \\
\midrule
\scriptsize \cite{kornilov2025sign} & $\substack{\text{\scriptsize{unbiased, component-wise}} \\ \text{\scriptsize{bounded $p$-th moment}}}$ & \scriptsize{sign} & \scriptsize $\mathcal{O}\left(\epsilon^{-\frac{p}{p-1}}\right)$ & \scriptsize ${\mathcal{O}}\left(\epsilon^{-\frac{3p - 2}{p - 1}} \right)$ \\
\midrule
\scriptsize \cite{kornilov2025sign} & \scriptsize unimodal, symmetric & \scriptsize{majority vote sign} & \scriptsize $\mathcal{O}\left(\epsilon^{-2}\right)$ & \scriptsize ${\mathcal{O}}\left(\epsilon^{-4} \right)$ \\
\midrule
\scriptsize \nsgd (\cite{armacki2023high}) & $\substack{\text{\scriptsize{iid and symmetric pdf,}} \\ \text{\scriptsize{positive around origin}}}$ & \scriptsize{general framework} & \scriptsize $\mathcal{O}(1)$ & \scriptsize $\mathcal{O}\left(\epsilon^{-8}\right)$ \\
\midrule
\scriptsize \nsgd (Theorem \ref{thm:n-sgd}) & $\substack{\text{\scriptsize{symmetric pdf,}} \\ \text{\scriptsize{positive around origin}}}$ & \scriptsize{general framework}$^\dagger$ & \scriptsize $\mathcal{O}(1)$ & \scriptsize $\mathcal{O}\left(\epsilon^{-4}\right)$ \\
\midrule
\scriptsize \nsge (Theorem \ref{thm:n-sge}) & $\substack{\text{\scriptsize iid and pdf positive on set} \\ \text{\scriptsize of positive leb. measure}^\ddagger}$ & \scriptsize{general framework} & \scriptsize $\mathcal{O}(1)$ & \scriptsize $\mathcal{O}\left(\epsilon^{-4}\right)$ \\
\midrule
\scriptsize \nmsge (Theorem \ref{thm:n-msge}) & $\substack{\text{\scriptsize unbiased, iid, bounded $p$-th} \\ \text{\scriptsize moment, with pdf positive on} \\ \text{\scriptsize set of positive leb. measure} \\}$ & $\substack{\text{\scriptsize smooth general} \\ \text{\scriptsize framework}^\S}$  & \scriptsize $\mathcal{O}\left(\epsilon^{-\frac{2p}{p-1}} \right)$ & \scriptsize $\mathcal{O}\left(\epsilon^{-\frac{6p-4}{p-1}}\right)$
\\
\bottomrule
\end{tabular}
\end{sc}
\end{small}
\begin{tablenotes}\scriptsize
    \item[$\dagger$] For state-dependent noise the general framework subsumes joint nonlinearities like clipping and normalization and smooth component-wise nonlinearities like smooth versions of sign and component-wise clipping. For IID noise, the framework additionally includes non-smooth component-wise nonlinearities, like standard sign and component-wise clipping, see Section \ref{sec:main} ahead. 
    
    \item[$\ddagger$] Formally, the condition requires the PDF $P: \R^d \mapsto \R_+$ to be such that, for some $E_0 > 0$ and each $\bx \in \R^d$ such that $\|\bx\|\leq E_0$, the set $S_{\bx} = \{\by \in \R^d: P(\by) > 0 \text{ and } P(\by - \bx) > 0 \}$ is of positive Lebesgue measure, see Section \ref{sec:main} for further details.

    \item[$\S$] Including nonlinearities like smooth versions of sign, normalization, component-wise and joint clipping, see Section \ref{sec:main} for details.

\end{tablenotes}
\end{threeparttable}
\end{center}
\vskip -0.1in
\end{adjustwidth}
\end{table*}

\paragraph{Literature review} We next review the related literature on learning in the presence of heavy-tailed noise. In particular, we focus on the results assuming bounded moment of order $p \in (1,2]$, as well as the ones assuming symmetric noise. {The use of the bounded moment of order $p > 1$ condition in the context of (nonlinear) \sgd methods can be traced to the 1980's and the work \cite{nemirovski1983problem}, where the authors study the complexity of a class of stochastic convex problems}. More recently, the work \cite{zhang2020adaptive} first noticed that the distribution of gradient noise during training of BERT resembles a $\alpha$-stable Levy distribution and proposed the use of bounded $p$-th moment assumption. Under this assumption, they establish a mean-squared error (MSE) convergence rate lower bound $\Omega(t^{2(1-p)/(3p-2)})$ for non-convex costs and show the lower bound is achieved by clipped \sgd, with vanilla \sgd failing to converge for any fixed step-size. This result was complemented in the high-probability sense by \cite{sadiev2023highprobability}, who, among other things, showed that vanilla \sgd can not achieve a logarithmic dependence on the confidence parameter,\footnote{We refer here to the parameter $\delta \in (0,1)$, in the sense of the statement ``with probability at least $1 - \delta$''.} even when $p = 2$ (i.e., bounded variance). The authors in \cite{cutkosky2021high} prove that a version of clipped and normalized momentum \sgd matches the said lower bound in the high-probability sense, however, they {require bounded $p$-th moment of both stochastic gradients and noise (implying uniformly bounded gradients of the cost)}, a fixed step-size and a preset time horizon. The bounded stochastic gradient condition was later relaxed by \cite{liu2023breaking}, who additionally showed an accelerated rate, achieved by a version of STORM \cite{cutkosky2019momentum}, combining normalization and gradient clipping. The work \cite{nguyen2023improved} shows that the lower bound (up to a logarithmic factor) is achieved in the high-probability sense for clipped \sgd, using a time-varying step-size without a preset time horizon. Recent works \cite{hubler2024normalization,kornilov2025sign} show that normalized and sign \sgd achieve the optimal oracle complexity for reaching an $\epsilon$-stationary point under the bounded $p$-th moment condition in the high-probability sense.\footnote{Both normalization and sign require an increasing mini-batch, therefore the guarantees are naturally expressed in terms of the overall oracle complexity.} On the other hand, the use of symmetric noise when analyzing nonlinear methods can be traced back to the work of Polyak and Tsypkin \cite{polyak-adaptive-estimation}, who analyzed asymptotic guarnatees of estimation with nonlinear measurements. The authors in \cite{bernstein2018signsgd_iclr} and \cite{chen2020understanding} analyze MSE guarantees of sign and clipped \sgd for non-convex costs utilizing noise symmetry and show the optimal convergence rate $\mathcal{O}(t^{-1/2})$, with \cite{bernstein2018signsgd_iclr} requiring bounded variance, while \cite{chen2020understanding} make no moment requirements. The work \cite{jakovetic2023nonlinear} provides unified guarantees for a broad framework of \nsgd methods under symmetric noise with bounded first moments and strongly convex costs. This work is complemented by \cite{armacki2024_ldp+mse,armacki2023high}, which note that the moment requirement can be removed and study guarantees of the same nonlinear framework for non-convex costs. In particular, the authors in \cite{armacki2024_ldp+mse} establish a large deviation upper bound, implying an asymptotic exponential tail decay, with decay rate $\sqrt{t}/\log(t)$, as well as finite-time MSE convergence, with rate $\widetilde{\mathcal{O}}(t^{-1/2})$, matching the MSE lower-bound for first-order stochastic methods established in \cite{Arjevani2023}. On the other hand, the work \cite{armacki2023high} provides finite-time high-probability guarantees, with rate $\widetilde{\mathcal{O}}(t^{-1/4})$. Another important work is \cite{gorbunov2023breaking}, where the authors show that clipped \sgd using mini-batch and a median-of-means estimator achieves optimal high-probability rates for convex and strongly convex costs in the presence of heavy-tailed noise with potentially unbounded moments. Our work differs in that we consider non-convex costs and a general nonlinear framework. {Finally, it is worth mentioning that the idea of symmetrization used in our work (see the contributions and main results ahead) has a long history in probability theory, where it is often used to establish some desirable properties, such as concentration around the median, etc. For a more detailed treatment, the reader is referred to \cite{feller1971introduction,gut-probability}.}    

\paragraph{Contributions} In this work, we provide tight finite-time high-probability convergence rates of a general \nsgd-based framework, which extends beyond symmetric noise. To do so, we first revisit the high-probability guarantees of \nsgd in the presence of noise with a PDF symmetric and positive around the origin, originally considered in \cite{armacki2023high}, relaxing the noise assumptions, by facilitating \emph{state-dependent} noise and simultaneously providing \emph{sharper analysis} and \emph{improved rates}. {We compare the guarantees in Table \ref{tab:comp}, showing that \nsgd achieves the oracle complexity $\mathcal{O}(\epsilon^{-4})$, matching the \emph{optimal oracle complexity} of linear \sgd in the presence of light-tailed noise}, strictly better than the $\mathcal{O}(\epsilon^{-8})$ oracle complexity established in \cite{armacki2023high}. Next, to guarantee convergence for noises beyond symmetric ones, we propose two novel stochastic gradient estimators, based on the idea of \emph{noise symmetrization}. The first estimator, dubbed Symmetrized Gradient Estimator (\sge), requires access to a single noiseless gradient at an arbitrary reference point at the start of training and an additional two stochastic gradients in each iteration. {Together with the nonlinear framework, we get the novel \nsge method, which achieves the same oracle complexity $\mathcal{O}(\epsilon^{-4})$, for general noise satisfying a mild technical condition.} Finally, if access to a noiseless gradient is not available, we propose a Mini-batch Symmetrized Gradient Estimator (\msge), which, combined with the nonlinear framework, leads to the novel \nmsge method and overall complexity $\mathcal{O}\big(\epsilon^{-\frac{6p-4}{p-1}}\big)$, in the presence of unbiased noise with bounded $p$-th moment. We can again see in Table \ref{tab:comp} that both \nsgd and \nsge achieve strictly better complexity than \cite{nguyen2023improved,hubler2024normalization,kornilov2025sign}, whenever $p < 2$, i.e., for \emph{any heavy-tailed noise}, while \nmsge achieves an oracle complexity comparable to the said methods. Our results are based on a black-box treatment of the nonlinearity, allowing us to establish unified guarantees for a broad range of popular nonlinearities, such as sign, normalization, clipping and their smooth counterparts.

\paragraph{Technical challenges and novelty} In order to facilitate our results, several challenges needed to be resolved. First, in order to provide improved high-probability convergence rate of \nsgd under symmetric noise, we establish tighter control of the moment-generating function (MGF), by introducing an offset term to cancel out the effect of the noise, similar to the idea used in \cite{liu2023high}. However, due to our black-box treatment of the nonlinearity, this leads to an additional challenge of lower-bounding the difference of two related quantities of different orders, which we achieve via a careful case-based analysis.\footnote{In particular, we lower-bound $\min\{\|\nabla F(\bxk)\|,\|\nabla F(\bxk)\|^2\} - \alpha_k\|\nabla F(\bxk)\|^2$, where $\alpha_k > 0$ is the step-size in iteration $k$, see Section \ref{sec:insight} for details.} Next, to facilitate state-dependent symmetric noise, we establish novel results on the properties of the ``denoised'' nonlinearity (see Subsection \ref{subsec:setup} ahead for the definition) for smooth component-wise nonlinearities in Lemmas \ref{lm:key-unified-state} and \ref{lm:polyak-tsypkin}.\footnote{The ``denoised'' nonlinearity is defined as $\bPhi(\bx) \triangleq \E_{\bz}[\bPsi(\bx + \bz)]$, where $\bPsi: \R^d \mapsto \R^d$ is the original nonlinear mapping, while $\bz \in \R^d$ is the stochastic noise vector, see Section \ref{sec:main} for details.} Finally, inspired by the strong performance of \nsgd in the presence of symmetric noise, we propose two novel stochastic gradient estimators to handle non-symmetric noise, based on the idea of noise symmetrization. To that end, we formally show that the \sge fully symmetrizes the noise and develop a novel analysis to handle the effect of the non-symmetric noise component in the \msge estimator, see Section \ref{sec:main} and Appendix for details. 

\paragraph{Paper organization} The rest of the paper is organized as follows. Section \ref{sec:methods} outlines the proposed methods, Section \ref{sec:main} presents the main results, Section \ref{sec:insight} provides some insights into our improved results and detailed comparison with existing works, with Section \ref{sec:conclusion} concluding the paper. Appendix contains proofs and results omitted from the main body. The remainder of this section introduces the notation used throughout the paper. 

\paragraph{Notation} Positive integers, real numbers and $d$-dimensional vectors are denoted by $\N$, $\R$ and $\R^d$, respectively. We use $\R_+$ to denote non-negative real numbers, i.e., $\R_+ = [0,\infty)$. For $a \in \N$, the set of integers up to and including $a$ is denoted by $[a] = \{1,\ldots,a \}$. Regular and bold symbols denote scalars and vectors, i.e., $x \in \R$ and $\bx \in \R^d$, with $I \in \R^{d \times d}$ denoting the identity matrix. The standard Euclidean inner product, induced vector and matrix norms are denoted by  $\langle \cdot,\cdot\rangle$,  $\|\cdot\|$ and $\tnorm{\cdot}$, respectively, while $\cdot^\top$ denotes the transposition operator. Notation $\mathcal{O}(\cdot)$ stands for the standard ``big O'', i.e., for two non-negative sequences $\{a_t\}_{t\in\N}$, $\{b_t\}_{t\in\N}$, we have $a_t = \mathcal{O}(b_t)$ if there exist $C > 0$ and $t_0 \in \N$, such that $a_t \leq Cb_t$, for all $t \geq t_0$, while $\widetilde{\mathcal{O}}(\cdot)$ indicates existence of factors poly-logarithmic in $t$ or confidence parameter $\delta$.

\section{Proposed Methods}\label{sec:methods}

In this section we outline the methods considered in the paper and provides some intuition behind them. Subsection \ref{subsec:nsgd} outlines the \nsgd family of methods, used when the noise has a symmetric PDF. Subsection \ref{subsec:nsge} introduces the novel estimator \sge, which is used to deal with noise beyond symmetric ones and leads to the \nsge family of methods. Finally, Subsection \ref{subsec:nmsge} introduces a novel mini-batch estimator, dubbed \msge, and the resulting \nmsge family of methods. The remainder of this section outlines a framework for learning with nonlinear methods using a generic (stochastic) gradient estimator, which is then specialized in the following subsections.

To solve the problem \eqref{eq:problem}, we assume access to a Stochastic First-order Oracle (\sfo) model, e.g., \cite{nemirovski1983problem,Arjevani2023}, which, when queried with input $\bx \in \R^d$, returns the gradient of the loss $f$ evaluated at $\bx$ and a random sample $\xi \in \Xi$, i.e., the \sfo outputs $\nabla f(\bx;\xi)$. Given any deterministic initialization $\bx^1 \in \R^d$, let $\bxt \in \R^d$ be our model in iteration $t \in \N$, which, along with potentially other points of interest, we use to query the \sfo. Let $\bgt \in \R^d$ be a generic stochastic gradient estimator constructed using the output of the oracle. We then have the following update rule to produce the next model
\begin{equation}\label{eq:nonlin-generic}
    \bxtp = \bxt - \alpha_t\bPsi(\bgt),
\end{equation} where $\alpha_t > 0$ is the step-size in iteration $t$, while $\bPsi: \R^d \mapsto \R^d$ is a general nonlinear map. The procedure is summarized in Algorithm \ref{alg:nonlin-online}. We next specialize the generic estimator $\bgt$, to get our \nsgd, \nsge and \nmsge methods.

\begin{algorithm}[tb]
\caption{Learning with Nonlinear Methods}
\label{alg:nonlin-online}
\begin{algorithmic}[1]
   \REQUIRE{Choice of nonlinearity $\bPsi: \R^d \mapsto \R^d$, model initialization $\bx^{1} \in \R^{d}$, step-size schedule $\{\alpha_t\}_{t \in \N}$;}
   \FOR{t = 1,2,\ldots}
        \STATE Query the \sfo and construct a gradient estimator $\bgt$;  
        \STATE Update $\bxtp \leftarrow \bxt - \alpha_t\mathbf{\Psi}\left(\bgt\right)$;
    \ENDFOR
\end{algorithmic}
\end{algorithm}

\subsection{\nsgd}\label{subsec:nsgd}

If the noise has a symmetric PDF, we query the \sfo using $\bxt$ and chose $\bgt$ to be the standard \sgd estimator, i.e., $\bgt \triangleq \nabla f(\bxt;\xit)$. Using the nonlinear framework from \eqref{eq:nonlin-generic}, we then get \nsgd, given by
\begin{equation}\label{eq:nsgd}
	\bx^{t+1} = \bxt - \alpha_t\bPsi(\nabla f(\bxt;\xit)).
\end{equation} The \nsgd framework has been well-studied, both for individual nonlinearities, like clipping, sign and normalization, e.g., \cite{zhang2019gradient,bernstein2018signsgd,cutkosky20normalized_SGD}, and as a unified framework, e.g., \cite{jakovetic2023nonlinear,armacki2023high,armacki2024_ldp+mse}.  

\subsection{\nsge}\label{subsec:nsge}

If the noise is not necessarily symmetric and at the start of training we have access to a noiseless gradient of $F$ at any reference point $\by \in \R^d$, i.e., $\nabla F(\by)$, we construct our novel \emph{Symmetrized Gradient Estimator} (\sge) as follows. In each iteration $t \in \N$, we query the \sfo using $\bxt$ and $\by$, to get $\nabla f(\bxt;\xi^t_1)$ and $\nabla f(\by;\xi^t_2)$, where $\xi_1^t,\xi_2^t \in \Xi$ are two independent, identically distributed (IID) samples. The estimator \sge is then given by $\bgt \triangleq \nabla f(\bxt;\xi_1^t) - \nabla f(\by;\xit_2) + \nabla F(\by)$. Combined with the nonlinear framework from \eqref{eq:nonlin-generic}, we get the method \nsge, given by the update rule
\begin{equation}\label{eq:nsge}
    \bxtp = \bxt - \alpha_t\bPsi\big(\nabla f(\bxt;\xi_1^t) - \nabla f(\by;\xit_2) + \nabla F(\by)\big).
\end{equation} Compared to \sgd, which requires access to a single stochastic gradient per iteration, \sge requires access to two stochastic gradients in each iteration. We next provide some intuition behind \sge. Let $\bz^t_1 \triangleq \nabla f(\bxt;\xi^t_1) - \nabla F(\bxt)$ and $\bz^t_2 \triangleq \nabla f(\by;\xi^t_2) - \nabla F(\by)$ denote the stochastic gradient noise. We can then represent \sge as follows
\begin{equation*}
    \bgt =  \nabla F(\bxt) + \bz_1^t - (\nabla F(\by) + \bz_2^t) + \nabla F(\by) = \nabla F(\bxt) + \wbzt,
\end{equation*} where $\wbzt \triangleq \bz_1^t - \bz_2^t$ is the ``symmetrized'' noise. As we show in Section \ref{sec:main}, the resulting noise vectors $\{\wbzt\}_{t \in \N}$ have some desirable properties, under mild conditions on the original noise. 

\subsection{\nmsge}\label{subsec:nmsge}

In case a noiseless gradient is unavailable at the start of training, we can estimate it by using a mini-batch of size $B_t$ in each iteration, to get the \emph{Mini-batch Symmetrized Estimator} (\msge), given by $\bgt \triangleq \nabla f(\bxt;\xi_1^t) - \nabla f(\by;\xi_2^t) + \frac{1}{B_t}\sum_{j = 1}^{B_t}\nabla f(\by;\xi^t_{j+2})$. As such, \msge requires $B_t + 2$ calls to the \sfo per iteration, one using the current model $\bxt$ and $B_t + 1$ calls using an arbitrary point $\by$ as input. Using the nonlinear framework from \eqref{eq:nonlin-generic}, we get the method \nmsge, given by the update rule
\begin{equation}\label{eq:nmsge}
    {\bxtp = \bxt - \alpha_t\bPsi\Big(\nabla f(\bxt;\xi_1^t) - \nabla f(\by;\xi_2^t) + \frac{1}{B_t}\sum_{j = 1}^{B_t}\nabla f(\by;\xi^t_{j+2})\Big)}.
\end{equation} The idea behind \msge is similar to that of \sge, i.e., to ``symmetrize'' the resulting noise, however, \msge has an additional, potentially non-symmetric noise component, stemming from the use of the mini-batch. This can be seen by again defining the stochastic gradient noise as $\bz^t_1 \triangleq \nabla f(\bxt;\xi^t_1) - \nabla F(\bxt)$ and $\bztj \triangleq \nabla f(\by;\xi^t_j) - \nabla F(\by)$, for $j = 2,\ldots,B_t+2$. We can then rewrite the \msge update rule as follows
\begin{equation*}
    \bgt =  \nabla F(\bxt) + \bz_1^t - (\nabla F(\by) + \bz_2^t) + \frac{1}{B_t}\sum_{j = 3}^{B_t+2}(\nabla F(\by) + \bztj) = \nabla F(\bxt) + \wbzt + \obzt,
\end{equation*} where $\wbzt \triangleq \bz_1^t - \bz_2^t$ is the ``symmetrized'', while $\obzt \triangleq \frac{1}{B_t}\sum_{j = 3}^{B_t+2}\bztj$ is the potentially non-symmetric noise component.

\section{Main Results}\label{sec:main}

In this section we present our main results. Subsection \ref{subsec:prelim} provides preliminaries, Subsection \ref{subsec:setup} sets up the analysis, Subsection \ref{subsec:symmetric} presents results for symmetric, while Subsection \ref{subsec:non-symmetric} presents results for potentially non-symmetric noise.

\subsection{Preliminaries}\label{subsec:prelim}

In this section we outline the assumptions used in the paper. We start with the assumption on the population cost.

\begin{assumption}\label{asmpt:L-smooth}
    The population cost $F$ is bounded from below and has $L$-Lipschitz continuous gradients, i.e., $\inf_{\bx \in \R^d}F(\bx) > -\infty$ and $\|\nabla F(\bx) - \nabla F(\by) \| \leq L\|\bx - \by\|$, for all $\bx,\by \in \R^d$.
\end{assumption}

\begin{remark}
    Assumption \ref{asmpt:L-smooth} is standard in smooth non-convex optimization, e.g., \cite{bertsekas-gradient,ghadimi2013stochastic,cevher-almost_sure,nguyen2023improved,armacki2024_ldp+mse}. It can be shown that $L$-Lipschitz continuous gradients imply the $L$-smoothness inequality, namely that
    \begin{equation*}
        F(\by) \leq F(\bx) + \langle \nabla F(\bx), \by - \bx \rangle + \frac{L}{2}\|\bx - \by\|^2,
    \end{equation*} for any $\bx,\by \in \R^d$, which is an important inequality used in our analysis. For a proof of the implication, see, e.g., \cite{bertsekas2003convex,nesterov-lectures_on_cvxopt,lan2020first}.
\end{remark}

 Denote by $F^\star \triangleq \inf_{\bx \in \R^d}F(\bx)$. Recall that stochastic gradient noise in iteration $t$ is denoted by $\bzt = \nabla f(\bxt;\xit) - \nabla F(\bxt)$. Depending on the gradient estimator and choice of nonlinearity, we will make use of one of the following noise assumptions.

{
\begin{assumption}\label{asmpt:noise-state}
    For any $t \in \N$, the noise vector $\bz^t$ depends on the past only through the current state $\bxt$. Moreover, for any $t \in \N$, $\bzt$ depends on $\bxt$ via the PDF, i.e., the noise $\bzt$ has a PDF $P_{\bx}: \R^d \mapsto \R_+$, where $P_{\bx}(\cdot) = P(\cdot \: \vert \: \bxt = \bx)$ is conditioned on the realization $\bx$ of $\bxt$. For any $\bx \in \R^d$, $P_{\bx}$ is symmetric and uniformly positive around the origin, i.e., $P_{\bx}(-\bz) = P_{\bx}(\bz)$, for all $\bz \in \R^d$, and $\inf_{\bx \in \R^d}P_{\bx}(\bz) > 0$, for all $\|\bz\| \leq E_0$ and some $E_0 > 0$.
\end{assumption}
}

\begin{remark}
    Assumption \ref{asmpt:noise-state} allows for state-dependent symmetric noise. As discussed in the introduction, noise symmetry has been widely observed in scenarios such as training deep learning models using large batch sizes, e.g., \cite{bernstein2018signsgd,bernstein2018signsgd_iclr,chen2020understanding,barsbey-heavy_tails_and_compressibility,pmlr-v238-battash24a,armacki2023high}, while works such as \cite{simsekli2019tail,pmlr-v108-peluchetti20b,heavy-tail-phenomena,barsbey-heavy_tails_and_compressibility} theoretically demonstrate that symmetric heavy-tailed noise is an appropriate noise model in many practical settings.
\end{remark}

\begin{remark}
    Uniform positivity around the origin is a mild condition on the behaviour of the PDF, satisfied by many distributions, e.g., Gaussian, Cauchy and Student's t-distribution. In general, uniform positivity can be relaxed, by requiring the PDF to be uniformly positive on the set of every possible realization of the algorithm, i.e., $\inf_{\bx \in \mathcal{X}}P_{\bx}(\bz) > 0$, for all $\|\bz\| \leq E_0$, where $\mathcal{X} = \{\bx \in \R^d: \exists \: (t ,\omega) \in \N \times \Omega \text{ such that } \bxt(\omega) = \bx\}$.
\end{remark}

\begin{remark}
    As can be seen in the following subsections, the radius $E_0 > 0$ of positivity of the PDF does not directly impact the convergence bounds, but is rather a technical requirement to ensure the PDF is sufficiently well-behaved. As such, $E_0$ can be arbitrarily small, without affecting the convergence directly.
\end{remark}

To handle noise beyond symmetric ones, we use two assumption. The first one, stated next, is used when a noiseless gradient is available at the start of training and we can construct the estimator \sge.

\begin{assumption}\label{asmpt:non-sym}
    The noise vectors $\{\bz^t_1 \}_{t \in \N}$ and $\{\bz^t_2 \}_{t \in \N}$ are IID, with PDF $P: \R^d \mapsto \R_+$, such that, for some $E_0 > 0$ and all $\|\bx\| \leq E_0$, the set $S_{\bx} \triangleq \{\by \in \R^d: P(\by) > 0 \text{ and } P(\by - \bx) > 0\}$ is of positive Lebesgue measure.
\end{assumption}

\begin{remark}
    Assumption \ref{asmpt:non-sym} relaxes symmetry, at the cost of IID noise. The IID condition is often used when analyzing stochastic algorithms e.g., \cite{raginsky2017non,liu2020improved,armacki2024_ldp+mse,armacki2023high} and is required for our symmetrization-based analysis. The identically distributed requirement can be relaxed to allow the noise in iteration $t$ to have a PDF $P_t$, with each $\{P_t\}_{t\in\N}$ satisfying Assumption \ref{asmpt:non-sym}. 
\end{remark}
    
\begin{remark}
    Positivity with respect to the Lebesgue measure of $S_{\bx}$ is a mild condition and can be ensured, e.g., if the PDF $P$ is strictly positive in a slightly larger neighbourhood around the origin, see Appendix \ref{app:on_noise} for a formal result. In general, it is easy to see that this condition is satisfied by many noise distributions encountered in practice, such as exponential, gamma, Cauchy, Weibull, Pareto and Gumbel, as it does not require the PDF to be positive for all $\bx$ sufficiently close to the origin, but rather for the set $S_{\bx}$ to take up a non-zero Lebesgue measure. 
\end{remark}

Assumptions \ref{asmpt:noise-state} and \ref{asmpt:non-sym} make no moment requirements, allowing for noise with very heavy tails, such as Cauchy or Student's t-distribution. Finally, when a noiseless gradient is unavailable, we use the estimator \msge. Recalling the discussion in Section \ref{sec:methods}, a mini-batch of size $B_t + 2$ is required to construct \msge in iteration $t$. Moreover, in addition to a symmetrized noise component, \msge has a non-symmetric noise component. To deal with these obstacles, we impose the following assumption.

\begin{assumption}\label{asmpt:non-sym-p-moment}
    In each iteration $t \in \N$, the noise vectors $\{\bztj \}_{j = 1}^{B_t+2}$ are IID, with PDF $P: \R^d \mapsto \R$, such that, for some $E_0 > 0$ and all $\|\bx\| \leq E_0$, the set $S_{\bx} = \{\by \in \R^d: P(\by) > 0 \text{ and } P(\by - \bx) > 0\}$ is of positive Lebesgue measure. Moreover, the noise is unbiased, with bounded moment of order $p \in (1,2]$, i.e., $\E[\bztj] = \mathbf{0}$ and $\E[\|\bztj\|^p] \leq \sigma^p$, for some $\sigma > 0$.
\end{assumption}

\begin{remark}
    In addition to conditions from Assumption \ref{asmpt:non-sym}, Assumption \ref{asmpt:non-sym-p-moment} requires the noise to have bounded $p$-th moment, for some $p \in (1,2]$. Bounded $p$-th moment is a widely used condition when analyzing heavy-tailed noise, e.g., \cite{sadiev2023highprobability,nguyen2023improved,hubler2024normalization,kornilov2025sign}. 
\end{remark}

{
\begin{remark}
    We note that all our noise assumptions require the existence of a PDF, ruling out discrete noise distributions and scenarios like finite-sum costs where the noise stems from choosing a sub-sample of data points. However, we consider a general setting of stochastic costs (including finite-sum structure as a special case) and a \sfo which returns noisy versions of the true gradient, facilitating general, continuous noises like Gaussian or Cauchy noise.  
\end{remark}
}

Note that Assumptions \ref{asmpt:non-sym} and \ref{asmpt:non-sym-p-moment} do not rule out symmetric noise and as such, both \nsge and \nmsge can be used to handle non-symmetric, as well as symmetric noise. To handle the heavy-tailed noise while facilitating a unified black-box analysis, we consider a general nonlinear framework and make use of one of the following assumptions, for different noise regimes and estimators. The first assumption, stated next, was previously used in \cite{jakovetic2023nonlinear,armacki2024_ldp+mse,armacki2023high}.

\begin{assumption}\label{asmpt:nonlin}
The nonlinear map $\bPsi: \mbb R^d \mapsto \mbb R^d$ is either component-wise, i.e., of the form $\bPsi(\bx) = \lbr \psi(x_1), \dots, \psi(x_d) \rbr^\top$, or joint, i.e., of the form $\bPsi(\bx) = \bx\varphi(\|\bx\|)$, where the mappings $\psi,\:\varphi: \R \mapsto \R$ satisfy
\begin{enumerate}
    \item $\psi,\varphi$ are continuous, except for at most finitely many points, with $\psi$ piece-wise differentiable and $|\psi^\prime(x)| \leq c_1$, for some $c_1 > 0$, while $a \mapsto a\varphi(a)$ is non-decreasing on $a \in (0,\infty)$.
    \item $\psi$ is monotonically non-decreasing and odd, while $\varphi$ is non-increasing on $(0,\infty)$.
    \item $\psi$ is either discontinuous at zero, or strictly increasing on $(-c_2,c_2)$, for some $c_2 > 0$, with $\varphi(a) > 0$, for any $a > 0$.
    \item $\psi$ and $\bx\varphi(\|\bx\|)$ are uniformly bounded, i.e., for some $C_1,\: C_2 > 0$, and all $x \in \R$, $\bx \in \R^d$, we have $|\psi(x)| \leq C_1$ and $\|\bx\varphi(\|\bx\|)\|\leq C_2$.
\end{enumerate}
\end{assumption}

Assumption \ref{asmpt:nonlin} is very general, subsuming many popular nonlinearities, like sign, normalization, component-wise and joint clipping, see Appendix \ref{app:on_nonlin} for a formal result. The next assumption imposes further structure on component-wise nonlinearities.

\begin{assumption}\label{asmpt:nonlin-state}
    If $\bPsi: \mbb R^d \mapsto \mbb R^d$ is joint, i.e., of the form $\bPsi(\bx) = \bx\varphi(\|\bx\|)$, then the conditions of Assumption \ref{asmpt:nonlin} hold. If $\bPsi$ is component-wise, i.e., of the form $\bPsi(\bx) = \lbr \psi(x_1), \dots, \psi(x_d) \rbr^\top$, then in addition to conditions from Assumption \ref{asmpt:nonlin}, $\psi$ is twice continuously differentiable, with uniformly bounded first and second derivatives, i.e., $|\psi^\prime(x)|\leq K_1$ and $|\psi^{\prime\prime}(x)|\leq K_2$, for some $K_1,K_2 > 0$.
\end{assumption}

Assumption \ref{asmpt:nonlin-state} requires component-wise nonlinearities to be smooth and is used to facilitate state-dependent noise. In addition to joint nonlinearities like normalization and joint clipping, Assumption \ref{asmpt:nonlin-state} is satisfied by component-wise nonlinearities like smooth sign and smooth component-wise clipping, see Appendix \ref{app:on_nonlin} for a formal result. Finally, to handle the non-symmetric noise component arising in the estimator \msge, we use the following assumption.   

\begin{assumption}\label{asmpt:nonlin-nmsge}
    The nonlinear map $\bPsi: \mbb R^d \mapsto \mbb R^d$ is either component-wise, i.e., of the form $\bPsi(\bx) = \lbr \psi(x_1), \dots, \psi(x_d) \rbr^\top$, or joint, i.e., of the form $\bPsi(\bx) = \bx\varphi(\|\bx\|)$, where the mappings $\psi,\:\varphi: \R \mapsto \R$ satisfy
    \begin{enumerate}
        \item $\psi, \:\varphi$ are respectively twice and once continuously differentiable and $a \mapsto a\varphi(a)$ is non-decreasing on $(0,\infty)$.
        \item $\psi$ is monotonically non-decreasing and odd, while $\varphi$ is non-increasing and strictly positive on $(0,\infty)$.
        \item $\psi$ and $\bx\varphi(\|\bx\|)$ are uniformly bounded, with uniformly bounded first derivatives and $\psi$ also has a uniformly bounded second derivative, i.e., for some $C_i, K_j > 0$, $i \in [2]$, $j \in [3]$ and all $x \in \R$, $\bx \in \R^d$, we have $|\psi(x)| \leq C_1$, $|\psi^\prime(x)| \leq K_1$, $|\psi^{\prime\prime}(x)| \leq K_2$, $\|\bx\varphi(\|\bx\|)\|\leq C_2$ and $\tnorm{\varphi^\prime(\|\bx\|)\frac{\bx\bx^\top}{\|\bx\|} + \varphi(\|\bx\|)I}\leq K_3$.
    \end{enumerate}
\end{assumption}

Assumption \ref{asmpt:nonlin-nmsge} imposes smoothness on both component-wise and joint nonlinearities and is satisfied by smooth sign, smooth normalization, as well as smooth component-wise and joint clipping, see Appendix \ref{app:on_nonlin} for a formal result. Note that Assumption \ref{asmpt:nonlin} is the most general one, as it subsumes Assumption \ref{asmpt:nonlin-state}, which itself subsumes Assumption \ref{asmpt:nonlin-nmsge}. In spite of this, Assumption \ref{asmpt:nonlin-nmsge} is satisfied by a wide range of nonlinearities, as essentially any bounded nonlinearity can be approximated by a smooth counterpart. In Figure \ref{fig:smoothed-nonlin} we provide visualization of some popular nonlinearities satisfying Assumption \ref{asmpt:nonlin} and their smooth counterparts satisfying Assumption \ref{asmpt:nonlin-nmsge}. In particular, we visualize the following nonlinearities
\begin{enumerate}
    \item \emph{Sign}, {e.g., \cite{bernstein2018signsgd}}: $\psi(x) = \text{sign}(x)$,
    \item \emph{Component-wise clipping}, {e.g., \cite{zhang2020adaptive}}: $\psi(x) = \begin{cases}
        M\text{sign}(x), & |x| > M \\
        x, & |x| \leq M
    \end{cases}$, with $M = 3.5$,
    \item \emph{Normalization}, {e.g., \cite{cutkosky20normalized_SGD}}: $\bPsi(\bx) = \begin{cases}
        \frac{\bx}{\|\bx\|}, & \bx \neq \mathbf{0} \\
        \mathbf{0}, & \bx = \mathbf{0}
    \end{cases}$,
\end{enumerate} and their following smooth counterparts
\begin{enumerate}
    \item \emph{Smooth sign}: $\psi(x) = \tanh(x/k)$, with $k = 0.1$,\footnote{While we are not aware of $\tanh$ being used in the context of nonlinear \sgd methods, it is a widely used activation function in the context of neural networks, e.g., \cite{bishop2006}.}
    \item \emph{Smooth component-wise clipping}: $\psi(x) = \begin{cases}
        M\text{sign}(x), & |x| > M \\
        \frac{5}{8}\left(3x - \frac{2x^3}{M^2} + \frac{3x^5}{5M^4}\right), & |x| \leq M
    \end{cases}$, with $M = 3.5$,
    \item \emph{Smooth normalization}, {e.g., \cite{yu2023smoothed}}: $\bPsi(\bx) = \frac{\bx}{\sqrt{\|\bx\|^2 + \epsilon}}$, with $\epsilon = 0.1$.
\end{enumerate} We can see in Figure \ref{fig:smoothed-nonlin} that the smooth nonlinearities are good approximations of their non-smooth counterparts, preserving crucial properties, like oddity and boundedness, while ensuring that the resulting mapping is smooth. 

\begin{figure*}[!ht]
\centering
\begin{tabular}{lll}
\includegraphics[width=0.4\linewidth]{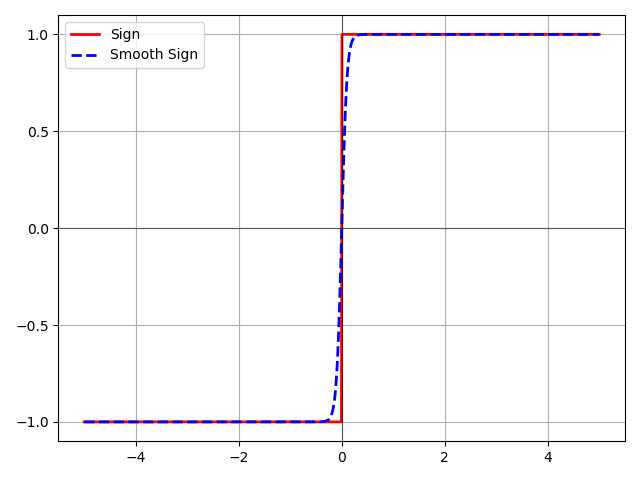}
&
\includegraphics[width=0.4\linewidth]{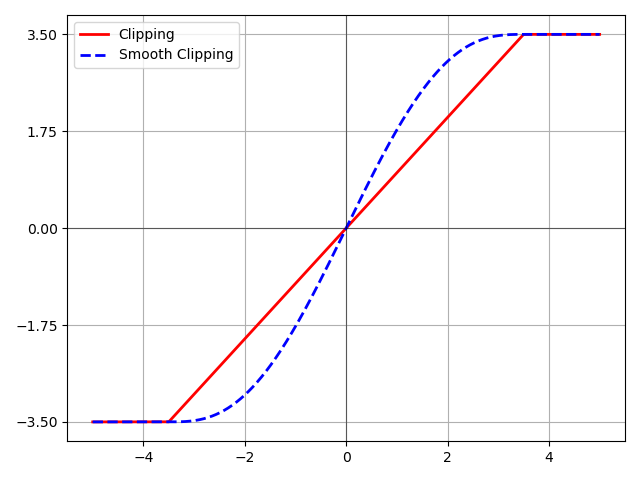}
\\
\includegraphics[width=0.4\linewidth]{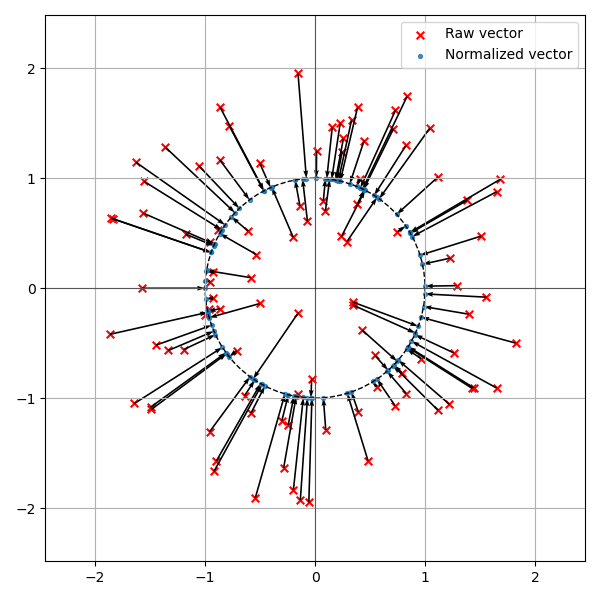}
&
\includegraphics[width=0.4\linewidth]{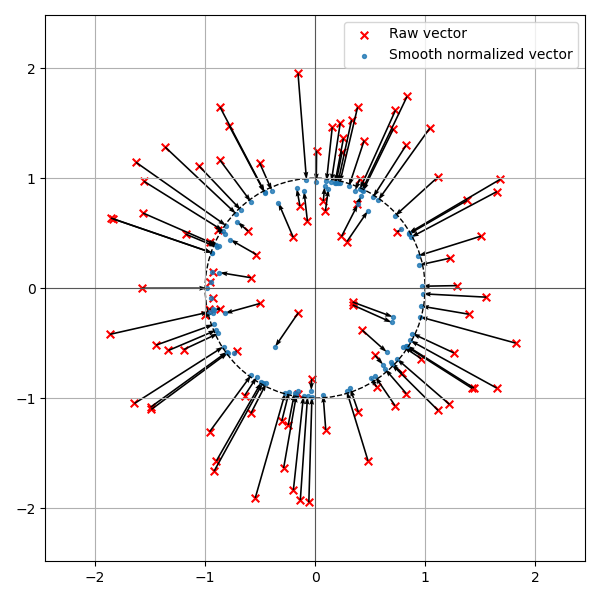}
\end{tabular}
\caption{Non-smooth component-wise nonlinearities and their smoothed counterparts. Top row: sign, clipping and their smooth counterparts. Bottom row: vectors sampled from a ball of radius 2 in $\R^2$ and their normalized and smooth normalized versions.}
\label{fig:smoothed-nonlin}
\end{figure*}

\subsection{Analysis Setup}\label{subsec:setup}

Recall that $F^\star = \inf_{\bx \in \R^d}F(\bx) > -\infty$ and {$\bx^1 \in \R^d$ is a deterministically selected model initialization}, and denote by $\Delta \triangleq F(\bx^1) - F^\star$ and $ \widetilde{\nabla} \triangleq \|\nabla F(\bx^1)\|$ the optimality and stationarity gap of the initial model, respectively. Next, note that Assumptions \ref{asmpt:nonlin}-\ref{asmpt:nonlin-nmsge} all imply a uniform bound on the nonlinearity and let $C > 0$ denote the bound, i.e., $\|\bPsi(\bx)\| \leq C$, for all $\bx \in \R^d$. It can be readily seen that $C = C_1\sqrt{d}$ for component-wise and $C = C_2$ for joint nonlinearities. Similarly, we will use $K > 0$ to denote the uniform bound on the first derivative of the nonlinearity when Assumption \ref{asmpt:nonlin-nmsge} is in place, with $K = K_1$ for component-wise and $K = K_3$ for joint nonlinearities.

In order to facilitate the analysis, we define an important concept, referred to as the ``denoised'' nonlinearity, i.e., the mapping $\bPhi: \R^d \mapsto \R^d$, given by $\bPhi(\by) \triangleq \E_{\bz}[\bPsi(\by + \bz)]$, where $\bz \in \R^d$ is the stochastic noise. Consider again the update rule \eqref{eq:nonlin-generic}, using a generic estimator $\bgt$ and define the stochastic noise as $\bzt \triangleq \bgt - \nabla F(\bxt)$. We can then rewrite the update \eqref{eq:nonlin-generic} as
\begin{equation*}
    \bxtp = \bxt - \alpha_t\bPsi(\bgt) = \bxt - \alpha_t\bPhi(\nabla F(\bxt)) - \alpha_t\bet,
\end{equation*} where $\bet \triangleq \bPsi(\bgt) - \bPhi(\nabla F(\bxt))$ is the ``effective'' noise. By the definition of $\bPhi$ and the fact that the mapping $\bPsi$ is uniformly bounded, we can expect that the effective noise $\bet$ is well-behaved (to be specified precisely in Appendix \ref{app:proofs}), regardless of the original noise. The work \cite{armacki2023high} shows that, for the \nsgd method (i.e., using $\bgt = \nabla f(\bxt;\xi^t)$), if the noise is IID with PDF symmetric and positive around the origin, then the denoised nonlinearity $\bPhi$ satisfies a monotonicity property with respect to the original input. The formal result is stated next.

\begin{lemma}[Lemma 3.2 in \cite{armacki2023high}]\label{lm:key-iid}
    Let Assumptions {\ref{asmpt:noise-state} and \ref{asmpt:nonlin} hold, with the stochastic noise vectors $\bzt = \nabla f(\bxt;\xi^t) - \nabla F(\bxt)$ being mutually IID and independent from the state for all $t \in \N$, i.e., $P_{\bx} \equiv P$, then the denoised nonlinearity satisfies the following inequality, for any $\bx \in \R^d$
    \begin{equation*}
        \langle \bPhi(\bx), \bx\rangle \geq \min\{\gamma_1\|\bx\|,\gamma_2\|\bx\|^2\},
    \end{equation*}} where $\gamma_1,\gamma_2 > 0$ are constants that depend on the noise, choice of nonlinearity and other problem related parameters.
\end{lemma}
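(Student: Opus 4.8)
The asserted inequality is a pointwise statement about the denoised map evaluated at $\by\triangleq\nabla F(\bxt)$; since the noise is IID, $\langle\bPhi(\by),\by\rangle$ depends on the iterate only through $\by$ and the common noise PDF $P$ (symmetric about the origin and positive in a neighbourhood of it, as in the hypothesis). I therefore fix an arbitrary $\by\in\R^d$, discard the trivial case $\by=\mathbf 0$, and seek $\gamma_1,\gamma_2>0$ with $\langle\bPhi(\by),\by\rangle\geq\min\{\gamma_1\|\by\|,\gamma_2\|\by\|^2\}$. The component-wise and joint cases of Assumption~\ref{asmpt:nonlin} are handled separately, under a common plan: rewrite $\langle\bPhi(\by),\by\rangle$ as the expectation of a one-dimensional ``denoised profile'' that is odd, non-decreasing, strictly positive on $(0,\infty)$, bounded below by a linear function near $0$ and by a positive constant away from $0$; deduce from this the quadratic lower bound for small $\|\by\|$ and the linear one for large $\|\by\|$; and close the intermediate range by compactness, using that $\bPhi$ is continuous (dominated convergence, $\bPsi$ bounded with finitely many discontinuities that carry no $P$-mass) and $\langle\bPhi(\by),\by\rangle>0$ for $\by\neq\mathbf 0$.

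\emph{Component-wise $\bPsi$.} Here $\langle\bPhi(\by),\by\rangle=\sum_{j=1}^d y_j\,\phi_j(y_j)$ with $\phi_j(u)\triangleq\E[\psi(u+z_j)]$ a function of the (symmetric, origin-positive) marginal of $z_j$ only. Oddness and monotonicity of $\phi_j$ are inherited from $\psi$; since $\psi$ is bounded and $\E[\psi(z_j)]=0$ by oddness and symmetry, one has $\phi_j(u)=\E[\psi(u+z_j)-\psi(z_j)]\geq\E\!\int_0^u\psi'(z_j+s)\,ds\geq0$ for $u\geq0$, with a linear lower bound $\phi_j(u)\geq\gamma u$ on some $[0,\delta_0]$ (with $\gamma,\delta_0$ uniform in $j$): if $\psi$ jumps at $0$ this follows from $\psi(u+z)+\psi(u-z)\geq2\psi(0^+)$ for $0<z<u$ plus positivity of the marginal near $0$; if $\psi$ is strictly increasing on $(-c_2,c_2)$ it follows from a positive lower bound on $\E[\psi'(z_j+s)]$ obtained from $\int_{-c_2/2}^{c_2/2}\psi'>0$ and positivity of the marginal. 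Monotonicity then gives $\phi_j(u)\geq\phi_j(\delta_0)>0$ for $u\geq\delta_0$, hence $y_j\phi_j(y_j)\geq\gamma y_j^2$ when $|y_j|\leq\delta_0$ and $y_j\phi_j(y_j)\geq\phi_j(\delta_0)|y_j|$ when $|y_j|>\delta_0$. Summing, splitting the coordinates into small and large, and a short case analysis on whether the small or the large coordinates carry the bulk of $\|\by\|^2$ (using $\sum_{|y_j|>\delta_0}|y_j|\geq(\sum_{|y_j|>\delta_0}y_j^2)^{1/2}$ and $\sum_{|y_j|\leq\delta_0}y_j^2\leq d\delta_0^2$) delivers the claim, the dimension entering $\gamma_1$ via the bound on the number of large coordinates.

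\emph{Joint $\bPsi(\bx)=\bx\varphi(\|\bx\|)$.} Setting $u=\by/\|\by\|$ and $\bz=\zeta u+\bz_\perp$ with $\zeta=\langle\bz,u\rangle$, $\rho=\|\bz_\perp\|$, one computes $\langle\bPhi(\by),\by\rangle=\|\by\|\,\E[h_\rho(\|\by\|+\zeta)]$, where $h_\rho(t)\triangleq t\varphi(\sqrt{t^2+\rho^2})$ and, conditionally on $\rho$, $\zeta$ is symmetric about $0$. The crucial structural fact is that $t\mapsto h_\rho(t)$ is odd and non-decreasing on $[0,\infty)$: it equals $\tfrac{t}{\sqrt{t^2+\rho^2}}\cdot(\sqrt{t^2+\rho^2}\,\varphi(\sqrt{t^2+\rho^2}))$, a product of two non-negative factors, each non-decreasing in $t\geq0$ --- the second because $a\mapsto a\varphi(a)$ is non-decreasing (Assumption~\ref{asmpt:nonlin}(1)) and $\varphi>0$ on $(0,\infty)$. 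The pairing $h_\rho(m+\zeta)+h_\rho(m-\zeta)\geq0$ then gives $\langle\bPhi(\by),\by\rangle\geq0$. For $\|\by\|$ large I would condition on $\{\|\bz\|\leq\|\by\|/2\}$, where $\langle\by+\bz,\by\rangle\geq\tfrac12\|\by\|^2$ and $\varphi(\|\by+\bz\|)\geq\varphi(\tfrac32\|\by\|)$, so $\langle\bPhi(\by),\by\rangle\geq\tfrac12\|\by\|^2\varphi(\tfrac32\|\by\|)\,\mathbb{P}(\|\bz\|\leq\|\by\|/2)-C\|\by\|\,\mathbb{P}(\|\bz\|>\|\by\|/2)$; the first term is $\geq c\|\by\|$ for $\|\by\|$ past a threshold (again by $a\mapsto a\varphi(a)$ non-decreasing) and the second is $o(\|\by\|)$, giving a linear bound. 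For $\|\by\|$ small I would use $\E[h_\rho(\|\by\|+\zeta)]=\E[h_\rho(\|\by\|+\zeta)-h_\rho(\zeta)]\geq\int_0^{\|\by\|}\E[h_\rho'(s+\zeta)]\,ds$ (FTC for monotone $h_\rho$; $\E[h_\rho(\zeta)]=0$ by oddness and conditional symmetry), together with the a.e.\ bound $h_\rho'(t)\geq\varphi(\sqrt{t^2+\rho^2})\,\tfrac{\rho^2}{t^2+\rho^2}$ (from $\varphi'(a)\geq-\varphi(a)/a$ a.e., itself a consequence of $a\mapsto a\varphi(a)$ non-decreasing), restricting the expectation to a fixed spherical shell near the origin with its two polar caps about $\pm u$ removed --- a set whose probability is bounded away from $0$ uniformly in $\by$ by origin-positivity of $P$ --- to obtain $\E[h_\rho'(s+\zeta)]\geq c_0>0$ for small $s$, hence $\langle\bPhi(\by),\by\rangle\geq c_0\|\by\|^2$.

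\emph{Main obstacle.} The component-wise reduction is largely bookkeeping once its one-dimensional ingredients are in hand; the work is in the joint case, where two issues are delicate. First, establishing that $t\mapsto t\varphi(\sqrt{t^2+\rho^2})$ is non-decreasing and that $h_\rho'(t)\geq\varphi(\sqrt{t^2+\rho^2})\,\rho^2/(t^2+\rho^2)$ is exactly the place where the hypothesis ``$a\mapsto a\varphi(a)$ non-decreasing'' (as opposed to mere monotonicity of $\varphi$) is used essentially. Second, obtaining \emph{uniform} constants $\gamma_1,\gamma_2$ across all $\|\by\|$ is delicate because we have only central --- not rotational --- symmetry of the noise and positivity of the PDF only near the origin, so every positive lower bound must come from a single ``good event'', which in the quadratic regime must be chosen (a shell with the polar caps about $\pm u$ removed) so that its probability does not degenerate as the direction $u=\by/\|\by\|$ varies; the three regimes of $\|\by\|$ must then be reconciled by tuning the thresholds and shrinking $\gamma_1,\gamma_2$. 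A secondary subtlety, common to both cases, is extracting a \emph{linear} slope of the denoised profile near $0$ from the dichotomy in Assumption~\ref{asmpt:nonlin}(3) without any lower bound on $\psi'$ or $\varphi$, which is what forces the integral representations and monotone-FTC arguments above.
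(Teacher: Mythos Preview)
Your proposal is essentially correct, but the route you take---especially in the joint case---is genuinely different from the one the paper records. (The paper does not prove this lemma directly, citing \cite{armacki2023high}; however, it gives a full proof of the closely related Lemma~\ref{lm:key-unified-state}, whose joint-nonlinearity part is explicitly said to follow the same steps as the proof of this lemma, so that is the appropriate comparison.)

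\textbf{Joint nonlinearity.} The paper does \emph{not} decompose $\bz$ along $u=\by/\|\by\|$ nor introduce the profile $h_\rho(t)=t\varphi(\sqrt{t^2+\rho^2})$. Instead it pairs $\bz$ with $-\bz$ by symmetry, writes $\langle\bPhi(\by),\by\rangle=\int_{J_1(\by)}M_2(\by,\bz)P(\bz)\,d\bz$ over the half-space $J_1(\by)=\{\langle\bz,\by\rangle\ge 0\}$, where $M_2(\by,\bz)=(\|\by\|^2+\langle\bz,\by\rangle)\varphi(\|\by+\bz\|)+(\|\by\|^2-\langle\bz,\by\rangle)\varphi(\|\by-\bz\|)$, and then invokes an algebraic inequality (Lemma~\ref{lm:jakovetic-joint}) to obtain $M_2(\by,\bz)\ge\tfrac12\|\by\|^2[\varphi(\|\by+\bz\|)+\varphi(\|\by-\bz\|)]$ on the cone $J_2(\by)=\{\bz:\langle\bz,\by\rangle/(\|\bz\|\|\by\|)\in[0,\tfrac12]\}\cup\{\mathbf 0\}$. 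Restricting further to $J_3(\by)=J_2(\by)\cap\{\|\bz\|\le\min\{E_0,\tfrac12\}\}$ and using $\varphi(a)\ge\varphi(1)\min\{a^{-1},1\}$ yields the bound in one stroke, with explicit constants $\gamma_1=p_0\varphi(1)/2$, $\gamma_2=p_0\varphi(1)$ where $p_0=P(\mathbf 0)$. Your approach, by contrast, splits into three $\|\by\|$-regimes and glues them by compactness/continuity of $\bPhi$; this is valid (your monotonicity of $h_\rho$ and the a.e.\ bound $h_\rho'\ge\varphi(\sqrt{t^2+\rho^2})\rho^2/(t^2+\rho^2)$ are both correct consequences of $a\mapsto a\varphi(a)$ non-decreasing), but it gives no explicit constants and needs the extra continuity argument. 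The paper's route is shorter and sharper here; your route has the advantage of isolating a clean one-dimensional structure.

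\textbf{Component-wise nonlinearity.} The paper's argument (in the proof of Lemma~\ref{lm:key-unified-state}) keeps only the single coordinate realizing $\|\by\|_\infty$ and uses $\|\by\|_\infty\ge\|\by\|/\sqrt{d}$, rather than your small/large-coordinate split; otherwise the one-dimensional ingredients (oddness, monotonicity of $\phi_j$, linear slope at $0$) are the same. Both reductions work and yield the same $\sqrt{d}$ and $d$ factors in $\gamma_1,\gamma_2$.
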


Lemma \ref{lm:key-iid} provides an important property of the denoised nonlinearity, which then facilitates the rest of the analysis in \cite{armacki2023high,armacki2024_ldp+mse}. However, due to different noise assumptions and estimators used in our work, Lemma \ref{lm:key-iid} does not directly apply. In what follows, one of the main challenges is to derive counterparts of Lemma \ref{lm:key-iid} for different estimators and noise assumptions. To close out this section, we provide an example of constants $\gamma_1,\gamma_2$ for a specific noise and nonlinearity.

\begin{example}\label{ex:example-1}
    Consider the noise with PDF $P(\bz) = \prod_{i =1}^d\rho(z_i)$, where $\rho(z) = \frac{\alpha-1}{2(|z|+1)^{\alpha}}$, for some $\alpha > 2$, {which is an instance of noise with power-law tail decay (see Appendix \ref{app:power-law})} and only has finite moments of order strictly less than $\alpha - 1$. Consider the sign nonlinearity, i.e., $\bPsi(\bx) = \begin{bmatrix}\psi(x_1),\ldots,\psi(x_d) \end{bmatrix}^\top$, where $\psi(x) = \text{sign}(x)$. It can then be shown that $\gamma_1 = \frac{\alpha-1}{2\alpha\sqrt{d}}$ and $\gamma_2 = \frac{\alpha-1}{2d}$, see Appendix \ref{app:example} for details. 
\end{example}

\subsection{Guarantees for Symmetric Noise}\label{subsec:symmetric}

The initial insights of our work stem from analyzing the behaviour of \nsgd in the presence of symmetric noise satisfying Assumption \ref{asmpt:noise-state}. For any state $\bx \in \R^d$, we define the mapping $\bPhi_{\bx}: \R^d \mapsto \R^d$, given by $\bPhi_{\bx}(\by) \triangleq \E_{\bx} [\bPsi (\by + \bz)] = \int \bPsi(\by+\bz) P_{\bx}(\bz) d\bz$,\footnote{If $\bPsi$ is a component-wise nonlinearity, then $\bPhi_{\bx}$ is a vector with components $\phi_{i,\bx}(x_i) = \E_{i,\bx}[\psi(x_i + z_i)]$, where $\E_{i,\bx}$ is the marginal expectation with respect to the $i$-th noise component, $i \in [d]$.} where the expectation is taken with respect to the random vector $\bz$ which may depend on state $\bx$ and $\E_{\bx}[\cdot] \triangleq \E[\cdot\:\vert\: \bx]$. Note that unlike in \cite{armacki2023high}, where the mapping $\bPhi$ is independent of the state, now the mapping $\bPhi_{\bx}$ is itself a function of the current state. As such, using Lemma \ref{lm:key-iid} directly may result in $\gamma_1,\gamma_2$, being functions of the state and there is no way to guarantee that $\inf_{\bx \in \R^d}\gamma_1(\bx),\inf_{\bx \in \R^d}\gamma_2(\bx) > 0$. To that end, we utilize the additional smoothness of component-wise nonlinearities from Assumption \ref{asmpt:nonlin-state} and provide a novel result characterizing the properties of the component-wise denoised  nonlinearity, see Appendix \ref{app:lemma-key} for details. Armed with that result, we are able to establish the analog of Lemma \ref{lm:key-iid}, stated next.

\begin{lemma}\label{lm:key-unified-state}
    Let Assumptions \ref{asmpt:noise-state} and \ref{asmpt:nonlin-state} hold. Then, for any $\bx,\by \in \R^d$, we have $\langle \bPhi_{\bx}(\by),\by\rangle \geq \min\left\{\beta_1\|\by\|,\beta_2\|\by\|^2 \right\}$, where $\beta_1,\beta_2 > 0$, are constants that only depend on noise, choice of nonlinearity and other problem related parameters.
\end{lemma}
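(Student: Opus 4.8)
The plan is to split the argument according to the structure of $\bPsi$. If $\bPsi(\bx)=\bx\varphi(\|\bx\|)$ is a joint nonlinearity, the map ignores the coordinate decomposition and one can follow the proof of Lemma~\ref{lm:key-iid} from \cite{armacki2023high} with only cosmetic changes: that argument uses solely the $\bz\mapsto-\bz$ symmetry of the noise density and its uniform positivity on the ball of radius $E_0$ about the origin, both of which hold \emph{at every state} $\bx$ under Assumption~\ref{asmpt:noise-state}, and the constants it produces depend only on $E_0$, the bound $C_2$ and the monotone map $a\mapsto a\varphi(a)$ — none of which vary with $\bx$ — so the resulting $\beta_1,\beta_2$ are uniform. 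The substance of the lemma is therefore the component-wise case, where state-dependence genuinely bites: Lemma~\ref{lm:key-iid} would only yield constants $\gamma_1(\bx),\gamma_2(\bx)$, and there is no a priori reason their infima over $\bx$ are positive.

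\textbf{Reduction to a scalar estimate.} For component-wise $\bPsi$ write $\bPhi_{\bx}(\by)$ entrywise as $\phi_{i,\bx}(y_i)=\E_{i,\bx}[\psi(y_i+z_i)]$, so the target becomes $\sum_{i=1}^d y_i\phi_{i,\bx}(y_i)\ge\min\{\beta_1\|\by\|,\beta_2\|\by\|^2\}$. First I would record the following properties of each scalar map $\phi_{i,\bx}$, which I expect to be the content of Lemma~\ref{lm:polyak-tsypkin} and the auxiliary result in Appendix~\ref{app:lemma-key}: (i) $\phi_{i,\bx}$ is odd, hence $\phi_{i,\bx}(0)=0$, since $\psi$ is odd and the $i$-th marginal of $P_{\bx}$ is symmetric; (ii) $\phi_{i,\bx}$ is non-decreasing, since $\psi'\ge0$; (iii) $\phi_{i,\bx}\in C^2$ with $\phi_{i,\bx}''(y)=\E_{i,\bx}[\psi''(y+z_i)]$ and $|\phi_{i,\bx}''|\le K_2$, by differentiating under the integral using $|\psi'|\le K_1$, $|\psi''|\le K_2$ (this is where Assumption~\ref{asmpt:nonlin-state} is essential); and (iv) the crucial uniform lower bound $\phi_{i,\bx}'(0)=\E_{i,\bx}[\psi'(z_i)]\ge\beta$, with $\beta>0$ \emph{independent of} $\bx$. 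For (iv) I would bound the $i$-th marginal density of $z_i$ under $P_{\bx}$ from below, on $|z_i|\le E_0$, by a fixed function $\mu(\cdot)$ obtained by integrating $\inf_{\bx}P_{\bx}$ over the orthogonal slice of the $E_0$-ball; since $\psi$ is $C^2$ it is not discontinuous at the origin, so by Assumption~\ref{asmpt:nonlin} it is strictly increasing on $(-c_2,c_2)$, which forces $\{\psi'>0\}$ to have positive Lebesgue measure inside $(-\min\{c_2,E_0\},\min\{c_2,E_0\})$, a set on which $\mu>0$; hence $\beta:=\int\psi'(z)\,\mu(z)\,dz>0$ and does not depend on $\bx$.

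\textbf{From the scalar estimate to the lemma.} Granting (i)--(iv), the scalar bound is quick: $\phi_{i,\bx}'(y)\ge\phi_{i,\bx}'(0)-K_2|y|\ge\beta/2$ for $|y|\le\delta:=\beta/(2K_2)$, so integrating from $0$ and using oddness gives $y\phi_{i,\bx}(y)\ge\tfrac{\beta}{2}y^2$ on $|y|\le\delta$, while for $|y|>\delta$ monotonicity gives $\phi_{i,\bx}(|y|)\ge\phi_{i,\bx}(\delta)\ge\tfrac{\beta\delta}{2}$, hence $y\phi_{i,\bx}(y)\ge\tfrac{\beta\delta}{2}|y|$; together $y_i\phi_{i,\bx}(y_i)\ge\min\{\tfrac{\beta\delta}{2}|y_i|,\tfrac{\beta}{2}y_i^2\}$. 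To aggregate, split $[d]$ into $I_L=\{i:|y_i|>\delta\}$ and $I_S=[d]\setminus I_L$ (on $I_L$ the minimum is the linear term, on $I_S$ the quadratic one), so $\sum_i y_i\phi_{i,\bx}(y_i)\ge\tfrac{\beta\delta}{2}\sum_{i\in I_L}|y_i|+\tfrac{\beta}{2}\sum_{i\in I_S}y_i^2$; whichever of $I_L,I_S$ carries at least half of $\|\by\|^2$ gives the bound, using $\sum_{i\in I_L}|y_i|\ge(\sum_{i\in I_L}y_i^2)^{1/2}\ge\|\by\|/\sqrt2$ in the first case and $\sum_{i\in I_S}y_i^2\ge\|\by\|^2/2$ in the second. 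This yields $\langle\bPhi_{\bx}(\by),\by\rangle\ge\min\{\tfrac{\beta\delta}{2\sqrt2}\|\by\|,\tfrac{\beta}{4}\|\by\|^2\}$, i.e.\ the claim with $\beta_1=\tfrac{\beta\delta}{2\sqrt2}$ and $\beta_2=\tfrac{\beta}{4}$.

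\textbf{Main obstacle.} The delicate step is (iv): in the IID setting of Lemma~\ref{lm:key-iid} the denoised map is state-independent, but here $\phi_{i,\bx}$, and in particular its slope at the origin, moves with $\bx$, so one must extract \emph{one} positive constant valid for all $\bx\in\R^d$ simultaneously. This is where the two assumptions combine — the smoothness of component-wise nonlinearities (Assumption~\ref{asmpt:nonlin-state}) reduces the question to the linear functional $\E_{i,\bx}[\psi'(z_i)]$, and the uniform positivity of the noise density near the origin (Assumption~\ref{asmpt:noise-state}) makes that functional bounded below uniformly in $\bx$; turning this into a clean statement, together with the attendant measurability of $\inf_{\bx}P_{\bx}$ and the passage to the marginal density, is the technical heart carried out in Lemma~\ref{lm:polyak-tsypkin} and Appendix~\ref{app:lemma-key}. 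Everything else is a routine Taylor-plus-monotonicity argument, and in particular $\beta_1,\beta_2$ are not tied to $E_0$ beyond its role as a positivity threshold.
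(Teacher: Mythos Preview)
Your proposal is correct and follows essentially the same architecture as the paper: for joint nonlinearities you defer to the argument of \cite{armacki2023high} with $P$ replaced by $P_{\bx}$ (exactly as the paper does), and for component-wise nonlinearities you reduce to a scalar bound via properties (i)--(iv), which are precisely the content of Lemma~\ref{lm:polyak-tsypkin}. The scalar Taylor-plus-monotonicity step is the same in spirit --- the paper expands $\phi_{i,\bx}(y)$ to second order directly, you bound $\phi_{i,\bx}'$ and integrate --- and both give $y\phi_{i,\bx}(y)\ge\min\{c_1|y|,c_2 y^2\}$ with $c_1\asymp\beta^2/K_2$ and $c_2\asymp\beta$.

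The one genuine difference is the aggregation from scalar to vector. The paper drops the sum to a single coordinate, $\sum_i y_i\phi_{i,\bx}(y_i)\ge\max_i|y_i|\phi_{i,\bx}(|y_i|)$, and then passes from $\|\by\|_\infty$ to $\|\by\|$ via the norm equivalence, which costs factors $1/\sqrt{d}$ and $1/d$ in $\beta_1,\beta_2$. Your $I_L/I_S$ split keeps the full sum and decides by whichever index set carries at least half of $\|\by\|^2$, yielding dimension-free constants $\beta_1=\beta\delta/(2\sqrt2)$ and $\beta_2=\beta/4$. Your route is a little more work but gives a sharper bound; the paper's route is cruder but also simpler to write.
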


Note that constants $\beta_1,\beta_2$ are independent of the state $\bx$ and uniformly positive and let $\beta \triangleq \min\{\beta_1,\beta_2\}$. Next, we provide an example of constants $\beta_1,\beta_2$ for a specific noise and nonlinearity.

\begin{example}\label{ex:example-2}
    Consider the same noise from Example \ref{ex:example-1} and the smooth sign nonlinearity, given by $\bPsi(\bx) = \begin{bmatrix}\psi(x_1),\ldots,\psi(x_d) \end{bmatrix}^\top$, where $\psi(x) = \tanh(x)$. It can then be shown that $\beta_1 = \frac{3\sqrt{3}(\alpha-1)^2J_\alpha^2}{16\sqrt{d}}$ and $\beta_2 = \frac{(\alpha-1)J_\alpha}{2d}$, where $J_\alpha = \int_0^{\infty}\frac{4e^{2x}}{(e^{2x}+1)^2(x+1)^\alpha}dx$, see Appendix \ref{app:example} for details.
\end{example}

We are now ready to present our main result for symmetric noise.

\begin{theorem}\label{thm:n-sgd}
    Let Assumptions \ref{asmpt:L-smooth}, \ref{asmpt:noise-state} and \ref{asmpt:nonlin-state} hold and let $\{ \bxt\}_{t \in \N}$ be the sequence of models generated by \nsgd, with step-size schedule $\alpha_t = \frac{a}{(t+1)^{\eta}}$, for any $\eta \in [1/2,1)$. If the step-size parameter satisfies $a \leq \min\left\{ 1,\frac{\beta_2}{8C^2}, \frac{(1-\eta)\beta_1}{8C^2(LC+(1-\eta)\widetilde{\nabla})}\right\}$, then for any $t \in \N$ and any $\delta \in (0,1)$, with probability at least $1 - \delta$, we have 
    \begin{equation*}
        \frac{1}{t}\sum_{k = 1}^t\min\{\|\nabla F(\bxk)\|,\|\nabla F(\bxk)\|^2\} \leq \frac{4(\log(\nicefrac{1}{\delta}) + \Delta)}{a\beta t^{1-\eta}} + \frac{2aLC^2}{\beta t^{1-\eta}}\sum_{k = 1}^t(k+1)^{-2\eta}.
    \end{equation*}
\end{theorem}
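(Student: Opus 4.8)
The plan is to run a descent-lemma argument on the population cost $F$ combined with a moment-generating function (MGF) bound, along the lines sketched in the ``Technical challenges'' paragraph. First I would apply the $L$-smoothness inequality to the \nsgd update \eqref{eq:nsgd}, rewritten using the denoised nonlinearity as $\bxk[k+1] = \bxk - \alpha_k\bPhi_{\bxk}(\nabla F(\bxk)) - \alpha_k\bek$, where $\bek \triangleq \bPsi(\nabla f(\bxk;\xi^k)) - \bPhi_{\bxk}(\nabla F(\bxk))$ is the effective noise. This gives
\begin{equation*}
    F(\bxk[k+1]) \leq F(\bxk) - \alpha_k\langle \nabla F(\bxk), \bPhi_{\bxk}(\nabla F(\bxk))\rangle - \alpha_k\langle \nabla F(\bxk), \bek\rangle + \frac{L\alpha_k^2}{2}\|\bPsi(\nabla f(\bxk;\xi^k))\|^2.
\end{equation*}
The inner product term is lower bounded via Lemma \ref{lm:key-unified-state} by $\alpha_k\min\{\beta_1\|\nabla F(\bxk)\|,\beta_2\|\nabla F(\bxk)\|^2\} \geq \alpha_k\beta\min\{\|\nabla F(\bxk)\|,\|\nabla F(\bxk)\|^2\}$, while the last term is bounded by $\frac{L\alpha_k^2 C^2}{2}$ using the uniform bound on $\bPsi$. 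The key subtlety, flagged in the paper's footnote, is that we will need the step-size small enough so that the ``gain'' term $\alpha_k\beta\min\{\|\nabla F(\bxk)\|,\|\nabla F(\bxk)\|^2\}$ dominates an extra $\mathcal{O}(\alpha_k^2\|\nabla F(\bxk)\|^2)$ term that appears after the MGF manipulation — this is the case-based analysis of $\min\{g,g^2\} - \alpha_k c\, g^2$ where $g = \|\nabla F(\bxk)\|$, which is handled by splitting on whether $g \leq 1$ or $g > 1$ and using $\alpha_k \leq a$ small.

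Next I would pass to the MGF. Define a potential $Z_t \triangleq \sum_{k=1}^t\big(\alpha_k\beta\min\{\|\nabla F(\bxk)\|,\|\nabla F(\bxk)\|^2\}\big) + F(\bxk[t+1]) - F^\star$, or a rescaled version thereof, and show that the process $\exp(\lambda(Z_t - \text{drift}))$ is a supermartingale for a suitable $\lambda$. The crucial estimate here is control of $\mathbb{E}_k[\exp(-\lambda\alpha_k\langle\nabla F(\bxk),\bek\rangle)]$: conditioned on $\bxk$, the effective noise $\bek$ is zero-mean (since $\mathbb{E}_{\bxk}[\bPsi(\nabla f(\bxk;\xi^k))] = \bPhi_{\bxk}(\nabla F(\bxk))$ by definition of the denoised nonlinearity and Assumption \ref{asmpt:noise-state}) and bounded in norm by $2C$. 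Hence $\langle\nabla F(\bxk),\bek\rangle$ is a bounded zero-mean random variable, and a Hoeffding-type / sub-Gaussian MGF bound gives $\mathbb{E}_k[\exp(-\lambda\alpha_k\langle\nabla F(\bxk),\bek\rangle)] \leq \exp(c\lambda^2\alpha_k^2\|\nabla F(\bxk)\|^2 C^2)$ for some absolute constant $c$. Following the idea attributed to \cite{liu2023high}, I would introduce an offset so that this quadratic-in-$\|\nabla F(\bxk)\|$ error term is absorbed by the gain term from the descent step — precisely here is where the step-size restriction $a \leq \min\{1,\beta_2/(8C^2),\ldots\}$ enters, and where the aforementioned case-based lower bound on $\min\{g,g^2\} - \alpha_k c g^2$ is invoked.

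Telescoping the supermartingale inequality from $k=1$ to $t$, taking expectations, and applying Markov's inequality to $\exp(\lambda Z_t)$ yields: with probability at least $1-\delta$,
\begin{equation*}
    \sum_{k=1}^t \alpha_k\beta\min\{\|\nabla F(\bxk)\|,\|\nabla F(\bxk)\|^2\} \leq \Delta + \frac{\log(1/\delta)}{\lambda} + (\text{const})\cdot L C^2\sum_{k=1}^t\alpha_k^2,
\end{equation*}
where the $\Delta = F(\bx^1) - F^\star$ comes from telescoping $F(\bxk[k+1]) - F(\bxk)$ and dropping the nonnegative $F(\bxk[t+1]) - F^\star$. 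With the schedule $\alpha_k = a/(k+1)^\eta$, we have $\alpha_k \geq a/(t+1)^\eta$ uniformly, so the left side is at least $\frac{a\beta}{(t+1)^\eta}\sum_{k=1}^t\min\{\cdots\}$; dividing through and choosing $\lambda$ of order $1/a$ (so that $\log(1/\delta)/\lambda \asymp a\log(1/\delta)$, matching the $\Delta$ term up to constants) and tracking constants carefully — keeping the factor $4$ and $2$ as in the statement — produces
\begin{equation*}
    \frac{1}{t}\sum_{k=1}^t\min\{\|\nabla F(\bxk)\|,\|\nabla F(\bxk)\|^2\} \leq \frac{4(\log(1/\delta)+\Delta)}{a\beta t^{1-\eta}} + \frac{2aLC^2}{\beta t^{1-\eta}}\sum_{k=1}^t(k+1)^{-2\eta},
\end{equation*}
after bounding $(t+1)^\eta/t \leq 2/t^{1-\eta}$ for $t \geq 1$. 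The main obstacle I anticipate is the interplay in the MGF step: getting the offset/cancellation to work requires the gain term $\min\{g,g^2\}$ (which is only linear in $g$ for large $g$) to beat the quadratic noise error $\alpha_k g^2$, and since the nonlinearity is treated as a black box we cannot simply assume $\bPhi_{\bxk}(\nabla F(\bxk))$ is proportional to $\nabla F(\bxk)$ — the careful case split on $\|\nabla F(\bxk)\| \lessgtr 1$, together with pinning down exactly how small $a$ must be in terms of $\beta_1,\beta_2,C,L,\widetilde{\nabla},\eta$, is the delicate part of the argument. A secondary technical point is justifying the supermartingale property rigorously (measurability, integrability of the exponential given boundedness of $\bek$), which is routine given the uniform bound $\|\bPsi\|\leq C$ but must be stated.
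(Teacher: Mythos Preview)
Your overall strategy --- descent lemma, sub-Gaussian MGF bound on the bounded effective noise $\bek$, offset term \`a la \cite{liu2023high}, then exponential Markov --- matches the paper's proof. However, there is a genuine gap in your case analysis. When $g = \|\nabla F(\bxk)\| > 1$ (or $> \beta_1/\beta_2$ in the paper's split), you need $\beta g - c\,\alpha_k C^2 g^2 \geq \tfrac{\beta}{2}g$, which requires $\alpha_k g$ to be uniformly bounded. But $g$ is not a priori bounded, and no smallness of $a$ alone can enforce $\alpha_k g \leq \text{const}$ without controlling $g$ itself. The paper closes this by deriving the deterministic bound
\[
\|\nabla F(\bxk)\| \leq \widetilde{\nabla} + L\sum_{s=1}^{k-1}\alpha_s\|\bPsi^s\| \leq \widetilde{\nabla} + \frac{aLC}{1-\eta}k^{1-\eta} \leq \Big(\frac{LC}{1-\eta} + \widetilde{\nabla}\Big)k^{1-\eta},
\]
which follows by telescoping the update \eqref{eq:nsgd}, using $L$-smoothness and the uniform bound $\|\bPsi\|\leq C$. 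Then $\alpha_k\|\nabla F(\bxk)\| \lesssim a\cdot k^{1-2\eta} \leq a$ since $\eta \geq 1/2$, and this is precisely where both the constraint $a \leq \frac{(1-\eta)\beta_1}{8C^2(LC+(1-\eta)\widetilde{\nabla})}$ and the restriction $\eta \in [1/2,1)$ come from. You mention that $\widetilde{\nabla}$ enters the step-size condition but never identify this mechanism; without it the large-gradient branch of your case split cannot be completed.

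Two smaller remarks. First, the paper keeps $G_k = \min\{\beta_1 g,\beta_2 g^2\}$ rather than passing immediately to $\beta\min\{g,g^2\}$, and accordingly splits at $g = \beta_1/\beta_2$; your earlier lower bound and split at $g=1$ are also valid, so this is cosmetic. Second, the paper does not introduce a free MGF parameter $\lambda$: it directly shows $\E[\exp(Z_t)] \leq 1$ for the specific $Z_t$ with the built-in offset $-4C^2\alpha_k^2\|\nabla F(\bxk)\|^2$, then applies Markov to $e^{Z_t}$. Your $\lambda$-optimization is unnecessary here and your suggested choice ``$\lambda$ of order $1/a$'' does not in fact recover the stated constants.
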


Theorem \ref{thm:n-sgd} establishes a high-probability convergence rate $\mathcal{O}\left(t^{\eta-1}\big(\log(\nicefrac{1}{\delta}) + \sum_{k = 1}^t(k+1)^{-2\eta}\big)\right)$, for any choice of $\eta \in [1/2,1)$. Note that the sum in the second term can be upper bounded by a constant whenever $\eta > 1/2$, while only growing as $\log(t)$ for $\eta = 1/2$, which is a product of the use of a time-varying step-size and can be removed if a fixed step-size is used. Choosing $\eta = 1/2$ leads to the rate $\widetilde{\mathcal{O}}\left(t^{-1/2}\right)$, which is \emph{strictly better} than the $\widetilde{\mathcal{O}}\left(t^{-1/4}\right)$ rate obtained by \cite{armacki2023high}, for the same \nsgd method under stronger noise assumptions.\footnote{The work \cite{armacki2023high} requires IID noise, with PDF symmetric and positive around the origin, while using the more general Assumption \ref{asmpt:nonlin} on the nonlinearity. We note that the improved results of Theorem \ref{thm:n-sgd} continue to hold in the setting from \cite{armacki2023high}, i.e., Assumption \ref{asmpt:nonlin} and IID noise.} 

{
Next, observe that the bound in Theorem \ref{thm:n-sgd} contains the constant $\beta^{-1}$, which is instance-dependent, in the sense that it depends on the specific noise, choice of nonlinearity and (potentially) other problem related parameters. This can be used to compare the suitability of a nonlinearity for a specific problem setting (i.e., noise and cost) and determine the preferred nonlinearity (i.e., the one leading to a larger value of $\beta$) in the given problem setting. For example, for the noise in Example \ref{ex:example-1} and component-wise and joint clipping, with thresholds $m < 1$ and $M$, respectively, it can be shown that $\beta_{cc} = \frac{1-(m+1)^{-\alpha}}{2d}$ and $\beta_{jc} = \left[\frac{\alpha-1}{2}\right]^d\min\{1/2,M/2\}$, hence it follows that $\beta_{jc} \ll \beta_{cc}$ for $\alpha \in (2,3)$ (i.e., heavy-tailed) and $d$ sufficiently, indicating that component-clipping is preferred to joint clipping in this regime. For details, please see Appendix \ref{app:bounds}. 
}

{Further, the step-size schedule in Theorem \ref{thm:n-sgd} requires knowledge of the gradient norm of $F$ evaluated at the initial model $\bx^1$, in the form of $\widetilde{\nabla}$. This can be relaxed to requiring \emph{any upper bound} on $\widetilde{\nabla}$, i.e., any $M > 0$ such that $\widetilde{\nabla} \leq M$. One such upper bound is $\widetilde{\nabla} \leq \sqrt{2L\Delta}$, which is a direct consequence of smoothness, e.g., \cite{nesterov-lectures_on_cvxopt}. Compared to $\widetilde{\nabla}$, which requires knowledge of the full gradient of $F$, the upper bound $2L\Delta$ requires knowledge of: (i) the smoothness constant $L$, which is either known or easy to estimate, e.g., via backtracking \cite[Section 6]{nesterov-coordinate} (ii) a lower bound on $F$, which is also typically known, or easy to estimate, e.g., just by knowing the closed-form of the loss\footnote{For example, knowing that the loss $\ell$ is non-negative readily implies that we can take $F^\star = 0$.} and (iii) the value of $F$ at the model initialization, which can be obtained via an inexpensive operation, e.g., a forward pass in a neural network or querying a zeroth-order oracle.}

Finally, Theorem \ref{thm:n-sgd} can be used to derive a bound on the more standard metric $\min_{k \in [t]}\|\nabla F(\bxk)\|^2$, presented next. 

\begin{corollary}\label{cor:standard-metric}
    Let conditions of Theorem \ref{thm:n-sgd} hold. Then, for any $\delta \in (0,1)$, {there exists a problem related constant $R_{\delta} > 0$, such that, for any $t \geq R_{\delta}^{\frac{1}{1-\eta}}$}, with probability at least $1 - \delta$, we have 
    \begin{equation*}
        \min_{k \in [t]}\|\nabla F(\bxk)\|^2 = \mathcal{O}\left(t^{\eta-1}\Big(\log(\nicefrac{1}{\delta}) + \sum_{k = 1}^t(k+1)^{-2\eta}\Big) \right).
    \end{equation*}
\end{corollary}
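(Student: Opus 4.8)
The plan is to deduce Corollary~\ref{cor:standard-metric} directly from Theorem~\ref{thm:n-sgd} by a short truncation argument, turning the bound on the Cesàro average of $\min\{\|\nabla F(\bxk)\|,\|\nabla F(\bxk)\|^2\}$ into a bound on $\min_{k\in[t]}\|\nabla F(\bxk)\|^2$. The starting observation is that the minimum over $k\in[t]$ of any nonnegative quantity is at most its average, so
\begin{equation*}
  \min_{k\in[t]}\min\{\|\nabla F(\bxk)\|,\|\nabla F(\bxk)\|^2\}\leq \frac{1}{t}\sum_{k=1}^t\min\{\|\nabla F(\bxk)\|,\|\nabla F(\bxk)\|^2\}=:R_t,
\end{equation*}
where, by Theorem~\ref{thm:n-sgd}, $R_t=\mathcal{O}\big(t^{\eta-1}(\log(\nicefrac1\delta)+\sum_{k=1}^t(k+1)^{-2\eta})\big)$ with probability at least $1-\delta$. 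Let $k^\star\in[t]$ attain the minimum on the left, and write $g^\star=\|\nabla F(\bx^{k^\star})\|$, so that $\min\{g^\star,(g^\star)^2\}\leq R_t$.

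Next I would split into the two cases according to whether $g^\star\leq 1$ or $g^\star>1$. If $g^\star\leq 1$ then $\min\{g^\star,(g^\star)^2\}=(g^\star)^2$, so $(g^\star)^2\leq R_t$ and hence $\min_{k\in[t]}\|\nabla F(\bxk)\|^2\leq (g^\star)^2\leq R_t$. If $g^\star>1$ then $\min\{g^\star,(g^\star)^2\}=g^\star$, so $g^\star\leq R_t$; but then, since $R_t\to 0$, for $t$ large enough (specifically once $R_t<1$) this case forces $g^\star<1$, a contradiction, so for all sufficiently large $t$ only the first case can occur. More cleanly, one can argue without cases: $(g^\star)^2\leq \max\{R_t,R_t^2\}$ always, because if $g^\star\le 1$ then $(g^\star)^2=\min\{g^\star,(g^\star)^2\}\le R_t$, and if $g^\star>1$ then $(g^\star)^2\le$ (something), but the tidiest route is simply to note that once $R_t\le 1$ we are automatically in the regime $g^\star\le 1$ and conclude $\min_{k\in[t]}\|\nabla F(\bxk)\|^2\le R_t$. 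Since the corollary is an asymptotic $\mathcal{O}(\cdot)$ statement, it suffices to have the bound for all $t$ large enough, so this is harmless; the threshold $t_0$ depends only on $\delta$ and the problem constants through $R_t$.

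Putting the pieces together, with probability at least $1-\delta$ and for all $t\ge t_0$,
\begin{equation*}
  \min_{k\in[t]}\|\nabla F(\bxk)\|^2\leq R_t=\mathcal{O}\!\left(t^{\eta-1}\Big(\log(\nicefrac1\delta)+\sum_{k=1}^t(k+1)^{-2\eta}\Big)\right),
\end{equation*}
which is exactly the claimed bound. I would also remark that the sum $\sum_{k=1}^t(k+1)^{-2\eta}$ is $\Theta(t^{1-2\eta})$ when $\eta\in[1/2,1)$ with $\eta\neq 1/2$ it is $O(1)$ for $\eta>1/2$ and $\Theta(\log t)$ for $\eta=1/2$, so for $\eta=1/2$ the rate is $\widetilde{\mathcal O}(t^{-1/2})$ and for $\eta>1/2$ it is $\mathcal O(t^{\eta-1})$, matching the discussion after Theorem~\ref{thm:n-sgd}.

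The only mild subtlety, and the one place a reader might object, is the passage from $\min\{g,g^2\}$ to $g^2$: this is clean only once the right-hand side is below $1$, which is why the corollary is phrased with $\mathcal O(\cdot)$ rather than as an explicit finite-$t$ inequality. I expect no real obstacle here; the content is entirely in Theorem~\ref{thm:n-sgd}, and the corollary is a one-paragraph consequence. If one wanted a fully non-asymptotic statement valid for every $t$, one would instead write $\min_{k\in[t]}\|\nabla F(\bxk)\|^2\le \max\{R_t, \text{(diam/Lipschitz-type bound)}\}$ using that $\|\nabla F(\bxk)\|$ is controlled along the trajectory, but that is not needed for the stated corollary.
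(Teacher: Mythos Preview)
Your argument is correct, but it differs from the paper's route. The paper does not pass directly to the minimizer. Instead it partitions $[t]$ into $U=\{k:\|\nabla F(\bxk)\|\le 1\}$ and its complement, reads off $\frac1t\sum_{k\in U}\|\nabla F(\bxk)\|^2\le Rt^{-\kappa}$ and $\frac1t\sum_{k\in U^c}\|\nabla F(\bxk)\|\le Rt^{-\kappa}$ from the mixed-metric bound, applies Jensen on the $U$-part to get $\frac1t\sum_{k\in U}\|\nabla F(\bxk)\|\le \sqrt{R}\,t^{-\kappa/2}$, and only then uses $\min_k\le$ average to obtain $\min_k\|\nabla F(\bxk)\|\le \sqrt{R}\,t^{-\kappa/2}+Rt^{-\kappa}$, which it squares.

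Your approach is more elementary and shorter for the stated corollary: you exploit that $g\mapsto\min\{g,g^2\}$ is increasing, so the minimizer of the mixed metric already attains $\min_k\|\nabla F(\bxk)\|$, and a two-case check (or the clean $(g^\star)^2\le\max\{R_t,R_t^2\}$ you note) finishes. The paper's detour through Jensen buys a little more, namely a bound on the full Cesàro average $\frac1t\sum_k\|\nabla F(\bxk)\|$, not just the minimum, but this extra information is not used anywhere. Both arguments are asymptotic in the same way (they need $R_t\le 1$ to drop the $R_t^2$ term), so your caveat about the $\mathcal O(\cdot)$ phrasing applies equally to the paper's proof.
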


By choosing $\eta = 1/2$, we immediately have the following result.

\begin{corollary}\label{cor:optimal}
    Let conditions of Theorem \ref{thm:n-sgd} hold, with $\eta = 1/2$. Then, for any $\delta \in (0,1)$, {there exists a problem related constant $R_\delta > 0$, such that, for any $t \geq R_\delta^2$}, with probability at least $1 - \delta$, we have 
    \begin{equation*}
        \min_{k \in [t]}\|\nabla F(\bxk)\|^2 = \mathcal{O}\left(\frac{\log(\nicefrac{t}{\delta})}{\sqrt{t}} \right).
    \end{equation*}
\end{corollary}

Ignoring the logarithmic factor, Corollary \ref{cor:optimal} implies that \nsgd requires $\widetilde{\mathcal{O}}(\epsilon^{-4})$ oracle calls to reach an $\epsilon$-stationary point, i.e., a point such that $\|\nabla F(\bx)\|\leq \epsilon$, which is much sharper than the $\mathcal{O}(\epsilon^{-8})$ oracle calls implied by \cite{armacki2023high}. {Moreover, this result matches the in-expectation lower bound for first-order stochastic methods established in \cite{Arjevani2023}, for noise with bounded variance, which is a special case of the more general noise with unbounded moments used in our work.}\footnote{While we consider noise with more general moment conditions, we impose additional structure, like the existence of a symmetric PDF, positive in a neighbourhood of zero. As such, it would be of interest to establish a lower bound under the exact noise conditions considered in our work, which is beyond the scope of the current paper.}

\subsection{Guarantees Beyond Symmetric Noise}\label{subsec:non-symmetric}

If the noise is not necessarily symmetric and we have access to a noiseless gradient at the start of training, we can use the \nsge method. Recalling that \sge is given by $\bgt = \nabla F(\bxt) + \wbzt$, where $\wbzt = \bz^t_1 - \bz^t_2$, we then have the following important result on the noise $\{\wbzt\}_{t \in \N}$.

\begin{lemma}\label{lm:sge-symmetrize}
    If Assumption \ref{asmpt:non-sym} holds, then the noise $\{\wbzt\}_{t \in \N}$ is IID, with PDF $\wP(\bz) \triangleq \int_{\by \in \R^d}P(\by)P(\by - \bz)d\by$. Moreover, the PDF $\wP$ is symmetric and positive around the origin, i.e., {$\wP(-\bz) = \wP(\bz)$}, for all $\bz \in \R^d$, and $\wP(\bz) > 0$, for all $\|\bz\| \leq E_0$.
\end{lemma}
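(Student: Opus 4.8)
The plan is to establish the three assertions — that $\{\wbzt\}_{t\in\N}$ is IID, that its common density is the stated autocorrelation $\wP$, and that $\wP$ is symmetric and positive on an $E_0$-ball — one at a time, each via elementary probability once the correct change of variables is in place. For the IID claim I would first observe that, under Assumption \ref{asmpt:non-sym}, the entire collection $\{\bz^t_1,\bz^t_2\}_{t\in\N}$ is mutually independent with each vector distributed according to $P$; in particular the pairs $(\bz^t_1,\bz^t_2)$ are IID across $t$. Since $\wbzt=\bz^t_1-\bz^t_2$ is a fixed measurable function of the pair $(\bz^t_1,\bz^t_2)$, applying that function coordinate-wise to an IID sequence of pairs yields an IID sequence $\{\wbzt\}_{t\in\N}$, so it remains only to identify the common law.

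Next I would compute the density of $\wbzt=\bz_1-\bz_2$ for two independent vectors with density $P$. For any Borel set $A\subseteq\R^d$, write $\mathbb{P}(\wbzt\in A)=\int_{\R^d}\!\int_{\R^d}\mathbf{1}\{\by_1-\by_2\in A\}\,P(\by_1)P(\by_2)\,d\by_1\,d\by_2$, which is legitimate by independence and Tonelli since the integrand is nonnegative and $(\by_1,\by_2)\mapsto P(\by_1)P(\by_2)$ is Borel measurable. Fixing $\by_2$ and substituting $\bz=\by_1-\by_2$ (Jacobian $1$), then swapping the order of integration, one reads off $\wP(\bz)=\int_{\R^d}P(\bz+\by_2)P(\by_2)\,d\by_2$ as a density of $\wbzt$; the final substitution $\by_2\mapsto\by-\bz$ (equivalently $\by_2 \mapsto \by$ after relabelling) puts this in the claimed form $\wP(\bz)=\int_{\by\in\R^d}P(\by)P(\by-\bz)\,d\by$, i.e.\ the autocorrelation of $P$. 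Symmetry is then immediate: substituting $\by\mapsto\by+\bz$ in the integral gives $\wP(-\bz)=\int P(\by)P(\by+\bz)\,d\by=\int P(\by'-\bz)P(\by')\,d\by'=\wP(\bz)$; equivalently, $-\wbzt=\bz_2-\bz_1$ has the same law as $\bz_1-\bz_2=\wbzt$ because $(\bz_1,\bz_2)$ is exchangeable.

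For positivity, I would fix $\bz$ with $\|\bz\|\leq E_0$ and bound $\wP(\bz)=\int_{\R^d}P(\by)P(\by-\bz)\,d\by\geq\int_{S_{\bz}}P(\by)P(\by-\bz)\,d\by$, where $S_{\bz}=\{\by\in\R^d:P(\by)>0\text{ and }P(\by-\bz)>0\}$. On $S_{\bz}$ the integrand is strictly positive, and Assumption \ref{asmpt:non-sym} guarantees that $S_{\bz}$ has positive Lebesgue measure, so the integral of a nonnegative function that is strictly positive on a set of positive measure is itself strictly positive; hence $\wP(\bz)>0$ for all $\|\bz\|\leq E_0$, as required.

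I do not expect a genuine obstacle here: the only points needing care are the Tonelli/measurability justification for the change of variables (immediate since $P$ is a nonnegative Borel density) and the standard measure-theoretic fact invoked in the positivity step. The only real conceptual content is recognizing that the density of a difference of IID absolutely continuous vectors is the autocorrelation of $P$ — which is automatically symmetric — and noting that Assumption \ref{asmpt:non-sym} was stated precisely so that this autocorrelation remains strictly positive on the $E_0$-ball.
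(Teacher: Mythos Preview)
Your proposal is correct and follows essentially the same approach as the paper: both derive the density of the difference via a convolution/change-of-variables computation, establish symmetry by a substitution (you additionally note the exchangeability argument), and obtain positivity by lower-bounding the integral over the set $S_{\bz}$ guaranteed to have positive Lebesgue measure by Assumption~\ref{asmpt:non-sym}. Your write-up is slightly more explicit about the Tonelli/measurability justifications, but the substance is identical.
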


Lemma \ref{lm:sge-symmetrize} states that estimator \sge indeed symmetrizes the noise, even if the original noise is not symmetric. We can then define $\bPhi(\nabla F(\bxt)) = \E_{\wbzt}[\bPsi(\nabla F(\bxt) + \wbzt)]$ and use Lemma \ref{lm:sge-symmetrize}, to conclude that Lemma \ref{lm:key-iid} holds when using estimator \sge under Assumptions \ref{asmpt:non-sym} and \ref{asmpt:nonlin}. As such, we then have the following result.

\begin{theorem}\label{thm:n-sge}
    Let Assumptions \ref{asmpt:L-smooth}, \ref{asmpt:non-sym} and \ref{asmpt:nonlin} hold and let $\{ \bxt\}_{t \in \N}$ be the sequence of models generated by \nsge, with step-size schedule $\alpha_t = \frac{a}{(t+1)^{\eta}}$, for any $\eta \in [1/2,1)$. If the step-size parameter satisfies $a \leq \min\left\{1, \frac{\gamma_2}{8C^2}, \frac{(1-\eta)\gamma_1}{8C^2(LC + (1-\eta)\widetilde{\nabla})}\right\}$, then for any $t \in \N$ and any $\delta \in (0,1)$, with probability at least $1 - \delta$, we have 
    \begin{equation*}
        \frac{1}{t}\sum_{k = 1}^t\min\{\|\nabla F(\bxk)\|,\|\nabla F(\bxk)\|^2\} = \mathcal{O}\left(t^{\eta-1}\Big(\log(\nicefrac{1}{\delta}) + \sum_{k = 1}^t(k+1)^{-2\eta}\Big) \right).
    \end{equation*}
\end{theorem}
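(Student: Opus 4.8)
The plan is to reduce \nsge under the non-symmetric noise of Assumption \ref{asmpt:non-sym} to the regime of \nsgd with IID \emph{symmetric} noise, which is already handled, and then reuse the high-probability machinery behind Theorem \ref{thm:n-sgd}. The first step is to invoke Lemma \ref{lm:sge-symmetrize}: it guarantees that the effective noise $\{\wbzt\}_{t \in \N}$ feeding \sge is IID with a PDF that is symmetric and positive around the origin. Defining the denoised nonlinearity for \nsge as $\bPhi(\by) \triangleq \E_{\wbzt}[\bPsi(\by + \wbzt)]$, this is exactly the setting of Lemma \ref{lm:key-iid} (which only requires Assumption \ref{asmpt:nonlin} together with IID noise, not the smoothness of Assumption \ref{asmpt:nonlin-state}), so one obtains $\langle \bPhi(\nabla F(\bxk)), \nabla F(\bxk)\rangle \geq \min\{\gamma_1\|\nabla F(\bxk)\|, \gamma_2\|\nabla F(\bxk)\|^2\}$ with state-independent constants $\gamma_1,\gamma_2 > 0$. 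This inequality plays the same role for \nsge that Lemma \ref{lm:key-unified-state} plays in Theorem \ref{thm:n-sgd}, and from here the two arguments run in parallel.

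Next I would set up the descent recursion. Writing the \nsge update \eqref{eq:nsge} as $\bxtp = \bxt - \alpha_t\bPhi(\nabla F(\bxt)) - \alpha_t\bet$ with $\bet \triangleq \bPsi(\bgt) - \bPhi(\nabla F(\bxt))$, and combining the $L$-smoothness inequality from Assumption \ref{asmpt:L-smooth}, the uniform bound $\|\bPsi(\cdot)\| \leq C$, and the monotonicity bound above, I get
\begin{equation*}
    F(\bxtp) \leq F(\bxt) - \alpha_t\min\{\gamma_1\|\nabla F(\bxt)\|, \gamma_2\|\nabla F(\bxt)\|^2\} - \alpha_t\langle \nabla F(\bxt), \bet\rangle + \tfrac{L\alpha_t^2 C^2}{2}.
\end{equation*}
Since $\bgt = \nabla F(\bxt) + \wbzt$ and $\bPhi$ is precisely the conditional expectation of $\bPsi(\bgt)$, the sequence $\{\bet\}$ is a bounded ($\|\bet\| \leq 2C$) martingale difference sequence. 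I would then control $-\sum_{k=1}^t\alpha_k\langle \nabla F(\bxk), \bek\rangle$ in high probability exactly as in the proof of Theorem \ref{thm:n-sgd}: a conditional application of Hoeffding's lemma produces a variance proxy of order $\alpha_k^2 C^2\|\nabla F(\bxk)\|^2$, and subtracting this offset turns the exponentiated partial sums into a supermartingale for a constant multiplier, so that a Chernoff bound gives, with probability at least $1-\delta$, $-\sum_{k=1}^t\alpha_k\langle \nabla F(\bxk), \bek\rangle \leq \mathcal{O}(\log(\nicefrac{1}{\delta})) + \mathcal{O}\big(\sum_{k=1}^t\alpha_k^2 C^2\|\nabla F(\bxk)\|^2\big)$.

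To close the argument, write $g_k \triangleq \min\{\|\nabla F(\bxk)\|, \|\nabla F(\bxk)\|^2\}$ and $\gamma \triangleq \min\{\gamma_1,\gamma_2\}$; the task is to absorb the quadratic offset into the descent term $\gamma\alpha_k g_k$ (using $\min\{\gamma_1 a,\gamma_2 a^2\} \geq \gamma\min\{a,a^2\}$). A key preliminary observation is that $\|\bxtp - \bxt\| \leq \alpha_t C$ together with $L$-smoothness forces $\|\nabla F(\bxt)\| \leq \widetilde{\nabla} + LC\sum_{k < t}\alpha_k$, so $\alpha_t\|\nabla F(\bxt)\|$ stays uniformly bounded, and the step-size condition $a \leq \min\{1, \tfrac{\gamma_2}{8C^2}, \tfrac{(1-\eta)\gamma_1}{8C^2(LC+(1-\eta)\widetilde{\nabla})}\}$ is calibrated precisely so that $C^2\alpha_t\|\nabla F(\bxt)\|$ never exceeds $\gamma_1/8$. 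With a two-case split ($\|\nabla F(\bxk)\| \leq 1$ versus $> 1$) one then checks that $\gamma\alpha_k g_k - \mathcal{O}(\alpha_k^2 C^2\|\nabla F(\bxk)\|^2) \geq \tfrac{\gamma}{2}\alpha_k g_k$ for every $k$; telescoping what remains, using $\sum_{k=1}^t\alpha_k g_k \geq \alpha_t\sum_{k=1}^t g_k$ since $\alpha_k$ is non-increasing, and substituting $\alpha_k = a/(k+1)^{\eta}$ yields $\frac{1}{t}\sum_{k=1}^t g_k = \mathcal{O}\big(t^{\eta-1}(\log(\nicefrac{1}{\delta}) + \sum_{k=1}^t(k+1)^{-2\eta})\big)$. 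The hard part will be this last high-probability step: reconciling the possibly unbounded $\|\nabla F(\bxk)\|$ with the bounded effective noise requires both the a priori growth bound on the gradient and the careful case analysis of $\min\{\|\nabla F(\bxk)\|,\|\nabla F(\bxk)\|^2\} - \alpha_k\|\nabla F(\bxk)\|^2$, so that the offset is absorbed without destroying the $\min$ structure — everything else is a routine adaptation of the \nsgd analysis via Lemma \ref{lm:sge-symmetrize}.
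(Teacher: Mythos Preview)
Your proposal is correct and follows essentially the same route as the paper: invoke Lemma \ref{lm:sge-symmetrize} to reduce \nsge to the IID symmetric-noise regime, apply Lemma \ref{lm:key-iid} (rather than Lemma \ref{lm:key-unified-state}) to obtain the $\gamma_1,\gamma_2$ monotonicity bound, and then rerun the proof of Theorem \ref{thm:n-sgd} verbatim with $\beta_1,\beta_2,\beta$ replaced by $\gamma_1,\gamma_2,\gamma$. The only cosmetic difference is that the paper performs the case split at the threshold $\gamma_1/\gamma_2$ (where $G_k=\min\{\gamma_1\|\nabla F(\bxk)\|,\gamma_2\|\nabla F(\bxk)\|^2\}$ switches branches) rather than at $1$, but your version works equally well once you use $G_k\geq\gamma g_k$.
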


The full dependence on problem constants, omitted for brevity, is the same as in Theorem \ref{thm:n-sgd}, with $\beta$ replaced by $\gamma \triangleq \min\{\gamma_1,\gamma_2\}$. Results of Theorem \ref{thm:n-sge} are strong - if a noiseless gradient at an arbitrary reference point is available at the start of training, then \nsge matches the rate $\widetilde{\mathcal{O}}(t^{-1/2})$ of \nsgd, for any heavy-tailed noise satisfying Assumption \ref{asmpt:non-sym}. Theorem \ref{thm:n-sge} can be used to derive counterparts of Corollaries \ref{cor:standard-metric}, \ref{cor:optimal}, implying the same $\mathcal{O}(\epsilon^{-4})$ complexity for reaching an $\epsilon$-stationary point, extended to noises beyond symmetric ones. {Finally, we note that, while it is sometimes possible to obtain the full gradient of $F$ at a single point (e.g., if the cost $F$ has finite-sum structure), the primary significance of the \nsge method is of theoretical nature, in the sense that \sge serves to introduce the idea of symmetrization and underline the strong benefits this approach brings.}

On the other hand, if we do not have access to a noiseless gradient at the start of training, we can use the \nmsge method. Recall that the estimator \msge is given by $\bgt = \nabla F(\bxt) + \wbzt + \obzt$, where $\wbzt = \bz^t_1 - \bz^t_2$ is the symmetrized, while $\obzt = \frac{1}{B_t}\sum_{j = 3}^{B_t+2}\bztj$ is the potentially non-symmetric noise component. While Lemma \ref{lm:sge-symmetrize} holds true for $\wbzt$, we need to handle the potentially non-symmetric noise component $\obzt$. To that end, we define $\bPhi(\nabla F(\bxt)) \triangleq E_{\wbzt}[\bPsi(\nabla F(\bxt) + \wbzt)]$ and $\obPhi(\nabla F(\bxt)) \triangleq E_{\wbzt,\obzt}[\bPsi(\nabla F(\bxt) + \wbzt + \obzt)]$. Here, $\bPhi$ is an ``idealized'' denoised nonlinearity, assuming that the noise is fully symmetric, while $\obPhi$ is the denoised nonlinearity with respect to the entire noise. Using these two quantities, we can rewrite the \nmsge update rule \eqref{eq:nmsge} as follows 
\begin{equation*}
    \bxtp = \bxt - \alpha_t\bPhi(\nabla F(\bxt)) - \alpha_t\bet - \alpha_t\brt,
\end{equation*} where $\bet \triangleq \bPsi(\nabla F(\bxt) + \wbzt + \obzt) - \obPhi(\nabla F(\bxt))$ is the effective noise, while $\brt \triangleq \obPhi(\nabla F(\bxt)) - \bPhi(\nabla F(\bxt))$ represents the difference between the ideal and true denoised nonlinearity. While we can use Lemma \ref{lm:key-iid} and again expect the effective noise $\bet$ to be well-behaved, we additionally need to control the quantity $\brt$. To that end, we have the following result.

\begin{lemma}\label{lm:gap-control}
    Let Assumptions \ref{asmpt:non-sym-p-moment} and \ref{asmpt:nonlin-nmsge} hold. Then, for any $t \in \N$, we have $\|\brt\| \leq 2\sqrt{2d}\sigma K B_t^{\frac{1-p}{p}}$, almost surely.
\end{lemma}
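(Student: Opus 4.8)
The plan is to establish the stronger, purely deterministic bound $\|\obPhi(\by) - \bPhi(\by)\| \le 2\sqrt{2}\,\sigma K B_t^{\frac{1-p}{p}}$ valid for \emph{every} fixed $\by \in \R^d$, from which the lemma follows by setting $\by = \nabla F(\bxt)$, since all randomness in $\brt$ enters only through that argument. First I would use that, under Assumption \ref{asmpt:non-sym-p-moment}, the symmetrized noise $\wbzt = \bz^t_1 - \bz^t_2$ and the mini-batch average $\obzt = \frac{1}{B_t}\sum_{j=3}^{B_t+2}\bztj$ are independent, being functions of disjoint subsets of the IID family $\{\bztj\}_{j=1}^{B_t+2}$. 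By the tower property this gives
\begin{equation*}
\obPhi(\by) - \bPhi(\by) = \E_{\wbzt}\!\left[\E_{\obzt}\!\left[\bPsi(\by + \wbzt + \obzt) - \bPsi(\by + \wbzt)\right]\right],
\end{equation*}
and applying Jensen's inequality to the convex norm twice yields $\|\obPhi(\by) - \bPhi(\by)\| \le \E_{\wbzt,\obzt}\big\|\bPsi(\by + \wbzt + \obzt) - \bPsi(\by + \wbzt)\big\|$.

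Second, I would invoke the global $K$-Lipschitz continuity of $\bPsi$ that is built into Assumption \ref{asmpt:nonlin-nmsge}: for a component-wise map, $\|\bPsi(\bu) - \bPsi(\mathbf{v})\|^2 = \sum_i (\psi(u_i) - \psi(v_i))^2 \le K_1^2\|\bu - \mathbf{v}\|^2$ since $|\psi'| \le K_1$; for a joint map $\bPsi(\bx) = \bx\varphi(\|\bx\|)$, the Jacobian equals $\varphi'(\|\bx\|)\frac{\bx\bx^\top}{\|\bx\|} + \varphi(\|\bx\|)I$, whose operator norm is at most $K_3$, so the mean value inequality gives $K_3$-Lipschitzness (the origin being handled by continuity of $\bPsi$). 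Hence $\|\bPsi(\by + \wbzt + \obzt) - \bPsi(\by + \wbzt)\| \le K\|\obzt\|$, and — crucially — this bound no longer involves $\by$ or $\wbzt$, so $\|\obPhi(\by) - \bPhi(\by)\| \le K\,\E\|\obzt\|$.

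It then remains to control $\E\|\obzt\|$. Since $p \ge 1$, Jensen ($L^1$-norm $\le L^p$-norm) gives $\E\|\obzt\| \le (\E\|\obzt\|^p)^{1/p}$, and because $\obzt$ is the average of $B_t$ IID zero-mean vectors (zero-mean by Assumption \ref{asmpt:non-sym-p-moment}) with $\E\|\bztj\|^p \le \sigma^p$, a standard dimension-free moment inequality for sums of independent mean-zero random vectors valid for $p \in (1,2]$ (von Bahr--Esseen / Marcinkiewicz--Zygmund type, $\E\|\sum_j \bztj\|^p \le c_p \sum_j \E\|\bztj\|^p$) yields $\E\|\obzt\|^p \le c_p\,\sigma^p\,B_t^{1-p}$, hence $\E\|\obzt\| \le c_p^{1/p}\sigma B_t^{\frac{1-p}{p}}$. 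Combining the three displays and absorbing the numerical constant — $K c_p^{1/p} \le 2\sqrt{2}\,K$ holds for the usual versions of the inequality, e.g. $c_p = 2^{2-p}$ — gives the stated bound.

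I expect the only genuinely delicate point to be the choice of moment inequality: one must use a vector-valued (Hilbert-space) version so that no dimension factor enters the bound, and one should note that the first-order Lipschitz estimate, rather than a second-order Taylor expansion exploiting $\E[\obzt] = \mathbf{0}$, is \emph{forced} here — for $p < 2$ the second moment $\E\|\obzt\|^2$ may be infinite, so the decay in $B_t$ must come entirely from the $p$-th-moment law-of-large-numbers rate $B_t^{1-p}$. Everything else is routine bookkeeping of constants.
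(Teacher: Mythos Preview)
Your proof is correct and follows essentially the same two-step skeleton as the paper: first reduce $\|\brt\|$ to $K\,\E\|\obzt\|$, then control $\E\|\obzt\|$ by a $p$-th-moment inequality for sums of independent zero-mean vectors. The only genuine differences are in the technical execution of each step. For the first step, the paper applies the mean value theorem to the denoised map $\bPhi$ (writing $\bPhi(\nabla F(\bxt)+\obzt)-\bPhi(\nabla F(\bxt)) = \nabla\bPhi(\by^t)\,\obzt$ and invoking the Jacobian bound), whereas you apply the $K$-Lipschitz property of $\bPsi$ directly and average out $\wbzt$; your route is slightly more elementary since it sidesteps the need to justify differentiability of $\bPhi$ under the integral sign, which the paper relegates to a footnote. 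For the second step, the paper quotes a specific lemma from \cite{liu2025nonconvex} (their Lemma 4.3) that gives the constant $2\sqrt{2}$ directly on $\E\|\sum\bz^s\|$, while you go through $\E\|\obzt\|\le(\E\|\obzt\|^p)^{1/p}$ and a von Bahr--Esseen/Marcinkiewicz--Zygmund bound; both produce constants no larger than $2\sqrt{2}$. Your remark that a second-order Taylor expansion is unavailable for $p<2$ is well taken, though note that the paper's first-order mean value step is also a Lipschitz-type bound, not a Taylor expansion exploiting $\E[\obzt]=\mathbf{0}$.
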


Lemma \ref{lm:gap-control} shows that we can control the gap between the ideal and true denoised nonlinearity in terms of the mini-batch size. Using this result, we then have the following guarantee on the convergence of \nmsge. 

\begin{theorem}\label{thm:n-msge}
    Let Assumptions \ref{asmpt:L-smooth}, \ref{asmpt:non-sym-p-moment} and \ref{asmpt:nonlin-nmsge} hold and let $\{ \bxt\}_{t \in \N}$ be the sequence of models generated by \nmsge, with step-size schedule $\alpha_t = \frac{a}{(t+1)^{\eta}}$, for any $\eta \in [1/2,1)$ and batch size $B_t = t^{\frac{p}{2(p-1)}}$. If the step-size parameter satisfies $a \leq \min\left\{1, \frac{\gamma_2}{1 + 8C_1^2}, \frac{(1-\eta)\gamma_1}{(1+8C_1^2)(LC_1 + (1-\eta)\widetilde{\nabla})}\right\}$, then for any $t \in \N$ and any $\delta \in (0,1)$, with probability at least $1 - \delta$, we have 
    \begin{equation*}
        \frac{1}{t}\sum_{k = 1}^t\min\{\|\nabla F(\bxk)\|,\|\nabla F(\bxk)\|^2\} = \mathcal{O}\left(t^{\eta-1}\Big(\log(\nicefrac{1}{\delta}) + \log(t) + \sum_{k = 1}^t(k+1)^{-2\eta} \Big)\right).    
    \end{equation*}
\end{theorem}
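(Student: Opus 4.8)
The plan is to port the moment-generating-function (MGF) argument behind Theorem~\ref{thm:n-sgd} to the decomposition $\bxtp = \bxt - \alpha_t\bPhi(\nabla F(\bxt)) - \alpha_t\bet - \alpha_t\brt$ from Section~\ref{subsec:non-symmetric}, where $\bet = \bPsi(\nabla F(\bxt)+\wbzt+\obzt) - \obPhi(\nabla F(\bxt))$ is conditionally mean-zero with $\|\bet\| \le 2C$ (since $\bPsi$, hence $\obPhi$, are uniformly bounded by $C$), while $\brt = \obPhi(\nabla F(\bxt)) - \bPhi(\nabla F(\bxt))$ is a deterministic bias. First I would record the monotonicity input: by Lemma~\ref{lm:sge-symmetrize} the symmetrized noise $\{\wbzt\}_{t\in\N}$ is IID with PDF symmetric and positive around the origin, and Assumption~\ref{asmpt:nonlin-nmsge} implies Assumption~\ref{asmpt:nonlin}, so Lemma~\ref{lm:key-iid} applied to $\bPhi(\by) = \E_{\wbzt}[\bPsi(\by+\wbzt)]$ gives $\langle\bPhi(\nabla F(\bxk)),\nabla F(\bxk)\rangle \ge \min\{\gamma_1\|\nabla F(\bxk)\|,\gamma_2\|\nabla F(\bxk)\|^2\}$. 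Feeding the decomposition into the $L$-smoothness inequality and bounding $\|\bPsi(\bgt)\| \le C$ then yields, for each $k$,
\begin{equation*}
F(\bx^{k+1}) \le F(\bxk) - \alpha_k\min\{\gamma_1\|\nabla F(\bxk)\|,\gamma_2\|\nabla F(\bxk)\|^2\} - \alpha_k\langle\nabla F(\bxk),\bek\rangle + \alpha_k\|\nabla F(\bxk)\|\,\|\brk\| + \tfrac{L\alpha_k^2C^2}{2}.
\end{equation*}

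Next I would control the gradient-dependent terms. Because $\|\bPsi\| \le C$ the iterates move by at most $\alpha_k C$ per step, so $L$-smoothness gives $\|\nabla F(\bxk)\| \le \widetilde{\nabla} + LC\sum_{s<k}\alpha_s = \mathcal{O}(k^{1-\eta})$; combined with $\eta \ge 1/2$ and the assumed bound on $a$ (whose last term involves $\widetilde{\nabla}$ precisely for this purpose), this keeps $\alpha_k\|\nabla F(\bxk)\|$ below a fixed fraction, which drives the case-based analysis: on $\{\|\nabla F(\bxk)\|\le1\}$ we have $\min\{\|\nabla F(\bxk)\|,\|\nabla F(\bxk)\|^2\} = \|\nabla F(\bxk)\|^2$, and on $\{\|\nabla F(\bxk)\|>1\}$ we have $\min = \|\nabla F(\bxk)\|$ together with $\alpha_k\|\nabla F(\bxk)\|^2 \le (\mathrm{const})\,\alpha_k\|\nabla F(\bxk)\|$. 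For the bias I would invoke Lemma~\ref{lm:gap-control} with $B_k = k^{\frac{p}{2(p-1)}}$, which gives $\|\brk\| \le 2\sqrt2\,\sigma K B_k^{\frac{1-p}{p}} = 2\sqrt2\,\sigma K\,k^{-1/2}$; a case split plus Young's inequality then absorb a constant fraction of $\alpha_k\|\nabla F(\bxk)\|\,\|\brk\|$ into the descent term, leaving a deterministic residual of order $\sum_{k\le t}\alpha_k k^{-1/2}$, which is $\mathcal{O}(\log t)$ for $\eta = 1/2$ and $\mathcal{O}(1)$ otherwise — exactly the extra $\log t$ in the statement relative to Theorems~\ref{thm:n-sgd} and~\ref{thm:n-sge}.

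It remains to handle the martingale-difference sequence $\{-\alpha_k\langle\nabla F(\bxk),\bek\rangle\}$, for which $|-\alpha_k\langle\nabla F(\bxk),\bek\rangle| \le 2C\alpha_k\|\nabla F(\bxk)\|$. Following the offset idea of \cite{liu2023high} used for Theorem~\ref{thm:n-sgd}, I would add an offset proportional to $\lambda\alpha_k^2\|\nabla F(\bxk)\|^2$ so that Hoeffding's lemma cancels the sub-Gaussian MGF contribution of this term; by the case-based analysis $\alpha_k^2\|\nabla F(\bxk)\|^2 \le (\mathrm{const})\,\alpha_k\min\{\|\nabla F(\bxk)\|,\|\nabla F(\bxk)\|^2\}$, so for $\lambda$ small enough the offset is itself dominated by a constant fraction of the descent term. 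Rearranging the per-step bound, summing over $k=1,\dots,t$, telescoping the $F$-differences into $\Delta = F(\bx^1) - F^\star$, exponentiating, and peeling off conditional expectations shows that the induced process is a supermartingale, giving an MGF bound $\le 1$; a Chernoff/Markov step then yields, with probability $\ge 1-\delta$, $\sum_{k\le t}\alpha_k\min\{\|\nabla F(\bxk)\|,\|\nabla F(\bxk)\|^2\} = \mathcal{O}\big(\log(1/\delta) + \Delta + \sum_{k\le t}(k+1)^{-2\eta} + \log t\big)$, and dividing by $t$ while using $\alpha_k \ge a(t+1)^{-\eta}$ for $k\le t$ produces the claimed rate.

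The step I expect to be the main obstacle is the control of the deterministic bias $\brt$: unlike $\bet$ it cannot be folded into the exponential supermartingale, so it must be balanced purely deterministically against a descent term that is only $\min\{\|\nabla F(\bxk)\|,\|\nabla F(\bxk)\|^2\}$ rather than $\|\nabla F(\bxk)\|^2$ — this is what simultaneously forces the case-based argument and pins down the batch schedule $B_k = k^{\frac{p}{2(p-1)}}$ as the smallest choice making $\|\brk\| = \mathcal{O}(k^{-1/2})$, just fast enough for the residual $\sum_k\alpha_k k^{-1/2}$ to remain (poly-)logarithmic. A secondary, more routine point is verifying that Lemma~\ref{lm:key-iid} and the offset/case-based machinery of Theorem~\ref{thm:n-sgd} carry over verbatim once the noise is replaced by $\wbzt$ and the bias term $\brt$ is propagated through the estimates.
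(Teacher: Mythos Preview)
Your proposal is correct and follows essentially the same route as the paper: the decomposition into $\bPhit,\bet,\brt$, the monotonicity from Lemma~\ref{lm:key-iid} via Lemma~\ref{lm:sge-symmetrize}, the offset-MGF/Markov argument for $\bet$, Lemma~\ref{lm:gap-control} for $\brt$, and the case-based lower bound on $G_k$ minus the offset are all exactly what the paper does. The one tactical difference is in the bias step: rather than splitting $\alpha_k\|\nabla F(\bxk)\|\|\brk\|$ by cases before the MGF, the paper keeps $\langle\nabla F(\bxk),\brk\rangle$ inside $Z_t$ (it is $\mathcal F_k$-measurable, so it factors out of the conditional sub-Gaussian peeling), and only after Markov applies Young's inequality with parameter $\epsilon=\alpha_k$ to get $-\alpha_k\langle\nabla F(\bxk),\brk\rangle \le \tfrac{\alpha_k^2}{2}\|\nabla F(\bxk)\|^2 + \tfrac12\|\brk\|^2$; the first piece has exactly the same form as the existing offset $4C^2\alpha_k^2\|\nabla F(\bxk)\|^2$ and simply changes the constant $8C^2\mapsto 8C^2+1$ in the step-size condition, while the second piece sums to $\sum_k\|\brk\|^2 = \mathcal O(\log t)$ --- this avoids the extra case split you describe and is precisely where the $1+8C_1^2$ in the statement comes from.
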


Theorem \ref{thm:n-msge} provides convergence guarantees for \nmsge. Compared to Theorems \ref{thm:n-sgd}, \ref{thm:n-sge}, the expression on the right-hand side of Theorem \ref{thm:n-msge} contains an additive $\log(t)$ term, which stems from the non-symmetric noise component and the bound in Lemma \ref{lm:gap-control}. Noting that the optimal choice of $\eta$ is given by $\eta = 1/2$, we immediately get the following corollary.

\begin{corollary}\label{cor:standard-p-th}
    Let conditions of Theorem \ref{thm:n-msge} hold, with $\eta = 1/2$. Then, for any $\delta \in (0,1)$, {there exists a problem related constant $R_\delta > 0$, such that, for any $t \geq R^2_\delta$}, with probability at least $1 - \delta$, we have 
    \begin{equation*}
        \min_{k \in [t]}\|\nabla F(\bxk)\|^2 = \mathcal{O}\left(\frac{\log(\nicefrac{t}{\delta})}{\sqrt{t}} \right).
    \end{equation*}
\end{corollary}

Corollary \ref{cor:standard-p-th} shows that \nmsge is guaranteed to achieve the same convergence rate as \nsgd and \nmsge. However, this comes at the expense of using an increasing batch size, whereas both \nsgd and \nmsge use a constant batch size. Combining the batch size requirement from Theorem \ref{thm:n-msge} and Corollary \ref{cor:standard-p-th}, we immediately get the following result on the overall oracle complexity incurred by the \nmsge method.

\begin{corollary}
    Let conditions of Theorem \ref{thm:n-msge} hold, with $\eta = 1/2$. Then, the oracle complexity of \nmsge for reaching an $\epsilon$-stationary point is of the order $\mathcal{O}\left(\epsilon^{-\frac{6p-4}{p-1}} \right)$, for any $\epsilon > 0$.
\end{corollary}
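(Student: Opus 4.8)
The plan is to combine the iteration complexity implied by Corollary~\ref{cor:standard-p-th} with the per-iteration oracle cost of \msge, and then do the bookkeeping. First, I would fix a target accuracy $\epsilon>0$ and use Corollary~\ref{cor:standard-p-th} to decide how many iterations $T$ are needed: with $\eta=1/2$ it gives a constant $c>0$ (depending only on the problem parameters appearing in Theorem~\ref{thm:n-msge}) such that, with probability at least $1-\delta$, $\min_{k\in[T]}\|\nabla F(\bxk)\|^2\le c\,\log(T/\delta)/\sqrt{T}$. Hence it suffices to take $T$ large enough that $c\,\log(T/\delta)/\sqrt{T}\le\epsilon^2$, and a standard resolution of this transcendental inequality (checking that $T=C\,\epsilon^{-4}\log^2(1/(\epsilon\delta))$ works for an absolute constant $C$) yields $T=\widetilde{\mathcal{O}}(\epsilon^{-4})$ iterations.

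Second, I would account for the oracle calls. By the definition of \msge, iteration $t$ makes one \sfo query at $\bxt$ and $B_t+1$ queries at the reference point, i.e. $B_t+2$ calls; summing over $t$, the total oracle complexity after $T$ iterations is $N\triangleq\sum_{t=1}^{T}(B_t+2)$. Substituting $B_t=t^{\frac{p}{2(p-1)}}$ and writing $a\triangleq\frac{p}{2(p-1)}$ (note $a\ge1$, hence $a>-1$, for all $p\in(1,2]$), I would bound the sum by an integral: $\sum_{t=1}^{T}t^{a}\le\int_{1}^{T+1}s^{a}ds\le\frac{(T+1)^{a+1}}{a+1}=\mathcal{O}(T^{a+1})$, and since $B_t\ge1$ the additive $2T$ is dominated, so $N=\mathcal{O}(T^{a+1})$.

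Third, I would substitute $T=\widetilde{\mathcal{O}}(\epsilon^{-4})$ and simplify the exponent. Since $N=\widetilde{\mathcal{O}}(\epsilon^{-4(a+1)})$ and
\[
4(a+1)=4+4a=4+\frac{2p}{p-1}=\frac{4(p-1)+2p}{p-1}=\frac{6p-4}{p-1},
\]
we obtain $N=\mathcal{O}\big(\epsilon^{-\frac{6p-4}{p-1}}\big)$ after absorbing the poly-logarithmic factors in $\epsilon$ (and treating $\delta$ as fixed) into the $\mathcal{O}$, which is the claimed complexity.

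I expect the whole argument to be essentially mechanical. The only step that needs a little care is the first one — inverting the rate $\log(T/\delta)/\sqrt{T}\lesssim\epsilon^2$ into the clean bound $T=\widetilde{\mathcal{O}}(\epsilon^{-4})$ — but this is a routine transcendental-inequality estimate; the rest is just summing the increasing batch sizes $B_t$, which is where the dominant $T^{a+1}$ contribution to $N$ comes from. Minor rounding (using $\lceil t^{a}\rceil$ for integrality of $B_t$) only affects constants.
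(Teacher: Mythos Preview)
Your proposal is correct and matches the paper's own reasoning: the paper simply states that the result follows ``immediately'' by combining the iteration complexity $T=\mathcal{O}(\epsilon^{-4})$ from Corollary~\ref{cor:standard-p-th} with the batch size $B_t=t^{\frac{p}{2(p-1)}}$ from Theorem~\ref{thm:n-msge}, which is exactly the two-step computation you carry out (and your exponent arithmetic $4(a+1)=\frac{6p-4}{p-1}$ is correct). The paper, like you, suppresses logarithmic factors in the final $\mathcal{O}$-statement.
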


\section{Insights and Discussion}\label{sec:insight}

In this section we provide some intuition behind our improved results, as well as a discussion on the impact of our results and comparison with state-of-the-art (SOTA) results under heavy-tailed noise.

\paragraph{Proof sketch of Theorem \ref{thm:n-sgd}} To provide some intuition behind our approach and insight into our improved results, we give a proof sketch of Theorem \ref{thm:n-sgd}. To that end, let $\bPsi^{t} = \boldsymbol{\Psi}(\nabla f(\bxt,\xit))$ and $\bPhit = \E_{\bxt}\left[\bPsit \right] = \E\left[\bPsit \: \vert \: \bxt \right]$, with $\bet = \bPsit - \bPhit$ denoting the \emph{effective noise}, which allows us to move the stochasticity ``outside'' of the nonlinearity. Using $L$-smoothness of $F$ and update \eqref{eq:nsgd}, we get
\begin{align*}
    F(\bx^{t+1}) &\leq F(\bxt) - \alpha_t\langle \nabla F(\bxt), \bPsit \rangle + \frac{\alpha_t^2L}{2}\|\bPsit\|^2 \\
    &\leq F(\bxt) - \alpha_t\langle \nabla F(\bxt), \bPhit \rangle - \alpha_t\langle \nabla F(\bxt), \bet \rangle + \frac{\alpha_t^2LC^2}{2},
\end{align*} where in the second inequality we added and subtracted $\bPhit$ and used the uniform boundedness of $\bPsi$. Using Lemma \ref{lm:key-unified-state} to bound the first inner product, we get
\begin{align*}
    F(\bx^{t+1}) &\leq F(\bxt) - \alpha_t\min\{\beta_1\|\nabla F(\bxt)\|,\beta_2\|\nabla F(\bxt)\|^2\} - \alpha_t\langle \nabla F(\bxt), \bet \rangle + \frac{\alpha_t^2LC^2}{2},
\end{align*} Define $G_t = \min\left\{\beta_1\|\nabla F(\bxt)\|,\beta_2\|\nabla F(\bxt)\|^2 \right\}$, sum the first $t$ terms and rearrange, to get
\begin{equation}\label{eq:isolating-noise}
    \sum_{k = 1}^t\alpha_k G_k - \Delta - \frac{LC^2}{2}\sum_{k = 1}^t\alpha_k^2 \leq  -\sum_{k = 1}^t\alpha_k\langle \nabla F(\bxk), \bek \rangle.
\end{equation} Noting that the effective noise $\bek = \bPsi^k - \bPhi^k$ is unbiased and a difference of two bounded quantities (via Assumption \ref{asmpt:nonlin}), we can readily upper bound the moment-generating function (MGF) of the right-hand side (RHS) of \eqref{eq:isolating-noise} using sub-Gaussian inequalities, which is the approach used in \cite{armacki2023high}. However, this approach leads to the sub-optimal rate $\widetilde{\mathcal{O}}(t^{-1/4})$, for the following reason. Informally, applying the sub-Gaussian property on the MGF of the RHS of \eqref{eq:isolating-noise} iteratively, we get
\begin{equation*}
    \E\left[\exp\left(-\sum_{k = 1}^t\alpha_k\langle \nabla F(\bxk), \bek \rangle \right)\right] \leq \exp\left(4C^2\sum_{k = 1}^t\alpha_k^2\|\nabla F(\bxk)\|^2 \right).
\end{equation*} To bound the RHS, the authors in \cite{armacki2023high} use the update rule \eqref{eq:nsgd} and the fact that $\bPsi$ is uniformly bounded, to show that $\|\nabla F(\bxk)\|^2 = \mathcal{O}(k^{2(1-\eta)})$. This crude upper bound results in the constraint on the step-size parameter $\eta > 2/3$, resulting in sub-optimal rates. To circumvent this issue, we introduce an offset term, by subtracting $4C^2\sum_{k = 1}^t\alpha_k^2\|\nabla F(\bxk)\|^2$ from both sides in \eqref{eq:isolating-noise} and considering the MGF of $Z_t = \sum_{k = 1}^t\alpha_k (G_k - 4\alpha_kC^2\|\nabla F(\bxk)\|^2) - F(\bx^1) + F^\star - \frac{LC^2}{2}\sum_{k = 1}^t\alpha_k^2$, similarly to what was done in \cite{liu2023high}. While this resolves the issue of bounding the MGF, it poses another challenge: $Z_t$ now consists of terms with different orders, namely $G_k - 4\alpha_kC^2\|\nabla F(\bxk)\|^2$, where $G_k = \min\{\beta_1\|\nabla F(\bxk)\|,\beta_2\|\nabla F(\bxk)\|^2 \}$, which stems from the use of the nonlinearity.\footnote{This is very different from \cite{liu2023high}, where the authors consider linear \sgd in the presence of light-tailed noise and the introduction of the offsetting term results in difference of terms of the same order, namely $(1 - \alpha_k)\|\nabla F(\bxk)\|^2$, which can be obviously lower-bounded in terms of $\|\nabla F(\bxk)\|^2$, as $\alpha_k < 1$.} The rest of the proof is dedicated to a careful case-based analysis, showing that the difference of these competing terms can be controlled via a meticulously chosen step-size policy.

\paragraph{Discussion and comparison with SOTA} The results established in our work are strong - for symmetric noise and \nsgd, our improved analysis yields a $\mathcal{O}(\epsilon^{-4})$ oracle complexity for reaching $\epsilon$-stationary points, while for general, potentially non-symmetric noise, given access to a noiseless gradient at the start of training, our novel \nsge method attains the same complexity. {These results match the lower-bound oracle complexity for stochastic first-order methods established in \cite{Arjevani2023} for noises with bounded variance, while we consider the more general case of noises with unbounded moments and some additional structure (symmetric PDF, positive around zero). As such, our high-probability guarantees for a general class of nonlinear first-order methods under heavy-tailed noise match the oracle complexity in high-probability achieved by linear \sgd under noise with light-tails.} Compared to works assuming bounded $p$-th moment \cite{nguyen2023improved,hubler2024normalization,kornilov2025sign}, who require $\mathcal{O}\left(\epsilon^{-\frac{3p-2}{p-1}}\right)$ oracle calls, depending on $p \in (1,2]$ and vanishing as $p \rightarrow 1$, the complexity of \nsgd and \nsge is constant and strictly better whenever $p < 2$, i.e., for any \emph{heavy-tailed noise}. Additionally, our rates require no moment assumptions and are established for a general nonlinear framework, while the rates in \cite{nguyen2023improved,hubler2024normalization,kornilov2025sign} are established for specific nonlinearities. Compared to \cite{armacki2023high}, we provide a sharper rate, while significantly relaxing the noise assumptions, allowing for symmetric state-dependent, as well as non-symmetric noise. To handle the non-symmetric noise, we develop two novel methods, \nsge and \nmsge, showing both achieve strong guarantees.

While the oracle complexity $\mathcal{O}\left(\epsilon^{-\frac{6p-4}{p-1}}\right)$ of \nmsge is sub-optimal compared to \cite{nguyen2023improved,hubler2024normalization,kornilov2025sign}, the sub-optimality stems from the metric considered. In particular, note that the mini-batch size required by \nmsge in terms of the number of iterations, i.e., $B_t = t^{\frac{p}{2(p-1)}}$, exactly matches those required in both \cite{hubler2024normalization,kornilov2025sign}, who use mini-batch estimators for normalized and sign \sgd, respectively. Moreover, both \cite{hubler2024normalization,kornilov2025sign} match the convergence rate $\widetilde{\mathcal{O}}(t^{-1/2})$, achieved by \nmsge. However, the guarantees in \cite{hubler2024normalization,kornilov2025sign} are established in terms of $\frac{1}{t}\sum_{k = 1}^t\|\nabla F(\bxk)\|$ and $\frac{1}{t}\sum_{k = 1}^t\|\nabla F(\bxk)\|_1$, respectively, while our guarantees are established for the metric $\frac{1}{t}\sum_{k = 1}^t\min\{\|\nabla F(\bxk)\|,\|\nabla F(\bxk)\|^2\}$, implying guarantees on the \emph{squared norm}. This minor discrepancy leads to the iteration complexity of $t = \mathcal{O}(\epsilon^{-4})$ of \nmsge, whereas \cite{hubler2024normalization,kornilov2025sign} require $t = \mathcal{O}(\epsilon^{-2})$ iterations. If the guarantees for \nmsge were established in terms of the norm of the gradient, leading to $t = \mathcal{O}(\epsilon^{-2})$ iteration complexity, the total complexity of \nmsge would be $t\sum_{k = 1}^tB_k = \mathcal{O}\left(\epsilon^{-\frac{3p-2}{p-1}}\right)$, matching that from \cite{hubler2024normalization,kornilov2025sign}. However, we remark that the results of Lemma \ref{lm:key-iid} are provided for a general, black-box nonlinear framework, while the results in \cite{hubler2024normalization,kornilov2025sign} rely on the closed-form expression of the nonlinearity used therein.

\section{Conclusion}\label{sec:conclusion}

We provide tight high-probability guarantees for non-convex stochastic optimization in the presence of heavy-tailed noise. In particular, for state-dependent noise with symmetric PDF and the \nsgd family of methods, we establish the $\mathcal{O}(\epsilon^{-4})$ oracle complexity for reaching an $\epsilon$-stationary point, matching the performance of linear \sgd in the presence of light-tailed noise. Next, given access to a noiseless gradient at the start of training, we propose a novel family of methods dubbed \nsge, which also achieves the $\mathcal{O}(\epsilon^{-4})$ complexity for general, possibly non-symmetric noise. Finally, if we do not have access to a noiseless gradient, we propose a family of nonlinear methods dubbed \nmsge, which, for unbiased noises with bounded moment of order $p \in (1,2)$, guarantees the $\mathcal{O}(\epsilon^{-\frac{6p-4}{p-1}})$ oracle complexity. Compared to existing works which assume bounded $p$-th moments, our methods \nsgd and \nsge achieve strictly better oracle complexity whenever $p < 2$, i.e., \emph{for any heavy-tailed noise} and simultaneously relax the moment requirement and provide guarantees for a broader nonlinear framework. While \nmsge achieves a sub-optimal oracle complexity, it extends convergence guarantees to a much broader nonlinear framework compared to existing works. Ideas for future work include providing an estimator with optimal oracle complexity under bounded $p$-th moment, as well as establishing a high-probability oracle  complexity lower bound for reaching an $\epsilon$-stationary point in smooth non-convex optimization in the presence of noise with potentially unbounded moments and symmetric PDF, positive around zero, as in, e.g., \cite{nemirovski1983problem,Arjevani2023,pmlr-v247-carmon24a}.       

\bibliographystyle{ieeetr}
\bibliography{bibliography}

\appendix

\section{Introduction}

The appendix contains results omitted from the main body. Appendix \ref{app:on_noise} shows that the positive Lebesgue measure of $S_{\bx}$ from Assumption \ref{asmpt:non-sym} is guaranteed under a simple condition, Appendix \ref{app:on_nonlin} provides results demonstrating the generality of Assumptions \ref{asmpt:nonlin}-\ref{asmpt:nonlin-nmsge}, Appendix \ref{app:lemma-key} states and proves the full versions of Lemma \ref{lm:key-unified-state} and an intermediary result, Appendix \ref{app:proofs} contains the proofs omitted from the main body, Appendix \ref{app:example} provides further details on computing the constants $\gamma_1,\gamma_2$ and $\beta_1,\beta_2$ for specific instances of noise and nonlinearity, {Appendix \ref{app:bounds} shows how our bounds can be used to inform the choice of nonlinearity best suited to the problem at hand, while Appendix \ref{app:power-law} provides details on the noise from Example \ref{ex:example-1}.}

\section{On Assumption \ref{asmpt:non-sym}}\label{app:on_noise}

In this section we show that the condition on the positive Lebesgue measure of the set $S_{\bx} = \{ \by \in \R^d: \: P(\by) > 0 \text{ and } P(\by-\bx) > 0\}$, for any $\bx \in \R^d$ such that $\|\bx\| \leq E_0$, defined in Assumption \ref{asmpt:non-sym} is satisfied if the PDF $P$ is positive in a slightly larger neighbourhood around the origin. The formal result is stated next.

\begin{lemma}\label{lm:pos-Leb-measure}
    If there exist $0 < E_0 < E_1$, such that $P(\bx) > 0$ for all $\bx \in \R^d$ that satisfy $\|\bx\| \leq E_1$, then the set $S_{\bx} = \{ \by \in \R^d: \: P(\by) > 0 \text{ and } P(\by-\bx) > 0\}$ has positive Lebesgue measure for all $\bx \in \R^d$ that satisfy $\|\bx\| \leq E_0$.
\end{lemma}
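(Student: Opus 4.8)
\textbf{Proof proposal for Lemma \ref{lm:pos-Leb-measure}.} The plan is to exhibit an explicit ball contained in $S_{\bx}$ whose radius is bounded below uniformly over all admissible $\bx$, and then invoke monotonicity of Lebesgue measure. Fix an arbitrary $\bx \in \R^d$ with $\|\bx\| \leq E_0$, and set $r \triangleq E_1 - E_0 > 0$. I claim that the open ball $B(\mathbf{0}, r) = \{\by \in \R^d : \|\by\| < r\}$ is contained in $S_{\bx}$.

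To see the claim, take any $\by$ with $\|\by\| < r = E_1 - E_0$. Then first $\|\by\| < E_1 - E_0 \leq E_1$, so by hypothesis $P(\by) > 0$. Second, by the triangle inequality, $\|\by - \bx\| \leq \|\by\| + \|\bx\| < (E_1 - E_0) + E_0 = E_1$, so again by hypothesis $P(\by - \bx) > 0$. Hence $\by \in S_{\bx}$, which proves $B(\mathbf{0}, r) \subseteq S_{\bx}$.

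Finally, since $\{\bz : P(\bz) > 0\}$ is a Lebesgue-measurable set (as $P$ is a PDF, hence a measurable function) and $S_{\bx}$ is the intersection of this set with its translate by $\bx$, $S_{\bx}$ is measurable; by monotonicity of Lebesgue measure, its measure is at least that of $B(\mathbf{0}, r)$, which equals $\omega_d r^d = \omega_d (E_1 - E_0)^d > 0$, where $\omega_d$ is the volume of the unit ball in $\R^d$. Since $\bx$ was arbitrary subject to $\|\bx\| \leq E_0$, this completes the proof. (One could even avoid invoking measurability of $S_{\bx}$ by working with outer measure, but the measurability remark makes the statement cleaner.)

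There is no real obstacle here: the only thing to get right is the choice of radius $r = E_1 - E_0$, which is exactly what makes the triangle-inequality bound on $\|\by - \bx\|$ land inside the region of positivity; everything else is bookkeeping. It may be worth remarking, as the surrounding text already does, that this gives a convenient sufficient condition but is far from necessary — $S_{\bx}$ can have positive measure even when $P$ vanishes on parts of every neighbourhood of the origin, as long as the positivity sets of $P$ and of its translate $P(\cdot - \bx)$ continue to overlap in a set of positive measure.
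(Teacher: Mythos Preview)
Your proof is correct and follows essentially the same approach as the paper: both fix $\bx$ with $\|\bx\|\le E_0$, set the radius $r=E_1-E_0$, show via the triangle inequality that the ball of that radius about the origin lies in $S_{\bx}$, and conclude. The only cosmetic differences are that the paper uses the closed ball and omits the explicit measurability remark.
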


\begin{proof}
    Let $\bx \in \R^d$ be such that $\|\bx\| \leq E_0$, let $E_2 \triangleq E_1 - E_0 > 0$ and define $\mathcal{B} \triangleq \{\by \in \R^d: \: \|\by\| \leq E_2\}$. Clearly, $\mathcal{B}$ is of positive Lebesgue measure. Next, for any $\by \in \mathcal{B}$, we have $P(\by) > 0$, since $E_2 < E_1$, as well as $\|\by - \bx\| \leq \|\by\| + \|\bx\| \leq E_2 + E_0 = E_1$, therefore $P(\by - \bx) > 0$. We can therefore conclude that $\mathcal{B} \subseteq S_{\bx}$, hence $S_{\bx}$ is itself of positive Lebesgue measure.
\end{proof}

\section{On Assumptions \ref{asmpt:nonlin}-\ref{asmpt:nonlin-nmsge}}\label{app:on_nonlin}

In this section we show the generality of Assumptions \ref{asmpt:nonlin}-\ref{asmpt:nonlin-nmsge}, by formally proving they are satisfied by a broad range of nonlinearities. Notice that Assumptions \ref{asmpt:nonlin}-\ref{asmpt:nonlin-nmsge} have a hierarchical structure, in that Assumption \ref{asmpt:nonlin} subsumes Assumption \ref{asmpt:nonlin-state}, which itself subsumes Assumption \ref{asmpt:nonlin-nmsge}. As such, we start with a formal result on the nonlinearities subsumed by the most ``stringent'' one, namely Assumption \ref{asmpt:nonlin-nmsge}.

\begin{lemma}\label{lm:smooth-nonlinearities}
    Assumption \ref{asmpt:nonlin-nmsge} is satisfied by the following nonlinear mappings:
    \begin{enumerate}
        \item Smooth sign: $[\bPsi(\bx)]_i = \psi(x_i)$, for all $i \in [d]$, where $\psi(x) \triangleq \tanh(x/k)$ and $k > 0$ is a user-specified constant.

        \item Smooth component-wise clipping: $[\bPsi(\bx)]_i = \psi(x_i)$, for all $i \in [d]$, where 
        \begin{equation*}
            \psi(x) \triangleq \begin{cases}
            m\cdot\textnormal{sign}(x), & |x| > m \\
            \frac{5}{8}\left(3x - \frac{2x^3}{m^2} + \frac{3x^5}{5m^4}\right), & |x| \leq m
        \end{cases},
        \end{equation*} and $m > 0$ is a user-specified constant.

        \item Smooth normalization: $\bPsi(\bx) \triangleq \frac{\bx}{\sqrt{\|\bx\|^2 + \epsilon}}$, where $\epsilon > 0$ is a user-specified constant.

        \item Smooth joint clipping: 
        \begin{equation*}
            \bPsi(\bx) \triangleq \begin{cases}
            M\frac{\bx}{\|\bx\|}, & \|\bx\| > M \\
            \frac{3}{2}\left(1 - \frac{\|\bx\|^2}{3M^2}\right)\bx, & \|\bx\| \leq M
        \end{cases},
        \end{equation*} where $M > 0$ is a user-specified constant.
    \end{enumerate}
\end{lemma}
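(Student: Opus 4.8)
The plan is to verify, mapping by mapping, the three numbered conditions of Assumption~\ref{asmpt:nonlin-nmsge}, handling the two component-wise maps (smooth sign, smooth component-wise clipping) through the scalar $\psi$ and the two joint maps (smooth normalization, smooth joint clipping) through the radial profile $\varphi$ appearing in $\bPsi(\bx) = \bx\varphi(\|\bx\|)$. All of this is elementary calculus; the point is just to exhibit the relevant constants $C_1, C_2, K_1, K_2, K_3$.

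For \emph{smooth sign} I would use that $\psi(x) = \tanh(x/k)$ is real-analytic and odd, that $\psi'(x) = \tfrac1k\,\mathrm{sech}^2(x/k) > 0$ gives monotonicity, and that $\psi''(x) = -\tfrac2{k^2}\,\mathrm{sech}^2(x/k)\tanh(x/k)$; the bounds $|\psi| \le 1$, $|\psi'| \le 1/k$, $|\psi''| \le 2/k^2$ drop out of $|\tanh| \le 1$ and $0 < \mathrm{sech}^2 \le 1$. For \emph{smooth component-wise clipping}, writing $q$ for the quintic inner branch, the only thing needing care is $C^2$-regularity at $|x| = m$: I would check $q(m) = m$, $q'(m) = 0$, $q''(m) = 0$, so that $q$ glues $C^2$ to the outer constant $m\,\mathrm{sign}(x)$; oddity is immediate (only odd powers), monotonicity follows by rewriting $q'(x) = \tfrac{15}{8}(1 - x^2/m^2)^2 \ge 0$, and the uniform bounds on $\psi, \psi', \psi''$ come from maximizing $q$ and its two derivatives over $[-m,m]$ (everything outside being constant).

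For the \emph{joint} maps I would set $\varphi(a) = (a^2+\epsilon)^{-1/2}$ (normalization) and $\varphi(a) = \tfrac32 - \tfrac{a^2}{2M^2}$ for $a \le M$, $\varphi(a) = M/a$ for $a > M$ (joint clipping), in the latter case first checking the $C^1$-gluing $\varphi(M) = 1$, $\varphi'(M^-) = \varphi'(M^+) = -1/M$ (continuity/smoothness at $\bx = \mathbf 0$ being clear since near the origin $\bPsi$ is $(\text{smooth scalar})\cdot\bx$). Then I would verify in both cases: $\varphi \in C^1$ on $[0,\infty)$; $\varphi > 0$ and $\varphi' \le 0$ on $(0,\infty)$ (strict positivity and monotonicity); $\tfrac{d}{da}[a\varphi(a)] \ge 0$, which equals $\epsilon(a^2+\epsilon)^{-3/2}$ for normalization and $\tfrac32(1 - a^2/M^2) \ge 0$ on $(0,M)$, $0$ on $(M,\infty)$, for clipping; and $|a\varphi(a)| \le C_2$ with $C_2 = 1$ resp.\ $C_2 = M$. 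The remaining bound $\tnorm{\varphi'(\|\bx\|)\tfrac{\bx\bx^\top}{\|\bx\|} + \varphi(\|\bx\|)I} \le K_3$ I would obtain by noting that this symmetric matrix has eigenvalue $a\varphi'(a) + \varphi(a)$ along $\bx$ and eigenvalue $\varphi(a)$ on the orthogonal complement ($a = \|\bx\|$), so its operator norm is $\max\{|a\varphi'(a) + \varphi(a)|,\, \varphi(a)\}$; substituting $\varphi$ gives $a\varphi'(a) + \varphi(a) = \epsilon(a^2+\epsilon)^{-3/2} \in (0, \epsilon^{-1/2}]$ and $\varphi(a) \le \epsilon^{-1/2}$ (so $K_3 = \epsilon^{-1/2}$) for normalization, and $a\varphi'(a) + \varphi(a) \in [0, \tfrac32]$ with $\varphi(a) \in (0, \tfrac32]$ (so $K_3 = \tfrac32$) for clipping.

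The work is routine and there is no genuine obstacle: the lemma just confirms that the constructed smooth surrogates retain boundedness, oddity, monotonicity, and the correct first/second derivative bounds. The only mildly fiddly parts I anticipate are the $C^2$-/$C^1$-matching of value and derivatives at the breakpoints of the piecewise definitions, and the eigen-decomposition of the matrix in the joint-nonlinearity derivative bound — both dealt with exactly as sketched above.
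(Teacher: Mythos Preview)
Your proposal is correct and follows essentially the same verification-by-cases approach as the paper. The only stylistic differences are that you factor $q'(x) = \tfrac{15}{8}(1 - x^2/m^2)^2$ to read off monotonicity directly (the paper instead argues via the sign of $q''$), and you bound the matrix $\varphi'(\|\bx\|)\tfrac{\bx\bx^\top}{\|\bx\|} + \varphi(\|\bx\|)I$ through its eigenvalues $a\varphi'(a) + \varphi(a)$ and $\varphi(a)$, whereas the paper splits it additively and applies the triangle inequality; both routes land on the same constants $K_3 = \epsilon^{-1/2}$ and $K_3 = \tfrac32$.
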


\begin{proof}
    To prove our claim, we verify that each of the nonlinearities satisfies the conditions of Assumption \ref{asmpt:nonlin-nmsge}.

    \begin{enumerate}
        \item For ease of exposition, let $k = 1$, so that $\psi(x) = \tanh(x)$, noting that the general case of $k > 0$ follows similar arguments. To prove the claim, we will use the following well-known identities: $\tanh(x) = \frac{e^{x} - e^{-x}}{e^x + e^{-x}}$ and $\frac{d}{dx} [\tanh(x)] = 1 - \tanh^2(x)$. By definition, the function $\psi(x) = \tanh(x)$ is continuous. Moreover, using the formula for the derivative, it can readily be verified that $\psi^\prime(x) = 1 - \psi^2(x)$ and $\psi^{\prime\prime}(x) = -2\psi(x)\psi^\prime(x) = -2\psi(x)(1 - \psi^2(x))$, implying $\psi(x)$ is twice continuously differentiable and satisfies the first condition. Next, from the identity $\psi(x) = \frac{e^x - e^{-x}}{e^x + e^{-x}}$, it can be readily seen that $\psi$ is odd and uniformly bounded, i.e., that $\psi(-x) = -\psi(x)$ and $\psi(x) \in (-1,1)$. Using this fact and noting that $\psi^{\prime}(x) = 1 - \psi^2(x) \geq 0$, it follows that $\psi$ is odd and monotonically non-decreasing, satisfying the second property. Finally, from the previous discussion we know that $|\psi(x)| \leq 1$ and $|\psi^\prime(x)| = |1 - \psi^2(x)| \leq 1$, while it is easy to see that $|\psi^{\prime\prime}(x)| \leq 2|\psi(x)||(1 - \psi^2(x))| \leq 2$, verifying the third and final property.

        \item From the definition of smooth component-wise clipping, it can be readily seen that it is twice continuously differentiable on $|x| > m$ and $|x| < m$, with
        \begin{equation*}
            \psi^\prime(x) = \begin{cases}
            0, & |x| > m \\
            \frac{15}{8}\left(1 - \frac{2x^2}{m^2} + \frac{x^4}{m^4} \right), & |x| \leq m
        \end{cases},
        \end{equation*} and
        \begin{equation*}
            \psi^{\prime\prime}(x) = \begin{cases}
            0, & |x| > m \\
            \frac{15}{2}\left(\frac{x^3}{m^4} - \frac{x}{m^2} \right), & |x| \leq m
        \end{cases},
        \end{equation*} therefore it suffices to check if $\psi$ and its first and second derivatives are continuous at $x = \pm m$. This is clearly the case, as it can be readily verified that $\psi(\pm m) = \pm m$ and $\psi^\prime(\pm m) = \psi^{\prime\prime}(\pm m) = 0$, confirming the first property. Next, $\psi$ is clearly odd, as it is a polynomial of odd degrees on $|x| \leq m$ and proportional to the sign of the input on $|x|>m$. Moreover, noticing that the second derivative of $\psi$ is non-positive and non-negative on $x \in [0,m]$ and $x \in [-m,0]$, respectively, it readily follows that $\psi^\prime(x) \geq \psi^\prime(m) = 0$ on $x \in [0,m]$ and $\psi^\prime(x) \geq \psi^\prime(-m) = 0$ on $x \in [-m,0]$, implying that $\psi$ is non-decreasing, satisfying the second property. Finally, it is easy to see that $|\psi(x)| \leq m$, $|\psi^\prime(x)| \leq \frac{15}{8}$ and $|\psi^{\prime\prime}(x)| \leq \frac{5\sqrt{3}}{3m}$, proving the third property.

        \item Note that smooth normalization can be represented as $\bPsi(\bx) = \bx\varphi(\|\bx\|)$, where $\varphi(x) \triangleq \frac{1}{\sqrt{x^2 +\epsilon}}$, for any user-specified $\epsilon > 0$. Clearly, $\varphi$ is continuously differentiable, with the first derivative given by $\varphi^\prime(x) = -\frac{x}{(x^2+\epsilon)^{3/2}}$. Moreover, the mapping $h(x) = x\varphi(x)$ has a strictly positive first derivative, given by $h^\prime(x) = \frac{\epsilon}{(x^2+\epsilon)^{3/2}}$, satisfying the first property. Next, it is clear from the definitions of $\varphi$ and $\varphi^\prime$ that $\varphi$ is non-increasing and strictly positive on $(0,\infty)$, satisfying the second property. Finally, it can be easily seen that $\|\bx\varphi(\|\bx\|)\| = \frac{\|\bx\|}{\sqrt{\|\bx\|^2 + \epsilon}} \leq 1$ and that
        \begin{align*}
            \tnorm{\varphi^\prime(\|\bx\|)\frac{\bx\bx^\top}{\|\bx\|} + \varphi(\|\bx\|)I } &= \tnorm{\frac{\|\bx\|^2I - \bx\bx^\top}{(\|\bx\|^2 + \epsilon)^{3/2}} + \frac{\epsilon I}{(\|\bx\|^2 + \epsilon)^{3/2}}} \\
            &\leq \frac{\|\bx\|^2}{(\|\bx\|^2+\epsilon)^{3/2}} + \frac{\epsilon}{(\|\bx\|^2 + \epsilon)^{3/2}} \leq \frac{1}{\sqrt{\epsilon}},
        \end{align*} showing that smooth normalization satisfies all the required properties.
        
        \item Note that smooth clipping can be represented as $\bPsi(\bx) = \bx\varphi(\|\bx\|)$, where 
        \begin{equation*}
            \varphi(x) \triangleq \begin{cases}
            \frac{M}{|x|}, & |x| > M \\
            \frac{3}{2}\left(1 - \frac{x^2}{3M^2}\right), & |x| \leq M
        \end{cases},
        \end{equation*} for any user-specified $M > 0$. Clearly, $\varphi$ is continuously differentiable, with 
        \begin{equation*}
            \varphi^\prime(x) = \begin{cases}
            -\frac{M}{x|x|}, & |x| > M \\
            -\frac{x}{M^2}, & |x| \leq M
        \end{cases}.
        \end{equation*} Moreover, the first derivative of the mapping $h(x) = x\varphi(x)$ is given by 
        \begin{equation*}
            h^\prime(x) = \begin{cases}
            0, & |x| > M \\
            \frac{3}{2}\left(1 - \frac{x^2}{M^2}\right), & |x| \leq M
        \end{cases},    
        \end{equation*} which is clearly non-negative, satisfying the first property. Next, we can easily see from the definitions of $\varphi$ and $\varphi^\prime$ that $\varphi$ is strictly positive and non-increasing on $(0,\infty)$, satisfying the second property. Finally, it can be seen that $\|\bx\varphi(\|\bx\|)\| \leq M$ and that 
        \begin{align*}
            \tnorm{\varphi^\prime(\|\bx\|)\frac{\bx\bx^\top}{\|\bx\|} + \varphi(\|\bx\|)I } &= \begin{cases}
                \frac{M}{\|\bx\|^3}\tnorm{\|\bx\|^2I - \bx\bx^\top}, & \|\bx\| > M \\
                \tnorm{\frac{3}{2}\left(1 - \frac{\|\bx\|^2}{M^2}\right)I + \frac{\|\bx\|^2I - \bx\bx^\top}{M^2}}, & \|\bx\| \leq M
            \end{cases} \\ 
            &\leq \begin{cases}
                \frac{M}{\|\bx\|}, & \|\bx\| > M \\
                \frac{3}{2}\left(1 - \frac{\|\bx\|^2}{M^2}\right) + \frac{\|\bx\|^2}{M^2}, & \|\bx\| \leq M
            \end{cases} \leq \frac{3}{2},
        \end{align*} completing the proof.
    \end{enumerate}
\end{proof}

Next, note that by design, Assumption \ref{asmpt:nonlin-state} is satisfied by all nonlinearities satisfying Assumption \ref{asmpt:nonlin-nmsge}, as well as the joint ones satisfying Assumption \ref{asmpt:nonlin}. Therefore, we proceed with showing a result on which nonlinearities are subsumed by Assumption \ref{asmpt:nonlin}, the most general one. 

\begin{lemma}\label{lm:general-nonlinearities}
    Assumption \ref{asmpt:nonlin} is satisfied by the following nonlinear mappings:
    \begin{enumerate}
        \item Sign: $[\bPsi(\bx)]_i = \textnormal{sign}(x_i), \: i \in [d]$.

        \item Normalization: $\bPsi(\bx) = \bx / \|\bx\|$, for $\bx \neq \mathbf{0}$ and $\bPsi(\mathbf{0}) = \mathbf{0}$.
        
        \item Component-wise clipping: $[\bPsi(\bx)]_i = x_i$, for $|x_i| \leq m$, and $[\bPsi(\bx)]_i = m\cdot\text{sign}(x_i)$, for $|x_i| > m$, $i \in [d]$, and any user-specified $m>0$.
        
        \item Joint clipping: $\bPsi(\bx) = \min\{1,\nicefrac{M}{\|\bx\|}\}\bx$, for $\bx \neq \mathbf{0}$ and $\bPsi(\mathbf{0}) = \mathbf{0}$, for any user-specified $M>0$.

        \item Component-wise quantization: for each $i \in [d]$, let $[\bPsi(\bx)]_i = r_j$, for $x_i \in (q_j,q_{j+1}]$, with $j = 0,\ldots,J-1$ and $-\infty = q_0 < q_1 <\ldots < q_J = +\infty$, where $r_j$, $q_j$ are chosen such that each component of $\bPsi$ is odd, and we have $\max_{j \in \{0,\ldots,J-1\}}|r_j| \leq R$, for any user-specified $R > 0$.
    \end{enumerate}
\end{lemma}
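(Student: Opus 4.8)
The plan is to verify directly that each of the five mappings satisfies all four conditions of Assumption~\ref{asmpt:nonlin}, proceeding exactly as in the proof of Lemma~\ref{lm:smooth-nonlinearities}: for the three component-wise maps (sign, component-wise clipping, component-wise quantization) I would isolate the scalar profile $\psi$ and check conditions 1--4 on it, and for the two joint maps (normalization, joint clipping) I would isolate the radial profile $\varphi$ via $\bPsi(\bx)=\bx\varphi(\|\bx\|)$ and check the corresponding conditions.

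For the component-wise maps the checks are short. \emph{Sign}: $\psi(x)=\mathrm{sign}(x)$ is odd, non-decreasing, bounded ($C_1=1$), continuous away from $x=0$ with $\psi'\equiv 0$ off the origin (so condition~1 holds with any $c_1>0$), and it jumps at $0$, so condition~3 holds via its first disjunct. \emph{Component-wise clipping}: $\psi(x)=\min\{m,\max\{-m,x\}\}$ is odd, non-decreasing, continuous everywhere, piecewise differentiable with $|\psi'|\le 1$, bounded ($C_1=m$), and strictly increasing on $(-m,m)$, so condition~3 holds via its second disjunct with $c_2=m$. \emph{Component-wise quantization}: each component is an odd step function with values in $[-R,R]$ (so $C_1=R$), continuous except at the finitely many thresholds $q_1,\dots,q_{J-1}$, piecewise differentiable with zero derivative, and non-decreasing because the levels $r_0\le\cdots\le r_{J-1}$ inherit the ordering of the cells; condition~3 holds because the quantizer jumps at the origin, as in the sign case.

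For the joint maps I would write $\varphi(a)=1/a$ for normalization and $\varphi(a)=\min\{1,M/a\}$ for joint clipping (with $\varphi(0)=0$ so that $\bPsi(\mathbf 0)=\mathbf 0$). In both cases $a\mapsto a\varphi(a)$ equals $1$, respectively $\min\{a,M\}$, on $(0,\infty)$, hence is non-decreasing; $\varphi$ is non-increasing and strictly positive on $(0,\infty)$; $\varphi$ is continuous on $(0,\infty)$, except possibly at the origin for normalization (finitely many discontinuities, as required); and $\|\bx\varphi(\|\bx\|)\|$ equals $1$, respectively $\min\{\|\bx\|,M\}\le M$, so the uniform bound in condition~4 holds with $C_2=1$, respectively $C_2=M$. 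This exhausts conditions 1--4 for the joint maps.

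Most of this is mechanical; the one place I expect to need a careful word is condition~3 for the quantizer. A genuinely odd step function with finitely many levels is locally constant near the origin unless it has a jump there, and a locally constant $\psi$ satisfies neither disjunct of condition~3. So the argument must use that the quantizer is sign-like at the origin (i.e.\ $\psi$ takes a negative level just to the left and the corresponding positive level just to the right of $0$), which makes $\psi$ discontinuous at $0$ and renders the first disjunct of condition~3 applicable; I would state this explicitly. All remaining verifications --- derivative bounds, boundedness constants, monotonicity of $a\varphi(a)$ --- are immediate.
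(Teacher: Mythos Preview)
Your proposal is correct and follows essentially the same approach as the paper: a direct case-by-case verification of conditions 1--4 for each of the five maps. The only notable difference is the quantizer: the paper specializes to $J=2$ with threshold $q_1=0$ and claims the general case is similar, whereas you treat general $J$ and correctly isolate the one nontrivial issue---that condition~3 forces the quantizer to jump at the origin (a locally constant odd step function would fail it)---which the paper sidesteps by its choice of example.
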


\begin{proof}
    We prove the claim by verifying each of the nonlinearities satisfy the corresponding properties in Assumption \ref{asmpt:nonlin}. 
    \begin{enumerate}
        \item First note that sign nonlinearity is a component-wise nonlinearity, where $\psi(x) \triangleq \textnormal{sign}(x)$. It then suffices to verify that $\psi$ satisfies the properties required by Assumption \ref{asmpt:nonlin}. By definition, $\psi$ is continuous almost everywhere, as its only discontinuity point is $x = 0$. Also, by definition, it is monotonically non-decreasing, odd and piece-wise differentiable, with $\psi^\prime(x) = 0$, for $x \neq 0$. The third property is also satisfied, as $\psi$ is discontinuous at zero. Finally, $\psi(x) \in [-1,1]$, implying uniform boundedness. 

        \item Normalization is a joint nonlinearity, which can be written as $\bPsi(\bx) = \bx\varphi(\|\bx\|)$, where $\varphi(x) \triangleq 1/|x|$, for any $x \in \R$ and $\varphi(0) = 0$. Clearly, $\varphi$ is continuous almost everywhere, as its only discontinuity point is $x = 0$, with the mapping $h(x) = x\varphi(x) = 1$ for $x > 0$, therefore being non-decreasing on $(0,\infty)$. Similarly, by definition, $\varphi(x) = 1/|x|$ is non-increasing and strictly positive for any $x > 0$. Finally, the mapping $\bPsi(\bx) = \bx\varphi(\|\bx\|) = \bx / \|\bx\|$ is clearly uniformly bounded by $1$. 

        \item For component-wise clipping, we have 
        \begin{equation*}
            \psi(x) \triangleq \begin{cases}
            m\cdot\textnormal{sign}(x),& |x| > m \\
            x, & |x| \leq m
        \end{cases},
        \end{equation*} for any user-specified constant $m > 0$. Clearly, the map $\psi$ is odd, continuous and differentiable everywhere, with $|\psi^\prime(x)| \leq \max\{1,m\}$. Since $\psi(x) = x$, for all $x \in (-m,m)$, it readily follows that $\psi$ is strictly increasing on $(-m,m)$. Finally, by definition, we have $|\psi(x)| \leq m$, for any $x \in \R$. 

        \item For joint clipping, we have $\bPsi(\bx) = \bx\varphi(\|\bx\|)$, where $\varphi(x) \triangleq \min\{1,M/|x|\}$, for any $x \in \R$ and $\varphi(0) = 0$, where $M > 0$ is any user-specified constant. Clearly, $\varphi$ is continuous almost everywhere on its domain, as its only discontinuity point is $x = 0$. Now, consider the map $h(x) = x\varphi(x)$ and let $0 < x < y$. If $x < y < M$, we have $h(x) = x < y = h(y)$. If $x \leq M < y$, we then have $h(x) = x \leq M = h(y)$. Finally, if $M < x < y$, then $h(x) = M = h(y)$, therefore, $h(x) = x\varphi(x)$ is non-decreasing on $(0,\infty)$. By the same approach, it can be readily verified that $\varphi$ is non-increasing, as well as strictly positive on $(0,\infty)$. Finally, $\|\bx\varphi(\|\bx\|)\| = \min\{\|\bx\|,M\} \leq M$. 

        \item For ease of exposition, let $J = 2$, so that $q_0 = -\infty$, $q_1 = 0$ and $q_2 = \infty$, with the same argument readily extendable to any even integer $J > 2$ and equidistant intervals. Let $r_0 = -R$ and $r_1 = R$, for any $R > 0$. We then have 
        \begin{equation*}
            \psi(x) \triangleq \begin{cases}
            -R, & x \leq 0 \\
            R, & x > 0
        \end{cases}.
        \end{equation*} Clearly, $\psi$ is continuous almost anywhere, as its only discontinuity point is $x = 0$. By definition, $\psi$ is odd, monotonically non-decreasing and piece-wise differentiable, with $\psi^\prime(x) = 0$, for $x \neq 0$. As discussed, it is discontinuous at $x = 0$, satisfying point 3 in Assumption \ref{asmpt:nonlin}. Finally, $|\psi(x)| \leq R$, therefore the claim follows.         
    \end{enumerate}
\end{proof}

\section{On Lemma \ref{lm:key-unified-state}}\label{app:lemma-key}

In this section we prove Lemma \ref{lm:key-unified-state}, showing the full dependence of constants $\beta_1,\beta_2$ on noise, nonlinearity and other problem related constants. We start by restating Lemma \ref{lm:key-iid}, showing the full dependence of constants $\gamma_1,\gamma_2$ on noise, choice of nonlinearity and problem related parameters.

\begin{customthm}{1}
    Let Assumptions {\ref{asmpt:noise-state} and \ref{asmpt:nonlin} hold, with the stochastic noise vectors $\bzt = \nabla f(\bxt;\xi^t) - \nabla F(\bxt)$ being mutually IID and independent from the state for all $t \in \N$, i.e., $P_{\bx} \equiv P$, then the denoised nonlinearity satisfies the following inequality, for any $\bx \in \R^d$
    \begin{equation*}
        \langle \bPhi(\bx), \bx\rangle \geq \min\{\gamma_1\|\bx\|,\gamma_2\|\bx\|^2\},
    \end{equation*}} where $\gamma_1,\gamma_2 > 0$ are constants that depend on the noise, choice of nonlinearity and other problem related parameters. In particular, if the nonlinearity $\bPsi$ is component-wise, we have $\gamma_1 = \nicefrac{\phi^\prime(0)\xi}{2\sqrt{d}}$ and $\gamma_2 = \nicefrac{\phi^\prime(0)}{2d}$, where $\xi > 0$ is a constant that depends only on the noise and choice of nonlinearity, while $\phi^\prime(0) = \min_{i \in [d]}\phi_{i}^\prime(0) > 0$. If $\bPsi$ is a joint nonlinearity, then $\gamma_1 = p_0\varphi(1)/2$ and $\gamma_2 = p_0\varphi(1)$, where $p_0 = P(\mathbf{0})$.
\end{customthm}

Lemma \ref{lm:key-iid} establishes some important properties of the denoised nonlinearity $\bPhi$ for symmetric noise, however, it does not allow for state-dependent noise. As discussed in the main body, naively applying Lemma \ref{lm:key-iid} for state-dependent noise can lead to constants $\gamma_1,\gamma_2$ being state-dependent and we are unable to guarantee their uniform positivity. To facilitate state-dependent noise and ensure that the resulting problem related constants are independent of state and uniformly bounded away from zero, we provide a novel result on the behaviour of smooth component-wise nonlinearities. The lemma stated next utilizes a similar approach to a technical result from \cite{polyak-adaptive-estimation}, established under state-independent IID noise and Assumption \ref{asmpt:nonlin}. Our proofs utilize the additional smoothness in Assumption \ref{asmpt:nonlin-state} to show stronger properties of the denoised nonlinearity $\bPhi_{\bx}$ .

\begin{lemma}\label{lm:polyak-tsypkin}
    Let Assumptions \ref{asmpt:noise-state} and \ref{asmpt:nonlin-state} hold, with the nonlinearity $\bPsi: \R^d \mapsto \R^d$ being component wise, i.e., of the form $\bPsi(\by) = \begin{bmatrix} \psi(y_1),\ldots,\psi(y_d)\end{bmatrix}^\top$. Then, for any $\bx,\by \in \R^d$, the function $\bPhi_{\bx}: \R^d \mapsto \R^d$ is of the form $\bPhi_{\bx}(\by) = \begin{bmatrix} \phi_{1,\bx}(y_1),\ldots,\phi_{d,\bx}(y_d) \end{bmatrix}^\top$, where $\phi_{i,\bx}(y_i) = \E_{i,\bx}\left[\psi(y_i + z_i)\right]$ is the marginal expectation of the $i$-th noise component conditioned on the state $\bx$, with the following properties: 
    \begin{enumerate}
        \item $\phi_{i,\bx}$ is non-decreasing and odd, with $\phi_{i,\bx}(0) = 0$;

        \item $\phi_{i,\bx}$ is twice continuously differentiable, with uniformly bounded second derivatives, i.e., $|\phi^{\prime\prime}_{i,\bx}(y)| \leq K_2$, for all $\bx \in \R^d$, $y \in \R$. Additionally, $\phi_i^\prime(0) = \inf_{\bx \in \R^d}\phi_{i,\bx}^\prime(0) > 0$.
    \end{enumerate} 
\end{lemma}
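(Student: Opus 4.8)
The strategy is to reduce to the one-dimensional picture, verify the algebraic properties directly by differentiating under the integral sign, and then handle the one genuinely delicate point — the uniform positivity of $\phi_i'(0)$ — separately.

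First I would note that since $\bPsi$ is component-wise, the $i$-th coordinate of $\bPhi_{\bx}(\by) = \int \bPsi(\by+\bz)P_{\bx}(\bz)\,d\bz$ equals $\int \psi(y_i + z_i)P_{\bx}(\bz)\,d\bz$, which by Fubini's theorem (applicable since $\psi$ is bounded) equals $\phi_{i,\bx}(y_i) = \int_{\R}\psi(y_i+z)p_{i,\bx}(z)\,dz$, where $p_{i,\bx}$ is the marginal density of the $i$-th noise coordinate under $P_{\bx}$. Thus $\bPhi_{\bx}$ acts coordinate-wise, as claimed, and $|\phi_{i,\bx}|\le C_1$. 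Two facts about $p_{i,\bx}$ are then recorded from Assumption \ref{asmpt:noise-state}: it is even (marginals of symmetric densities are symmetric, by the substitution $\bz\mapsto-\bz$), and it is uniformly bounded below near the origin — integrating the bound $\inf_{\bx}P_{\bx}(\bz)\ge c_0>0$ on $\{\|\bz\|\le E_0\}$ over the transversal coordinates gives $\inf_{\bx}p_{i,\bx}(z)\ge p_{\min}>0$ for all $|z|\le E_0/2$, for an explicit $p_{\min}$ depending on $c_0$, $E_0$ and $d$.

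Next I would verify the listed properties. Oddness of $\psi$ and evenness of $p_{i,\bx}$ give $\phi_{i,\bx}(0)=\int\psi(z)p_{i,\bx}(z)\,dz=0$, and the substitution $z\mapsto-z$ gives $\phi_{i,\bx}(-y)=-\phi_{i,\bx}(y)$. Since $|\psi'|\le K_1$ and $|\psi''|\le K_2$ (Assumption \ref{asmpt:nonlin-state}), dominated convergence licenses differentiation under the integral sign twice, yielding $\phi_{i,\bx}'(y)=\int\psi'(y+z)p_{i,\bx}(z)\,dz\ge 0$ (so $\phi_{i,\bx}$ is non-decreasing, using $\psi'\ge 0$) and $\phi_{i,\bx}''(y)=\int\psi''(y+z)p_{i,\bx}(z)\,dz$, so that $|\phi_{i,\bx}''(y)|\le K_2\int p_{i,\bx}=K_2$; continuity of $\phi_{i,\bx}''$ follows from continuity of $\psi''$ and a further dominated-convergence argument. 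All that remains is $\phi_i'(0)=\inf_{\bx}\phi_{i,\bx}'(0)>0$.

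The main obstacle is exactly this uniform lower bound, and the point is that $\psi'(0)$ itself may vanish (e.g.\ a smoothed sign behaving like a cubic near the origin), so it cannot be invoked directly; instead positivity must be extracted from the strict monotonicity of $\psi$ on a fixed neighbourhood together with the uniform positivity of the noise density. Concretely, restricting the integral $\phi_{i,\bx}'(0)=\int\psi'(z)p_{i,\bx}(z)\,dz$ to $|z|\le E_0/2$, where $p_{i,\bx}\ge p_{\min}$, and using $\psi'\ge 0$,
\[
\phi_{i,\bx}'(0)\;\ge\;p_{\min}\int_{|z|\le E_0/2}\psi'(z)\,dz\;\ge\;p_{\min}\int_{|z|\le\delta}\psi'(z)\,dz\;=\;2p_{\min}\,\psi(\delta),
\]
where $\delta:=\tfrac12\min\{E_0/2,\,c_2\}>0$. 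Because $\psi$ is twice continuously differentiable it is not discontinuous at zero, so Assumption \ref{asmpt:nonlin} forces $\psi$ to be strictly increasing on $(-c_2,c_2)$; combined with $\psi(0)=0$ this gives $\psi(\delta)>0$, and the resulting bound $2p_{\min}\psi(\delta)>0$ is independent of $\bx$. Taking the infimum over $\bx$ then yields $\phi_i'(0)\ge 2p_{\min}\psi(\delta)>0$, and the proof would conclude by collecting the explicit constants, which also feed into the constants $\beta_1,\beta_2$ of Lemma \ref{lm:key-unified-state}.
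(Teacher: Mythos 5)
Your proof is correct and follows essentially the same route as the paper's: reduce to the marginal densities (which inherit symmetry and positivity near the origin from $P_{\bx}$), use oddness of $\psi$ and evenness of the marginal for the algebraic properties, differentiate under the integral via dominated convergence to obtain $\phi'_{i,\bx}\ge 0$ and $|\phi''_{i,\bx}|\le K_2$, and combine strict monotonicity of $\psi$ near the origin with uniform positivity of the density to conclude $\phi_i'(0)>0$. If anything, your last step is slightly more robust than the paper's, which asserts $\psi'(z)>0$ pointwise on the relevant interval from strict monotonicity (not literally guaranteed for a $C^2$ function), whereas your bound $\int_{|z|\le\delta}\psi'(z)\,dz=2\psi(\delta)>0$ sidesteps that issue.
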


Noise symmetry plays an important role in establishing the properties of the denoised nonlinearity in Lemma \ref{lm:polyak-tsypkin}, which can be seen in the proof ahead. This result is crucial to proving Lemma \ref{lm:key-unified-state}, which in turns facilitates the rest of the analysis. {Note that the quantity $\phi_i^\prime(0)$ in Lemma \ref{lm:polyak-tsypkin} is different to the one appearing in the statement of Lemma \ref{lm:key-iid} above, in the sense that Lemma \ref{lm:key-iid} requires the noise to be IID and independent of state, hence $\phi_i^\prime(0)$ in Lemma \ref{lm:key-iid} represents the first derivative of the denoised nonlinearity, taken with respect to the marginal PDF of the $i$-th noise component, without the infimum with respect to the state.} We next state an important intermediary result, used for joint joint nonlinearities.

\begin{lemma}[Lemma 6.1 in \cite{jakovetic2023nonlinear}]\label{lm:jakovetic-joint}
    If any of Assumptions \ref{asmpt:nonlin}-\ref{asmpt:nonlin-nmsge} holds, with the nonlinearity $\bPsi: \R^d \mapsto \R^d$ being joint, i.e., of the form $\bPsi(\bx) = \bx\varphi(\|\bx\|)$, then for any $\by, \bz \in \R^d$ such that $\|\bz\| > \|\by\|$
    \begin{equation*}
        \left|\varphi(\|\by + \bz\|) - \varphi(\|\by - \bz\|)\right| \leq \frac{\|\by\|}{\|\bz\|}\left[\varphi(\|\by + \bz\|) + \varphi(\|\by - \bz\|) \right].
    \end{equation*} 
\end{lemma}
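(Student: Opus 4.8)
The plan is to exploit the two monotonicity properties of $\varphi$ simultaneously: $\varphi$ is non-increasing and strictly positive on $(0,\infty)$, while $a \mapsto a\varphi(a)$ is non-decreasing on $(0,\infty)$ (both hold under any of Assumptions \ref{asmpt:nonlin}--\ref{asmpt:nonlin-nmsge}), together with the triangle inequality and the strict hypothesis $\|\bz\| > \|\by\|$.

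First I would dispose of the trivial case $\by = \mathbf{0}$, for which both sides vanish. Then, since replacing $\by$ by $-\by$ interchanges $\|\by+\bz\|$ and $\|\by-\bz\|$ while leaving $\|\by\|$ and $\|\bz\|$ unchanged, one may assume without loss of generality that $\|\by+\bz\| \geq \|\by-\bz\|$; set $a \triangleq \|\by-\bz\|$ and $b \triangleq \|\by+\bz\|$. From $\|\bz\| > \|\by\|$ and the reverse triangle inequality, $a \geq \|\bz\| - \|\by\| > 0$ and likewise $b \geq \|\bz\| - \|\by\| > 0$, so $a,b \in (0,\infty)$ with $a \leq b$; consequently $\varphi(a) \geq \varphi(b) > 0$, and the left-hand side of the claim equals $\varphi(a) - \varphi(b)$. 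Multiplying out and using that $1 - \|\by\|/\|\bz\| > 0$, the inequality to be proved is then seen to be equivalent to $\varphi(a)/\varphi(b) \leq (\|\bz\| + \|\by\|)/(\|\bz\| - \|\by\|)$.

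It remains to bound the ratio $\varphi(a)/\varphi(b)$. The non-decreasing property $a\varphi(a) \leq b\varphi(b)$ gives $\varphi(a)/\varphi(b) \leq b/a$, and the triangle inequality gives $b = \|\by+\bz\| \leq \|\by\| + \|\bz\|$ together with $a = \|\by-\bz\| \geq \|\bz\| - \|\by\|$, so that $b/a \leq (\|\bz\|+\|\by\|)/(\|\bz\|-\|\by\|)$, which closes the argument.

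I do not anticipate a real obstacle: the argument is short once one notices that the two monotonicity hypotheses sandwich $\varphi(a)/\varphi(b)$ between $1$ and $b/a$. The delicate points are purely bookkeeping --- the strict inequality $\|\bz\| > \|\by\|$ is exactly what keeps $a,b$ (hence $\varphi(a),\varphi(b)$) positive and $1 - \|\by\|/\|\bz\|$ positive, so that clearing denominators preserves the direction of the inequality, and the $\by \mapsto -\by$ reduction is what lets us remove the absolute value.
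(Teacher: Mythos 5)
Your proof is correct. Note that the paper does not prove this statement itself---it is imported verbatim as Lemma 6.1 of \cite{jakovetic2023nonlinear}---but your argument (reduce by symmetry to $a=\|\by-\bz\|\leq b=\|\by+\bz\|$, then sandwich $\varphi(a)/\varphi(b)\leq b/a\leq(\|\bz\|+\|\by\|)/(\|\bz\|-\|\by\|)$ using the two monotonicity hypotheses and the triangle inequality) is sound and is, after clearing denominators, exactly the standard chain $(\|\bz\|-\|\by\|)\varphi(a)\leq a\varphi(a)\leq b\varphi(b)\leq(\|\bz\|+\|\by\|)\varphi(b)$ used in the cited reference.
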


Denote by $\phi^\prime(0) \triangleq \min_{i \in [d]}\phi_i^\prime(0)$ and $p_0 \triangleq \inf_{\bx \in \R^d}P_{\bx}(\mathbf{0})$. Next, we provide the extended version of Lemma \ref{lm:key-unified-state}, with the full dependence of constants $\beta_1,\beta_2$ on noise, choice of nonlinearity and other problem parameters.  

\begin{customthm}{2}
    Let Assumptions \ref{asmpt:noise-state} and \ref{asmpt:nonlin-state} hold. Then, for any $\bx,\by \in \R^d$, we have $\langle \bPhi_{\bx}(\by),\by\rangle \geq \min\left\{\beta_1\|\by\|,\beta_2\|\by\|^2 \right\}$, where $\beta_1,\beta_2 > 0$, are constants that only depend on noise, choice of nonlinearity and other problem related parameters. In particular, if the nonlinearity $\bPsi$ is component-wise, we have $\beta_1 = \nicefrac{(\phi^\prime(0))^2}{4K_2\sqrt{d}}$ and $\beta_2 = \nicefrac{\phi^\prime(0)}{2d}$. If $\bPsi$ is a joint nonlinearity, then $\beta_1 = p_0\varphi(1)/2$ and $\beta_2 = p_0\varphi(1)$, where $p_0 = \inf_{\bx \in \R^d}P_{\bx}(\mathbf{0})$.
\end{customthm}

Note the difference between constants $\gamma_1,\gamma_2$ in Lemma \ref{lm:key-iid} and $\beta_1,\beta_2$ in Lemma \ref{lm:key-unified-state}. In particular, while constants $\gamma_2 $ and $\beta_2$ exactly match, the constants $\gamma_1$ and $\beta_1$ differ in terms $\xi > 0$ in $\gamma_1$ and $\phi^\prime(0)/2K_2$ in $\beta_1$. This difference stems from the Taylor polynomial used in the proofs, with the first-order Taylor polynomial sufficing for Lemma \ref{lm:key-iid} and state-independent noise, while Lemma \ref{lm:key-unified-state} requires the second-order Taylor polynomial to circumvent the potential dependence of the constant $\xi$ on the state. We next prove Lemmas \ref{lm:key-unified-state} and \ref{lm:polyak-tsypkin}, starting with the intermediary Lemma \ref{lm:polyak-tsypkin}.

\begin{proof}[Proof of Lemma \ref{lm:polyak-tsypkin}]
    Fix any $\bx \in \R^d$ and $i \in [d]$. 
    \begin{enumerate}[leftmargin=*]
        \item For any $y \in \R$, we know that $\phi_{i,\bx}(y) = \int^{\infty}_{-\infty} \psi(y + z)P_{i,\bx}(z)dz$, where $P_{i,\bx}$ is the marginal distribution of the $i$-th noise component, conditioned on $\bx$. We then have
        \begin{align*}
            \phi_{i,\bx}&(-y) = \int^{\infty}_{-\infty} \psi(-y + z)P_{i,\bx}(z)dz = \int^{\infty}_{-\infty} -\psi(y - z)P_{i,\bx}(z)dz \\ &= \int^{-\infty}_{\infty} \psi(y + s)P_{i,\bx}(-s)ds = -\int^{\infty}_{-\infty} \psi(y + s)P_{i,\bx}(s)ds = -\phi_{i,\bx}(y),
        \end{align*} where in the second equality we used the oddity of $\psi$, in the third we introduced the substitution $s = -z$, while in the fourth we used the fact that $P_{i,\bx}(s)$ is symmetric\footnote{Follows from the fact that the joint distribution $P_{\bx}(\bz)$ is symmetric, hence the marginal with respect to each component is as well.} and $\int_a^bf(x)dx = -\int^a_bf(x)dx$, demonstrating that $\phi_{i,\bx}$ is odd. Next, for any $y \in \R$, we have 
        \begin{align}
            \phi_{i,\bx}(y) &= \int_{-\infty}^\infty\psi(y+z)P_{i,\bx}(z)dz = \int_{0}^\infty\psi(y+z)P_{i,\bx}(z)dz + \int_{-\infty}^0\psi(y+z)P_{i,\bx}(z)dz \nonumber \\ &= \int_{0}^\infty\psi(y+z)P_{i,\bx}(z)dz + \int_{0}^{\infty}\psi(y-s)P_{i,\bx}(s)ds, \label{eq:integral-split}
        \end{align} where in the third equality we used the substitution $s = -z$ and the facts that $P_{i,\bx}$ is symmetric and $\int_a^b-f(x)dx = \int_b^a f(x)dx$. For any $a,\: b \in \R$ such that $a \leq b$, we have
        \begin{align*}
            \phi_{i,\bx}(b) &- \phi_{i,\bx}(a) = \int_{0}^\infty\big[\psi(b+z) - \psi(a+z)\big]P_{i,\bx}(z)dz \\ &+ \int_{0}^{\infty}\big[\psi(b-z) - \psi(a-z)\big]P_{i,\bx}(s)ds \geq 0, 
        \end{align*} where we used \eqref{eq:integral-split} in the equality, while the inequality follows from the facts that $\psi$ is non-decreasing, showing that $\phi_{i,\bx}$ is non-decreasing. To verify the third property, notice that
        \begin{align*}
            \phi_{i,\bx}(0) &= \int_{-\infty}^\infty \psi(z)P_{i,\bx}(z)dz = \int_{-\infty}^0 \psi(z)P_{i,\bx}(z)dz + \int_{0}^\infty \psi(z)P_{i,\bx}(z)dz \\ &= \int_{\infty}^0 \psi(-s)P_{i,\bx}(-s)(-ds) + \int_{0}^\infty \psi(z)P_{i,\bx}(z)dz = 0,
        \end{align*} where in the third equality we introduced the substitution $s = -z$, while the last equality uses the fact that $\psi$ is odd, $P_{i,\bx}$ is symmetric and $\int_a^bf(x)dx = -\int_b^af(x)dx$.

        \item Let $\{a_n\}_{n \in \N}$ be a sequence such that $\lim_{n \rightarrow \infty}a_n = 0$ and for any $y \in \R$, consider
        \begin{equation}\label{eq:setup-lebesgue}
            \frac{\phi_{i,\bx}(y + a_n) - \phi_{i,\bx}(y)}{a_n} = \int_{-\infty}^\infty a_n^{-1}\big[\psi(y + z + a_n) - \psi(y + z) \big]P_{i,\bx}(z)dz. 
        \end{equation} Noting that $\psi$ is twice continuously differentiable, it follows that 
        \begin{equation*}
            \lim_{n \rightarrow \infty}a_n^{-1}\big[\psi(y + z + a_n) - \psi(y + z) \big] = \psi^\prime(y + z),
        \end{equation*} implying that the sequence $a_n^{-1}\big[\psi(y + z + a_n) - \psi(y + z) \big]$ can be bounded (point-wise in $y$, $z$) by some integrable function $g$, (the integrability stemming from the fact that both $\psi$ and $\psi^\prime$ are uniformly bounded). Therefore, taking the limit as $n \rightarrow \infty$ in \eqref{eq:setup-lebesgue}, we get 
        \begin{equation}\label{eq:derivative-form}
            \phi_{i,\bx}^\prime(y) = \lim_{n \rightarrow \infty}\int_{-\infty}^{\infty}a_n^{-1}\big[\psi(y + z + a_n) - \psi(y + z) \big]P_{i,\bx}(z)dz = \int_{-\infty}^{\infty}\psi^\prime(y + z)P_{i,\bx}(z)dz,
        \end{equation} where in the second equality we used Lebesgue's dominated convergence theorem to exchange the order of the limit and the integral. Moreover, using the fact that $\psi$ is strictly increasing on some interval $(-c_3,c_3)$ and that $\inf_{\bx \in \R^d}P_{\bx}(\by) > 0$, for all $\|\by\| \leq E_0$ and defining $c_4 = \min\{c_3,E_0\}$, we then get
        \begin{align*}
            \phi_i^\prime(0) &= \inf_{\bx \in \R^d}\int_{-\infty}^\infty\psi^\prime(z)P_{i,\bx}(z)dz \stackrel{(a)}{\geq} \int_{-\infty}^\infty\psi^\prime(z)\inf_{\bx \in \R^d}P_{i,\bx}(z)dz \\ &\geq \int_{-c_4}^{c_4}\psi^\prime(z)\inf_{\bx \in \R^d}P_{i,\bx}(z)dz \stackrel{(b)}{>} 0,
        \end{align*} where in $(a)$ we used the fact that $\min_{y}\E_x[f(x,y)] \geq \E_{x}[\min_y f(x,y)]$ for non-negative function $f$, while in $(b)$ we use the fact that $\inf_{\bx \in \R^d}P_{i,\bx}(z),\: \calN^\prime_1(z)> 0$, for all $z \in (-c_4,c_4)$, completing the first part of the claim.\footnote{Note that uniform positivity of the marginal $P_{i,\bx}$ in a neighbourhood of zero is implied from the uniform positivity of the joint PDF $P_{\bx}$.} Next, using the same arguments as above, it can be shown that the second derivative of $\phi_{i,\bx}$ is given by
        \begin{equation}\label{eq:second-derivative}
            \phi^{\prime\prime}_{i,\bx}(y) = \int_{-\infty}^\infty\psi^{\prime\prime}(y+z)P_{i,\bx}(z)dz.
        \end{equation} Since $|\psi^{\prime\prime}(y)| \leq K_2$, with $P_{i,\bx}$ being a PDF, using \eqref{eq:second-derivative}, we get
        \begin{equation*}
            |\phi^{\prime\prime}_{i,\bx}(y)| \leq \int_{-\infty}^\infty|\psi^{\prime\prime}(y+z)|P_{i,\bx}(z)dz \leq K_2\int_{-\infty}^\infty P_{i,\bx}(z)dz = K_2,
        \end{equation*} completing the proof.
    \end{enumerate}
\end{proof}

Next, we prove Lemma \ref{lm:key-unified-state}. 

\begin{proof}[Proof of Lemma \ref{lm:key-unified-state}]
    First, let $\bPsi$ be a component-wise nonlinearity. Using Lemma \ref{lm:polyak-tsypkin} and Taylor's expansion, it follows that, for any $y \in \R$, $\bx \in \R^d$ and $i \in [d]$, we have 
    \begin{equation}\label{eq:Taylor-sym-state}
        \phi_{i,\bx}(y) = \phi_{i,\bx}(0) + \phi_{i,\bx}^\prime(0)y + \phi^{\prime\prime}_{i,\bx}(\widetilde{y})y^2 \geq \phi^\prime(0)y - K_2y^2 = (\phi^\prime(0) - K_2y)y,
    \end{equation} where $\widetilde{y} \in (0,y)$, while in the inequality we used the facts that $\phi_{i,\bx}(0) = 0$, $\phi^\prime_{i,\bx}(0) \geq \phi^\prime(0) > 0$ and $|\phi^{\prime\prime}_{i,\bx}(y)| \leq K_2$. From \eqref{eq:Taylor-sym-state}, it readily follows that, for any $0 \leq y \leq \frac{\phi^\prime(0)}{2K_2}$, we have $\phi_{i,\bx}(y) \geq \frac{\phi^\prime(0)y}{2}$. Similarly, since $\phi_{i,\bx}$ is non-decreasing, for any $y > \frac{\phi^\prime(0)}{2K_2}$, we have $\phi_{i,\bx}(y) \geq \phi_{i,\bx}(\frac{\phi^\prime(0)}{2K_2}) \geq \frac{(\phi^\prime(0))^2}{4K_2}$. Combining both cases, it follows that, for any $y \geq 0$, $\bx \in \R^d$ and $i \in [d]$
    \begin{equation}\label{eq:lower-b-sym-state}
        \phi_{i,\bx}(y) \geq \min\left\{\frac{(\phi^\prime(0))^2}{4K_2},\frac{\phi^\prime(0)y}{2}\right\}.
    \end{equation} Since $\phi_i$ is odd (recall Lemma \ref{lm:polyak-tsypkin}), we have, for any $y \in \R$
    \begin{equation*}
        y\phi_{i,\bx}(y) = |y|\phi_{i,\bx}(|y|) \geq \min\left\{\frac{(\phi^\prime(0))^2|y|}{4K_2},\frac{\phi^\prime(0)y^2}{2}\right\},
    \end{equation*} where the last inequality follows from \eqref{eq:lower-b-sym-state}. Putting everything together, we get, for any $\bx,\by \in \R^d$
    \begin{align*}
        \langle \by, \bPhi_{\bx}(\by) \rangle &= \sum_{i = 1}^d y_i\phi_{i,\bx}(y_i)  = \sum_{i = 1}^d |y_i|\phi_{i,\bx}(|y_i|) \geq \max_{i \in [d]}|y_i|\phi_{i,\bx}(|y_i|) \\
        &\geq \max_{i \in [d]}\min\left\{\frac{\phi^\prime(0)y^2}{2} ,\frac{(\phi^\prime(0))^2|y|}{4K_2}\right\} = \min\left\{\frac{\phi^\prime(0)\|\by\|_{\infty}^2}{2} ,\frac{(\phi^\prime(0))^2\|\by\|_{\infty}}{4K_2}\right\} \\
        &\geq \min\left\{\frac{\phi^\prime(0)\|\by\|^2}{2d} ,\frac{(\phi^\prime(0))^2\|\by\|}{4K_2\sqrt{d}}\right\} ,
    \end{align*} where the last inequality follows from $\|\by\|_{\infty} \geq \|\by\| / \sqrt{d}$. Therefore, the claim holds for component-wise nonlinearities and state-dependent noise with $\beta_1 = \frac{(\phi^\prime(0))^2}{4K_2\sqrt{d}}$ and $\beta_2 = \frac{\phi^\prime(0)}{2d}$.
    
    For the case when $\bPsi(\by) = \by\varphi(\|\by\|)$ is a joint nonlinearity, the proof follows similar steps as in \cite{jakovetic2023nonlinear,armacki2023high}, replacing the PDF $P$ with $P_{\bx}$. For completeness, we provide the full proof. Fix an arbitrary $\by \in \R^d \setminus \{\mathbf{0}\}$. By the definition of $\bPsi$, we have
    \begin{align*}
        \langle \bPhi_{\bx}(\by),\by \rangle = \int_{\bz \in \R^d}\underbrace{(\by + \bz)^\top \by \varphi(\|\by + \bz\|)}_{\triangleq M(\by,\bz)}P_{\bx}(\bz)d\bz = {\int_{\bz \in \R^d}M(\by,\bz)P_{\bx}(\bz)d\bz}. 
    \end{align*} Next, by symmetry of $P_{\bx}$, it readily follows that $\langle \bPhi_{\bx}(\by),\by \rangle = \int_{J_1(\by)}M_2(\by,\bz)P_{\bx}(\bz)d\bz,$ where $J_1(\by) \triangleq \{\bz \in \R^d: \langle \bz,\by\rangle \geq 0 \}$ and 
    \begin{equation*}
        M_2(\by,\bz) \triangleq (\|\by\|^2 + \langle \bz,\by\rangle)\varphi(\|\by + \bz\|) + (\|\by\|^2 - \langle \bz,\by\rangle)\varphi(\|\by - \bz\|).
    \end{equation*} Consider the set $J_2(\by) \triangleq \left\{\bz \in \R^d: \frac{\langle \bz,\by\rangle}{\|\bz\|\|\by\|} \in [0,0.5] \right\} \cup \{\mathbf{0}\}$. Clearly $J_2(\by) \subset J_1(\by)$. Note that on $J_1(\by)$ we have $\|\by + \bz \| \geq \|\by - \bz\|$, which, together with the fact that $\varphi$ is non-increasing, implies
    \begin{equation}\label{eq:identity}
        \varphi(\|\by - \bz\|) - \varphi(\|\by + \bz\|) = \left|\varphi(\|\by - \bz\|) - \varphi(\|\by + \bz\|) \right|, 
    \end{equation} for any $\bz \in J_1(\by)$. For any $\bz \in J_2(\by)$ such that $\|\bz\| > \|\by\|$, we then have
    \begin{align*}
        M_2&(\by,\bz) = \|\by\|^2[\varphi(\|\by - \bz\|) + \varphi(\|\by + \bz\|)] - \langle \bz,\by\rangle[\varphi(\|\by - \bz\|) - \varphi(\|\by + \bz\|)] \\ &\stackrel{(a)}{=} \|\by\|^2[\varphi(\|\by - \bz\|) + \varphi(\|\by + \bz\|)] - \langle \bz,\by\rangle\left|\varphi(\|\by - \bz\|) - \varphi(\|\by + \bz\|)\right| \\ &\stackrel{(b)}{\geq} \|\by\|^2[\varphi(\|\by - \bz\|) + \varphi(\|\by + \bz\|)] - \langle \bz,\by\rangle\nicefrac{\|\by\|}{\|\bz\|}[\varphi(\|\by - \bz\|) + \varphi(\|\by + \bz\|)] \\ &\stackrel{(c)}{\geq} 0.5\|\by\|^2[\varphi(\|\by - \bz\|) + \varphi(\|\by + \bz\|)],
    \end{align*} where $(a)$ follows from \eqref{eq:identity}, $(b)$ follows from Lemma \ref{lm:jakovetic-joint}, while $(c)$ follows from the definition of $J_2(\by)$. Next, consider any $\bz \in J_2(\by)$, such that $0 < \|\bz\| \leq \|\by\|$. We have
    \begin{align*}
        M_2&(\by,\bz) = \|\by\|^2[\varphi(\|\by - \bz\|) + \varphi(\|\by + \bz\|)] - \langle \bz,\by\rangle[\varphi(\|\by - \bz\|) - \varphi(\|\by + \bz\|)] \\ &\stackrel{(a)}{=} \|\by\|^2[\varphi(\|\by - \bz\|) + \varphi(\|\by + \bz\|)] - \langle \bz,\by\rangle\left|\varphi(\|\by - \bz\|) - \varphi(\|\by + \bz\|)\right| \\ &\stackrel{(b)}{\geq} \|\by\|^2[\varphi(\|\by - \bz\|) + \varphi(\|\by + \bz\|)] - 0.5\|\by\|^2\left|\varphi(\|\by - \bz\|) - \varphi(\|\by + \bz\|)\right| \\ &\stackrel{(c)}{\geq} 0.5\|\by\|^2[\varphi(\|\by - \bz\|) + \varphi(\|\by + \bz\|)],
    \end{align*} where $(a)$ again follows from \eqref{eq:identity}, $(b)$ follows from the definition of $J_2(\by)$ and the fact that $0 < \|\bz\| \leq \|\by\|$, while $(c)$ follows from the fact that $\varphi$ is non-negative and 
    \begin{equation*}
        \left|\varphi(\|\by - \bz\|) - \varphi(\|\by + \bz\|)\right| \leq \varphi(\|\by - \bz\|) + \varphi(\|\by + \bz\|).
    \end{equation*} Finally, if $\bz = \mathbf{0}$, we have $M_2(\by,\mathbf{0}) = 2\|\by\|^2\varphi(\|\by\|) > 0.5\|\by\|^2[\varphi(\|\by + \mathbf{0}\|) + \varphi(\|\by - \mathbf{0}\|)]$. Therefore, for any $\bz \in J_2(\by)$, we have 
    \begin{equation*}
        M_2(\by,\bz) \geq 0.5\|\by\|^2[\varphi(\|\by - \bz\|) + \varphi(\|\by + \bz\|)] \geq \|\by\|^2\varphi(\|\by\| + \|\bz\|),    
    \end{equation*} where the second inequality follows from the fact that $\varphi$ is non-increasing and $\|\by \pm \bz\| \leq \|\by\| + \|\bz\|$. Note that following a similar argument as above, it can be shown that $M_2(\by,\bz) \geq 0$, for any $\by \in \R^d$ and $\bz \in J_1(\by)$. Combining everything, it readily follows that
    \begin{align}\label{eq:semi-done}
        \langle \bPhi_{\bx}(\by),\by \rangle \geq \int_{J_2(\by)}M_2(\by,\bz)P_{\bx}(\bz)d\bz \geq \|\by\|^2 \int_{J_2(\by)}\varphi(\|\by\| + \|\bz\|)P_{\bx}(\bz)d\bz,
    \end{align} where the first inequality follows from the fact that $M_2(\by,\bz) \geq 0$ on $J_1(\by)$. Define $C_0 \triangleq \min\left\{E_0,0.5 \right\}$ and consider the set $J_3(\by) \subset J_2(\by)$, defined as 
    \begin{equation*}
        J_3(\by) \triangleq \left\{\bz \in \R^d: \frac{\langle \bz,\by\rangle}{\|\bz\|\|\by\|} \in [0,0.5], \: \|\bz\| \leq C_0 \right\} \cup \{\mathbf{0}\}.
    \end{equation*} Since $a\varphi(a)$ is non-decreasing, it follows that $\varphi(a) \geq \varphi(1)\min\left\{a^{-1},1 \right\}$, for any $a > 0$. For any $\bz \in J_3(\by)$, it then holds that $\varphi(\|\bz\| + \|\by\|) \geq \varphi(1)\min\left\{1/(\|\by\|+C_0),1 \right\}$. Plugging in \eqref{eq:semi-done}, we then have
    \begin{align}
        \langle \bPhi_{\bx}(\by),\by \rangle &\geq \|\by\|^2 \int_{J_3(\by)}\varphi(\|\by\| + \|\bz\|)P_{\bx}(\bz)d\bz \geq \|\by\|^2\varphi(1)\min\left\{(\|\by\| + C_0)^{-1},1 \right\}\hspace{-0.45em}\int_{J_3(\by)}\hspace{-1.5em}P_{\bx}(\bz)d\bz \nonumber \\ &\geq \|\by\|^2\varphi(1)\min\left\{(\|\by\| + C_0)^{-1},1 \right\}P_\bx(\mathbf{0}) \geq \|\by\|^2\varphi(1)\min\left\{(\|\by\| + C_0)^{-1},1 \right\}p_0 \label{eq:intermed}.
    \end{align} If $\|\by\| \leq C_0$, it follows that $\|\by\| + C_0 \leq 2C_0$, therefore 
    \begin{equation*}
        \min\left\{1/(\|\by\|+C_0),1\right\} \geq \min\left\{1/(2C_0),1 \right\} \triangleq \kappa.
    \end{equation*} If $\|\by\| \geq C_0$, it follows that $\|\by\| + C_0 \leq 2\|\by\|$, therefore 
    \begin{equation*}
        \min\left\{1/(\|\by\|+C_0),1 \right\} \geq \min\left\{1/(2\|\by\|),1 \right\} \geq \min\left\{1/(2\|\by\|),\kappa \right\}.
    \end{equation*} Combining, we get $\langle \bPhi_{\bx}(\by),\by \rangle \geq p_0\varphi(1)\min\left\{\|\by\|/2,\kappa\|\by\|^2 \right\}$. Consider the constant $\kappa = \min\left\{\nicefrac{1}{(2C_0)},1 \right\}$. If $E_0 \geq 0.5$, it follows that $C_0 = 0.5$ and therefore $\kappa = 1$. On the other hand, if $E_0 < 0.5$, it follows that $C_0 = E_0$ and therefore $\kappa = \min\left\{\nicefrac{1}{(2E_0)},1 \right\} = 1$, as $2E_0 < 1$.
\end{proof}

\section{Missing proofs}\label{app:proofs}

In this section we provide proofs omitted from the main body. Subsection \ref{subapp:proof-thm1} contains proofs of results relating to \nsgd method, Subsection \ref{subapp:proof-thm2} provides proofs of results relating to the \nsge method, while Subsection \ref{subapp:proof-thm3} provides proofs of results relating to the \nmsge method.

In the remainder of this section, we provide a result on the effective noise $\bet$. To that end, let $\bgt \in \R^d$ be a generic gradient estimator, let $\bzt = \bgt - \nabla F(\bxt)$ denote the stochastic noise and define $\bPhi(\nabla F(\bxt)) \triangleq \E[\bPsi(\bgt) \: \vert \: \mathcal{F}_t]$, where $\mathcal{F}_t = \sigma(\{\bx^1,\bx^2,\ldots,\bxt\})$ is the natural filtration, with $\mathcal{F}_1 = \sigma(\{\Omega,\emptyset\})$, due to $\bx^1 \in \R^d$ being a deterministic quantity in our analysis. Defining $\bPhi$ via the natural filtration covers both state-dependent and state-independent noise, i.e., the entire range of our noise assumptions, in relations to the state. We then have the following general result.

\begin{lemma}\label{lm:sub-Gauss}
    Let $\bet = \bPsi(\bgt) - \bPhi(\nabla F(\bxt))$ denote the effective noise vector, where the original stochastic noise $\bzt$ is either independent of state, or depends on the history via the relation specified in Assumption \ref{asmpt:noise-state}. If the nonlinearity $\bPsi$ is uniformly bounded, i.e., $\|\bPsi(\bx)\| \leq C$, for some $C > 0$ and all $\bx \in \R^d$, then the effective noise satisfies the following properties.
    \begin{enumerate}
        \item $\E[\bet \vert \mathcal{F}_t] = \mathbf{0}$ and $\|\bet\| \leq 2C$.

        \item $\E[\exp(\langle \bet, \by) \rangle \: \vert \: \mathcal{F}_t] \leq \exp(4C^2\|\by\|^2)$, for any $\mathcal{F}_t$-measurable vector $\by \in \R^d$.
    \end{enumerate}
\end{lemma}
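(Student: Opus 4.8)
The plan is to establish the two claims directly from the definition $\bPhi(\nabla F(\bxt)) \triangleq \E[\bPsi(\bgt)\mid\mathcal{F}_t]$, the uniform bound $\|\bPsi(\cdot)\|\leq C$, and a standard moment-generating-function estimate for bounded, centered random variables; no deep argument is needed here.

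First I would dispatch the first claim. Unbiasedness is immediate from the definition of $\bPhi$: since $\bPhi(\nabla F(\bxt))$ is $\mathcal{F}_t$-measurable (as $\bxt$, and hence $\nabla F(\bxt)$, is), we get $\E[\bet\mid\mathcal{F}_t] = \E[\bPsi(\bgt)\mid\mathcal{F}_t] - \bPhi(\nabla F(\bxt)) = \mathbf{0}$. For the almost-sure bound I would combine the triangle inequality with Jensen's inequality for conditional expectation and the uniform bound on $\bPsi$: $\|\bet\| \leq \|\bPsi(\bgt)\| + \|\E[\bPsi(\bgt)\mid\mathcal{F}_t]\| \leq \|\bPsi(\bgt)\| + \E[\|\bPsi(\bgt)\|\mid\mathcal{F}_t] \leq 2C$.

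For the second claim, I would fix an $\mathcal{F}_t$-measurable $\by$ and set $Y \triangleq \langle\bet,\by\rangle$. Conditionally on $\mathcal{F}_t$, the first claim gives $\E[Y\mid\mathcal{F}_t]=0$, while Cauchy--Schwarz together with the bound just proved gives $|Y|\leq\|\bet\|\,\|\by\|\leq 2C\|\by\|$, so $Y$ is supported in an interval of length at most $4C\|\by\|$. The conditional Hoeffding lemma --- a mean-zero variable supported in an interval of length $\ell$ has conditional MGF at $1$ bounded by $\exp(\ell^2/8)$ --- then yields $\E[\exp(\langle\bet,\by\rangle)\mid\mathcal{F}_t]\leq\exp(2C^2\|\by\|^2)\leq\exp(4C^2\|\by\|^2)$, which is the claimed inequality (in fact with slack). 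If one prefers not to quote Hoeffding's lemma, the same conclusion follows from the power series $e^Y = 1 + Y + \sum_{k\geq2}Y^k/k!$, using $\E[Y\mid\mathcal{F}_t]=0$ to kill the linear term and $|Y|\leq 2C\|\by\|$ to bound the tail of the series by an exponential.

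The only step requiring any care is the measurability setup: one must check that $\bPhi(\nabla F(\bxt))$ is genuinely $\mathcal{F}_t$-measurable and that conditioning on $\mathcal{F}_t$ correctly handles all the noise models in play --- state-dependent noise as in Assumption \ref{asmpt:noise-state}, state-independent noise, and the symmetrized and mini-batch estimators of Subsections \ref{subsec:nsge} and \ref{subsec:nmsge}, where $\bgt$ is built from $\bxt$ together with fresh, conditionally independent samples. This is precisely what the remark preceding the lemma sets up, so I expect it to be routine bookkeeping rather than an obstacle; the substantive difficulty in the paper lies in the subsequent MGF-offsetting argument, not in this lemma.
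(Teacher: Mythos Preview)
Your proposal is correct and matches the standard approach; in fact the paper omits the proof entirely, merely noting that it ``follows similar steps'' as a similar result in \cite{armacki2023high}. Your argument --- unbiasedness from the definition of $\bPhi$, the $2C$ bound via triangle and Jensen, and the MGF bound via Hoeffding's lemma for bounded centered variables --- is exactly the expected route, and your observation that Hoeffding gives the sharper constant $2C^2$ (rather than $4C^2$) is accurate.
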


Lemma \ref{lm:sub-Gauss} is a generalization of a similar result from \cite{armacki2023high}, which was proved for state-independent, IID noise. Importantly, the second property implies that the effective noise is \emph{light-tailed}, regardless of the original noise. {For completeness, we now provide the proof for the state-dependent case considered in our work. 

\begin{proof}[Proof of Lemma \ref{lm:sub-Gauss}]
    To prove the first property, recall that the effective noise vector is defined as $\bet = \bPhit - \bPsit$, where $\bPhit = \mbe_\bzt \left[\bPsi (\nabla f(\bxt)+\bzt) \: \vert \: \mathcal{F}_t \right] = \mbe \left[\bPsit \: \vert \: \mathcal{F}_t \right]$ is the denoised version of $\bPsit$. By definition, it then follows that
    \begin{equation*}
        \E\left[\bet \vert \: \mathcal{F}_t\right] = \E\left[\bPhit - \bPsit \vert \: \mathcal{F}_t\right] = \bPhit - \E\left[ \bPsit \vert \: \mathcal{F}_t \right] = 0.
    \end{equation*} Moreover, since the nonlinearity is uniformly bounded and using the triangle and Jensen's inequality, we have 
    \begin{equation*}
        \|\bet \| = \|\bPhit - \bPsit\| \leq \|\bPhit\| + \|\bPsit\| \leq \E\|\bPsit\| +  C \leq 2C,
    \end{equation*} which proves the first claim. To prove the second claim, we apply the inequality $e^x \leq x + e^{x^2}$ (which holds for any $x \in \R$, see, e.g., \cite{vershynin_2018}) to $e^{\langle \bet, \by \rangle}$, to get
    \begin{equation*}
        \E\left[e^{\langle \bet,\by \rangle} \vert \: \mathcal{F}_t\right] \leq \E\left[\langle \bet, \by \rangle + e^{\langle \bet,\by \rangle^2} \vert \: \mathcal{F}_t\right] \leq \E\left[e^{\|\bet\|^2\|\by \|^2} \vert \: \mathcal{F}_t\right] \leq \exp(4C^2\|\by\|^2), 
    \end{equation*} where the second inequality follows from the fact that $\by$ is $\mathcal{F}_t$-measurable, $\bet$ is conditionally zero-mean and Cauchy-Schwartz, while the third follows from the fact that $\bet$ is uniformly bounded by $2C$, proven in the first part. 
\end{proof}
}

\subsubsection{Proofs for \nsgd}\label{subapp:proof-thm1}

In this section we provide proofs of Theorem \ref{thm:n-sgd} and Corollary \ref{cor:standard-metric}. We start with Theorem \ref{thm:n-sgd}.

\begin{proof}[Proof of Theorem \ref{thm:n-sgd}]
    Using $L$-smoothness of $F$ and the update rule \eqref{eq:nsgd}, we get
    \begin{align*}
        F(\bx^{t+1}) &\leq F(\bxt) - \alpha_t\langle \nabla F(\bxt), \bPsit \rangle + \frac{\alpha_t^2L}{2}\|\bPsit\|^2 \\
        &\leq F(\bxt) - \alpha_t\langle \nabla F(\bxt), \bPhit \rangle - \alpha_t\langle \nabla F(\bxt), \bet \rangle + \frac{\alpha_t^2LC^2}{2},
    \end{align*} where in the second inequality we added and subtracted $\bPhit$ and used the uniform boundedness of $\bPsi$ from Assumption \ref{asmpt:nonlin-state}. Using Lemma \ref{lm:key-unified-state} to bound the first inner product term, we get
    \begin{align*}
        F(\bx^{t+1}) &\leq F(\bxt) - \alpha_t\min\{\beta_1\|\nabla F(\bxt)\|,\beta_2\|\nabla F(\bxt)\|^2\} - \alpha_t\langle \nabla F(\bxt), \bet \rangle + \frac{\alpha_t^2LC^2}{2},
    \end{align*} Define $G_t = \min\left\{\beta_1\|\nabla F(\bxt)\|,\beta_2\|\nabla F(\bxt)\|^2 \right\}$, sum the first $t$ terms and rearrange, to get
    \begin{equation}\label{eq:isolating-noise-pf}
        \sum_{k = 1}^t\alpha_k G_k - \Delta - \frac{LC^2}{2}\sum_{k = 1}^t\alpha_k^2 \leq  -\sum_{k = 1}^t\alpha_k\langle \nabla F(\bxk), \bek \rangle.
    \end{equation} Recalling Lemma \ref{lm:sub-Gauss}, each of the effective noise terms on the RHS of \eqref{eq:isolating-noise-pf} is sub-Gaussian. To offset their effect, we use an idea similar to the one in \cite{liu2023high}, by subtracting $4C^2\sum_{k = 1}^t\alpha_k^2\|\nabla F(\bxk)\|^2$ from both sides in \eqref{eq:isolating-noise-pf} and considering the MGF of $Z_t = \sum_{k = 1}^t\alpha_k (G_k - 4\alpha_kC^2\|\nabla F(\bxk)\|^2) - \Delta - \frac{LC^2}{2}\sum_{k = 1}^t\alpha_k^2$. Using the shorthand $\E_t[\cdot] = \E[\cdot \vert \mathcal{F}_t]$, we get
    \begin{align*}
        &\E[\exp(Z_t)] \stackrel{(a)}{\leq} \E\left[\exp\left( -\sum_{k = 1}^t\alpha_k\left(\langle \nabla F(\bxk), \bek \rangle + 4\alpha_kC^2\|\nabla F(\bxk)\|^2 \right) \right) \right] \\
        &= \E\left[\exp\left( -\sum_{k = 1}^{t-1}\alpha_k\langle \nabla F(\bxk), \bek \rangle - \sum_{k = 1}^{t}4\alpha_k^2C^2\|\nabla F(\bxk)\|^2 \right)\E_t\left[\exp\left( -\alpha_t\langle \nabla F(\bxt), \bet \rangle \right) \right] \right] \\
        &\stackrel{(b)}{\leq} \E\left[\exp\left( -\sum_{k = 1}^{t-1}\alpha_k\langle \nabla F(\bxk), \bek \rangle -\sum_{k = 1}^{t} 4\alpha_k^2C^2\|\nabla F(\bxk)\|^2 + 4\alpha_t^2C^2\|\nabla F(\bxt)\|^2 \right) \right] \\ 
        &= \E\left[\exp\left( -\sum_{k = 1}^{t-1}\alpha_k\left(\langle \nabla F(\bxk), \bek \rangle + 4\alpha_kC^2\|\nabla F(\bxk)\|^2 \right)\right) \right],
    \end{align*} where $(a)$ follows from the definition of $Z_t$ and \eqref{eq:isolating-noise-pf}, while $(b)$ follows by applying Lemma \ref{lm:sub-Gauss}. Repeating the same arguments recursively, it can be readily seen that $\E[\exp(Z_t)] \leq 1$. Combined with the exponential Markov inequality, for any $\epsilon > 0$, we get 
    \begin{equation*}
        \mathbb{P}(Z_t > \epsilon) \leq \exp\left(-\epsilon\right)\E\left[\exp\left(Z_t\right)\right] \leq \exp(-\epsilon),
    \end{equation*} or equivalently, using the definition of $Z_t$, we get, for any $\delta \in (0,1)$ 
    \begin{equation}\label{eq:tail-bdd}
        \mathbb{P}\left(\sum_{k = 1}^t\alpha_k (G_k - 4\alpha_kC^2\|\nabla F(\bxk)\|^2) \leq \log(\nicefrac{1}{\delta}) + \Delta + \frac{LC^2}{2}\sum_{k = 1}^t\alpha_k^2\right) \geq 1 - \delta.
    \end{equation} To complete the proof, we need to show that the quantity on the left-hand side of the above expression, $G_k - 4\alpha_kC^2\|\nabla F(\bxk)\|^2$, is non-negative for each $k \in [t]$. Recalling that $G_k = \min\{\beta_1\|\nabla F(\bxk)\|,\beta_2\|\nabla F(\bxk)\|^2\}$, we can see that $G_k$ and $\|\nabla F(\bxk)\|^2$ are of different orders. To bound their difference, we consider two cases. First, let $\|\nabla F(\bxk)\| \leq \beta_1/\beta_2$. By the definition of $G_k$, we then have $G_k = \beta_2\|\nabla F(\bxk)\|^2$, hence
    \begin{equation}\label{eq:bdd-G_k-pt1}
        G_k - 4\alpha_kC^2\|\nabla F(\bxk)\|^2 = (1 - 4\alpha_kC^2/\beta_2)G_k = \left(1 - \frac{4aC^2}{\beta_2(k+1)^{\eta}} \right)G_k \geq \frac{G_k}{2}, 
    \end{equation} where the last inequality follows since $k \geq 1$ and by choosing $a \leq \frac{\beta_2}{8C^2}$. Alternatively, if $\|\nabla F(\bxk)\| > \beta_1/\beta_2$, then $G_k = \beta_1\|\nabla F(\bxk)\|$, hence 
    \begin{equation}\label{eq:bdd-G_k-pt2-step1}
        G_k - 4\alpha_kC^2\|\nabla F(\bxk)\|^2 = (1 - 4\alpha_kC^2\|\nabla F(\bxk)\|/\beta_1)G_k.
    \end{equation}
    Now, consider $\|\nabla F(\bxk)\|$. Recalling that $\widetilde{\nabla} = \|\nabla F(\bx^1)\|$, we can then upper bound the quantity $\|\nabla F(\bxk)\|$ as follows
    \begin{align}
        \|\nabla F(\bxk)\| &\stackrel{(a)}{\leq} \|\nabla F(\bxk) - \nabla F(\bx^1)\| + \widetilde{\nabla} \stackrel{(b)}{\leq} L\|\bxk - \bx^1\| + \widetilde{\nabla} \nonumber \\ &\stackrel{(c)}{=} L\|\bx^{k-1} - \alpha_{k-1}\bPsi^{k-1} - \bx^1\| + \widetilde{\nabla} \stackrel{(d)}{\leq} L(\|\bx^{k-1} - \bx^1\| + \alpha_{k-1}C) + \widetilde{\nabla} \nonumber \\ 
        &\leq \ldots \leq LC\sum_{s = 1}^{k-1}\alpha_s + \widetilde{\nabla} \stackrel{(e)}{\leq} aLC\frac{k^{1-\eta}}{1-\eta} + \widetilde{\nabla} \stackrel{(f)}{\leq} \bigg(\frac{LC}{1-\eta} + \widetilde{\nabla}\bigg)k^{1-\eta} , \label{eq:grad-bound}
    \end{align} where in $(a)$ we added and subtracted $\nabla F(\bx^1)$ and used the triangle inequality, $(b)$ follows from $L$-smoothness of $F$, in $(c)$ we used the update rule \eqref{eq:nsgd}, in $(d)$ we used the triangle inequality and uniform boundedness of $\bPsi$, $(e)$ follows from the definition of the step-size and using the lower Darboux sum to bound $\sum_{s = 1}^{k-1}\alpha_s$, while in $(f)$ we used the fact that $k \geq 1$ and we choose $a \leq 1$. Plugging \eqref{eq:grad-bound} into \eqref{eq:bdd-G_k-pt2-step1}, we get
    \begin{align}
        G_k - 4\alpha_kC^2\|\nabla F(\bxk)\|^2 &\geq \left(1 - 4aC^2\bigg(\frac{LC}{1-\eta} + \widetilde{\nabla}\bigg)\frac{k^{1-2\eta}}{\beta_1}\right)G_k \nonumber \\ &\geq \left(1 - \frac{4aC^2(LC +(1-\eta)\widetilde{\nabla})}{(1-\eta)\beta_1}\right)G_k \geq \frac{G_k}{2} \label{eq:bdd-G_k-pt2},
    \end{align} where the first inequality follows from \eqref{eq:grad-bound}, the second follows from the fact that $\eta \geq 1/2$ and $k \geq 1$, hence $k^{1-2\eta} \leq 1$, while the third follows by choosing $a \leq \frac{(1-\eta)\beta_1}{8C^2(LC+(1-\eta)\widetilde{\nabla})}$. Combining \eqref{eq:bdd-G_k-pt1} and \eqref{eq:bdd-G_k-pt2}, it follows that $G_k - 4\alpha_kC^2\|\nabla F(\bxk)\|^2 \geq \frac{G_k}{2}$, for every $k \geq 1$. Therefore, using \eqref{eq:tail-bdd}, we get that, for any $\delta \in (0,1)$, with probability at least $1 - \delta$
    \begin{align*}
        \sum_{k = 1}^t\alpha_k G_k \leq 2\big(\log(\nicefrac{1}{\delta}) + \Delta\big) + LC^2\sum_{k = 1}^t\alpha_k^2.
    \end{align*} Since the step-size $\alpha_k = \frac{a}{(k+1)^{\eta}}$ is decreasing, it follows that $\alpha_t\sum_{k = 1}^tG_k \leq \sum_{k = 1}^t\alpha_kG_k$. Using this fact and dividing both sides of the above equation by $\alpha_tt$ gives
    \begin{equation*}
        \frac{1}{t}\sum_{k = 1}^t G_k \leq \frac{4\big(\log(\nicefrac{1}{\delta}) + \Delta\big) + 2LC^2\sum_{k = 1}^t\alpha_k^2}{at^{1-\eta}}.
    \end{equation*} Define $\beta = \min\{\beta_1,\beta_2\}$ and notice that $\min\{\|\nabla F(\bxk)\|,\|\nabla F(\bxk)\|^2\} \leq G_k/\beta$, for every $k \geq 1$. The proof is then completed by dividing both sides of the above equation by $\beta$ and using the said inequality.
\end{proof}

Next, we prove Corollary \ref{cor:standard-metric}.

\begin{proof}[Proof of Corollary \ref{cor:standard-metric}]
    For ease of exposition, consider first a deterministic bound of the form
    \begin{equation*}
        \frac{1}{t}\sum_{k = 1}^t\min\{\|\nabla F(\bxk)\|,\|\nabla F(\bxk)\|^2\} \leq Rt^{-\kappa},
    \end{equation*} for some constants $\kappa > 0$, $R > 0$ and any $t \in \N$. Define the sets $U = \{k \in [t]: \: \|\nabla F(\bxk)\| \leq 1\}$ and $U^c = [t] \setminus U$. It then follows that 
    \begin{equation*}
        \frac{1}{t}\sum_{k \in U}\|\nabla F(\bxk)\|^2 \leq Rt^{-\kappa} \hspace{0.5em} \text{and}\hspace{0.5em} \frac{1}{t}\sum_{k \in U^c}\|\nabla F(\bxk)\| \leq Rt^{-\kappa}.
    \end{equation*} Now, consider the values $\{z_k\}_{k \in [t]}$, where $z_k = \|\nabla F(\bxk)\|$, for $k \in U$, otherwise $z_k = 0$. Using Jensen's inequality, we then have
    \begin{equation*}
        \frac{1}{t}\sum_{k \in U}\|\nabla F(\bxk)\| = \frac{1}{t}\sum_{k = 1}^tz_k \leq \sqrt{\frac{1}{t}\sum_{k = 1}^tz_k^2} = \sqrt{\frac{1}{t}\sum_{k \in U}\|\nabla F(\bxk)\|^2} \leq \sqrt{R}t^{-\kappa/2}.
    \end{equation*} Therefore, we have
    \begin{equation*}
        \min_{k \in [t]}\|\nabla F(\bxk)\| \leq \frac{1}{t}\sum_{k = 1}^t\|\nabla F(\bxk)\| \leq \sqrt{R}t^{-\kappa/2} + Rt^{-\kappa},
    \end{equation*} or equivalently
    \begin{equation*}
        \min_{k \in [t]}\|\nabla F(\bxk)\|^2 \leq (\sqrt{R}t^{-\kappa/2}+Rt^{-\kappa})^2 \leq 2Rt^{-\kappa}+2R^2t^{-2\kappa} = \mathcal{O}(t^{-\kappa}).
    \end{equation*}
    Noticing that, if we have a tail inequality of the form 
    \begin{equation*}
        \mathbb{P}\Big(\frac{1}{t}\sum_{k = 1}^t\min\{\|\nabla F(\bxk)\|,\|\nabla F(\bxk)\|^2\}\geq R_{\delta}t^{-\delta}\Big) \leq \delta,
    \end{equation*} where $R_\delta$ now possibly depends on $\delta$, for any $\delta \in (0,1)$, it then follows that   
    \begin{align*}
        \mathbb{P}(\min_{k \in [t]}\|\nabla F(\bxk)\|^2 \geq 2R_{\delta}t^{-\kappa} &+ 2R_{\delta}^2t^{-2\kappa}) \leq \mathbb{P}(\min_{k \in [t]}\|\nabla F(\bxk)\| \geq \sqrt{R_{\delta}}t^{-\kappa/2}+R_{\delta}t^{-\kappa}) \\ 
        &\leq \mathbb{P}\Big(\frac{1}{t}\sum_{k = 1}^t\min\{\|\nabla F(\bxk)\|,\|\nabla F(\bxk)\|^2\}\geq R_{\delta}t^{-\kappa}\Big) \leq \delta. 
    \end{align*} {The claim now follows by noticing that, for any $t \geq R_\delta^{\frac{1}{\kappa}}$, we have
    \begin{equation*}
        \mathbb{P}(\min_{k \in [t]}\|\nabla F(\bxk)\|^2 \geq 2R_{\delta}t^{-\kappa} + 2R_{\delta}^2t^{-2\kappa}) \geq \mathbb{P}(\min_{k \in [t]}\|\nabla F(\bxk)\|^2 \geq 4R_{\delta}t^{-\kappa}).
    \end{equation*}}
\end{proof}

\subsubsection{Proofs for \nsge}\label{subapp:proof-thm2}

In this section we prove Lemma \ref{lm:sge-symmetrize} and Theorem \ref{thm:n-sge}. We start by proving Lemma \ref{lm:sge-symmetrize}.

\begin{proof}[Proof of Lemma \ref{lm:sge-symmetrize}]
    First note that the noise vectors $\{\wbzt\}_{t \in \N}$ are IID, since $\{\bz_1^t\}_{t \in \N}$ and $\{\bz_2^t\}_{t \in \N}$ are. We now want to verify that the noise has a PDF which is symmetric and positive in a neighbourhood of zero. To do so, we use the convolution formula, see, e.g., \cite[Theorem 9.4]{gut-probability}. For ease of notation, let $\bz = \bx - \by$, where $\bx, \by \in \R^d$ are two IID random vectors with PDF $P$. Then, for any $A \subset \R^d$, we have
    \begin{equation*}
        \mathbb{P}_{\bz}(A) = \mathbb{P}(\bz \in A) = \mathbb{P}(\bx - \by \in A) = \mathbb{P}(\bx \in A_{\by}),
    \end{equation*} where $A_{\by} = \{\bg + \by: \: \bg \in A\}$. Using the fact that $\bx$ and $\by$ are independent, we then have
    \begin{equation*}
        \mathbb{P}_{\bz}(A) = \int_{\by \in \R^d}\int_{\bx \in A_{\by}}P(\bx)d\bx P(\by)d\by = \int_{\by \in \R^d}\int_{\mathbf{s} \in A}P(\bs + \by)d\bs P(\by)d\by,
    \end{equation*} where in the second equality we used the substitution $\bs = \bx - \by$. Rearranging, we then get
    \begin{equation*}
         \mathbb{P}_{\bz}(A) = \int_{\bs \in A}\int_{\by \in \R^d}P(\bs + \by)P(\by)d\by d\bs,
    \end{equation*} readily impying that the PDF of $\bz$ is given by $\widetilde{P}(\bs) = \int_{\by \in \R^d}P(\bs + \by)P(\by)d\by$. Therefore, the PDF of $\wbzt$ is given by $\widetilde{P}$. Introducing the substitution $\bu = \bs + \by$, we can easily see that $\widetilde{P}(\bs) = \int_{\bu \in \R^d}P(\bu)P(\bu - \bs)d\bu$. For any $\bs \in \R^d$, we then have
    \begin{equation}\label{eq:symmetric-PDF}
        \widetilde{P}(-\bs) = \int_{\bu \in \R^d}P(\bu)P(\bu + \bs)d\bx = \int_{\by \in \R^d}P(\by+s)P(\by)d\by = \widetilde{P}(\bs), 
    \end{equation} where in the second equality we used the substitution $\by = \bu - \bs$. Equation \eqref{eq:symmetric-PDF} readily implies that $\widetilde{P}$ is symmetric. Next, let $\bx \in \R^d$ be such that $\|\bx\| \leq E_0$. From Assumption \ref{asmpt:non-sym}, we have 
    \begin{equation*}
        \widetilde{P}(\bx) = \int_{\bs \in \R^d}P(\bx)P(\bs - \bx)d\bs \geq \int_{\bs \in S_{\bx}}P(\bs)P(\bs - \bx)d\bs > 0, 
    \end{equation*} where the last inequality follows from the fact that $P(\bs), P(\bs - \bx) > 0$ for all $\bs \in S_{\bx}$ and that $S_{\bx}$ is of positive Lebesgue measure. We can therefore conclude that $\widetilde{P}(\bx) > 0$, for any $\|\bx\| \leq E_0$, completing the proof.
\end{proof}  

We are now ready to prove Theorem \ref{thm:n-sge}.

\begin{proof}[Proof of Theorem \ref{thm:n-sge}]
    Recalling the \nsge update rule \eqref{eq:nsge} and using Lemma \ref{lm:sge-symmetrize}, it is now apparent that \nsge symmetrizes the noise, effectively behaving like the \nsgd method under symmetric noise, even if the original noise is not symmetric. We can then use Lemma \ref{lm:key-iid} to characterize the behaviour of the resulting denoised nonlinearity $\bPhi(\nabla F(\bxt) ) = \E\left[\bPsi(\nabla F(\bxt) + \wbzt)\right]$. The rest of the proof then follows the same steps as in Theorem \ref{thm:n-sgd}, with constants $\beta_1,\beta_2,\beta$ replaced by $\gamma_1,\gamma_2,\gamma$. We omit it, for brevity.
\end{proof}

\subsubsection{Proofs for \nmsge}\label{subapp:proof-thm3}

In this section we prove Lemma \ref{lm:gap-control} and Theorem \ref{thm:n-msge}. To that end, recall that the \nmsge update rule is given by $\bx^{t+1} = \bxt - \alpha_t\bPsi(\bg^t)$, where $\bg^t = \nabla F(\bxt) + \bz_1^t - \bz_2^t + \frac{1}{B_t}\sum_{j = 3}^{B_t+2}\bz_j^t$. For ease of exposition, we define $\widetilde{\bz}^t = \bz_1^t - \bz_2^t$ and $\overline{\bz}^t = \frac{1}{B_t}\sum_{j = 3}^{B_t+2}\bz_j^t$. We know from the previous section that $\widetilde{\bz}^t$ is the ``symmetrized'' noise component, however, we now need to deal with an additional, potentially non-symmetric, noise component $\overline{\bz}^t$. To facilitate the analysis, we define the following mappings $\bPhi(\bx) = \E_{\widetilde{\bz}}[\bPsi(\bx + \widetilde{\bz}) ]$ and $\overline{\bPhi}(\bx) = \E_{\widetilde{\bz},\overline{\bz}}[\bPsi(\bx + \widetilde{\bz} + \overline{\bz}) ] = \E_{\overline{\bz}}[\bPhi(\bx + \overline{\bz})]$ $\bPhit = \E_{\widetilde{\bz}^t}[\bPsi(\nabla F(\bxt) + \widetilde{\bz}^t) ]$, where the second equality follows from the fact that $\widetilde{\bz}$ and $\overline{\bz}$ are independent. Note that the mapping $\bPhi$ only accounts for the symmetric noise component, while $\overline{\bPhi}$ accounts for the entire noise. Using this notation, we introduce the shorthands $\bPsit \triangleq \bPsi(\bg^t) = \bPsi(\nabla F(\bxt) + \widetilde{\bz}^t + \overline{\bz}^t)$, $\bPhit \triangleq \bPhi(\nabla F(\bxt))$ and $\obPhit \triangleq \E_{\widetilde{\bz}^t,\overline{\bz}^t}[\bPsit] = \E_{\overline{\bz}^t}[\bPhi(\nabla F(\bxt) + \overline{\bz}^t)]$. The operator $\bPhit$ is the ``ideal'' search direction, in the sense that the expectation is taken with respect to the symmetric noise component only, while $\obPhit$ is the operator where the expectation is taken with respect to the full noise. Additionally, we define the effective noise $\bet = \bPsit - \obPhit$ and $\brt = \obPhit - \bPhit$. To bound $\brt$, we use an intermediary result from \cite{liu2025nonconvex}, stated next.

\begin{lemma}[Lemma 4.3 in \cite{liu2025nonconvex}]\label{lm:bounding-the-noise}
    Given a sequence of integrable random vectors $\bz^1,\ldots,\bzk \in \R^d$, $k \in \N$, such that $\E[\bzk \: \vert \: \mathcal{Z}_{k-1}] = \mathbf{0}$, where $\mathcal{Z}_{k-1} = \sigma(\{\bz^1,\ldots,\bz^{k-1}\}$ is the natural filtration, then for any $p \in [1,2]$, we have
    \begin{equation*}
        \E\Big\|\sum_{s = 1}^k\bz^s\Big\| \leq 2\sqrt{2}\E\left[\left(\sum_{s = 1}^k\|\bz^s\|^p \right)^{\frac{1}{p}}\right].
    \end{equation*}
\end{lemma}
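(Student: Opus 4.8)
The plan is to recognize this as a Hilbert-space martingale inequality of Burkholder--Davis--Gundy (BDG) type and to prove it in two stages: first reduce the exponent to $p=2$, then establish the resulting $L^1$ square-function bound. Since $\E[\bz^s \mid \mathcal{Z}_{s-1}] = \mathbf{0}$, the partial sums $M_n \triangleq \sum_{s=1}^n \bz^s$ (with $M_0 = \mathbf{0}$) form an $\R^d$-valued martingale with respect to $\{\mathcal{Z}_n\}$, whose optional quadratic variation is $[M]_n \triangleq \sum_{s=1}^n \|\bz^s\|^2$. For the reduction I would use that the $\ell_q$ norm of a finite nonnegative sequence is non-increasing in $q$; applied with $q=2 \ge p$ and entries $\|\bz^s\|$ this gives $(\sum_{s=1}^k \|\bz^s\|^2)^{1/2} \le (\sum_{s=1}^k \|\bz^s\|^p)^{1/p}$ pointwise, so after taking expectations it suffices to prove the $p=2$ statement $\E\|M_k\| \le 2\sqrt{2}\,\E\big[[M]_k^{1/2}\big]$.

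For the $p=2$ bound the easy half is the orthogonality identity: expanding $\|M_s\|^2 - \|M_{s-1}\|^2 = 2\langle M_{s-1}, \bz^s\rangle + \|\bz^s\|^2$ and summing yields $\|M_k\|^2 = [M]_k + 2N_k$, where $N_k \triangleq \sum_{s=1}^k \langle M_{s-1}, \bz^s\rangle$ is itself a martingale (each increment is conditionally mean zero), so $\E\|M_k\|^2 = \E[M]_k$. This already gives the inequality with the expectation \emph{outside} the square root, $\E\|M_k\| \le (\E[M]_k)^{1/2}$, which by a single application of Jensen ($\E[(\cdot)^{1/2}] \le (\E[\cdot])^{1/2}$) is exactly the weaker form all the paper actually needs downstream — indeed this is how Lemma~\ref{lm:gap-control} collapses the bound to $(\sum_j \E\|\bz_j^t\|^p)^{1/p} = B_t^{1/p}\sigma$. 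The genuine content of Lemma~\ref{lm:bounding-the-noise} is to keep the expectation \emph{inside}, the $L^1$ BDG inequality, which cannot be obtained from the orthogonality identity alone.

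To move the expectation inside I would run a stopping-time / good-$\lambda$ argument on the square function. Writing $M^* \triangleq \max_{n\le k}\|M_n\|$ and $A \triangleq [M]_k^{1/2}$, I integrate the layer-cake representation $\E M^* = \int_0^\infty \mathbb{P}(M^* > \lambda)\,d\lambda$, split each level as $\mathbb{P}(M^*>\lambda, A>\beta\lambda) + \mathbb{P}(M^*>\lambda, A\le\beta\lambda)$ for a free $\beta\in(0,1)$, bound the first term by $\mathbb{P}(A>\beta\lambda)$ (integrating to $\beta^{-1}\E A$), and stop at $\sigma_\lambda \triangleq \inf\{n : [M]_n^{1/2} > \beta\lambda\}$ to control the second term by Doob's $L^2$-maximal inequality applied to $M_{\cdot\wedge\sigma_\lambda}$. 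The hard part is the \textbf{overshoot of the square function at $\sigma_\lambda$}: because increments can be large, $[M]_{\sigma_\lambda}$ may exceed $(\beta\lambda)^2$ by as much as $\max_s\|\bz^s\|^2$, and this jump is what prevents a naive Doob bound from closing. I would handle it by a Davis-type splitting that peels off the large-increment part $\bz^s\mathbf{1}\{\|\bz^s\|>\lambda\}$ (absorbed directly into $\E A$) from the small-increment part (to which the stopped $L^2$ estimate applies), and then optimize over $\beta$; this optimization is what produces the explicit constant $2\sqrt{2}$. Chaining the two stages gives $\E\|M_k\| \le \E M^* \le 2\sqrt{2}\,\E A \le 2\sqrt{2}\,\E\big[(\sum_s\|\bz^s\|^p)^{1/p}\big]$, as claimed.
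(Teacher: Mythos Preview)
The paper does not prove this lemma at all: it is quoted verbatim as ``Lemma 4.3 in \cite{liu2025nonconvex}'' and used as a black box in the proof of Lemma~\ref{lm:gap-control}. So there is no ``paper's own proof'' to compare against. Your outline --- reduce to $p=2$ via the pointwise monotonicity $(\sum_s\|\bz^s\|^2)^{1/2}\le(\sum_s\|\bz^s\|^p)^{1/p}$ for $p\in[1,2]$, then invoke the $L^1$ Burkholder--Davis--Gundy inequality for Hilbert-space martingales --- is the standard route and is correct in structure. The good-$\lambda$/Davis-splitting sketch is the right machinery, though you do not actually carry out the optimization that would pin down the constant $2\sqrt{2}$; as written, that step is asserted rather than shown.

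One genuine error in your aside: you claim that the ``weaker form'' $\E\|M_k\|\le(\E[M]_k)^{1/2}=(\sum_s\E\|\bz^s\|^2)^{1/2}$, obtained from orthogonality alone, is all the paper needs downstream. This is false precisely in the heavy-tailed regime the paper targets. Under Assumption~\ref{asmpt:non-sym-p-moment} only the $p$-th moment of the noise is bounded, so for $p<2$ the second moments $\E\|\bz^s\|^2$ may be infinite and the orthogonality bound is vacuous. The whole point of keeping the expectation \emph{inside} the square root in the BDG step is so that the pointwise $\ell_2\to\ell_p$ reduction can be applied \emph{before} taking expectations, landing on $\E[(\sum_s\|\bz^s\|^p)^{1/p}]$, which is then finite by the $p$-th moment bound. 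So the full $L^1$ BDG inequality is essential here, not a cosmetic upgrade.
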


We next prove Lemma \ref{lm:gap-control}.
{
\begin{proof}[Proof of Lemma \ref{lm:gap-control}]
    Note that $\brt = \obPhit - \bPhit = \E_{\obzt}[\bPhi(\nabla F(\bxt) + \obzt) - \bPhi(\nabla F(\bxt))]$, where the second equality follows from the fact that $\obzt$ and $\wbzt$ are mutually independent. Recall that $\bPhi: \R^d \mapsto \R^d$ is continuously differentiable\footnote{Follows from the fact that $\bPsi$ is continuously differentiable and the Lebesgue dominated convergence theorem.} and consider individual component functions $\bPhi_i: \R^d \mapsto \R$, $i \in [d]$, with $\bPhi(\bx) = [\bPhi_1(\bx),\ldots,\bPhi_d(\bx)]$. Applying the mean-value theorem for individual component functions, it follows that, for each $i \in [d]$, there exists a $\lambda_i \in (0,1)$, such that 
    \begin{equation*}
        \bPhi_i(\nabla F(\bxt) + \obzt) - \bPhi_i(\nabla F(\bxt)) = \langle \nabla \bPhi_i(\by^t_i),\obzt \rangle,
    \end{equation*} where $\by^t_i \triangleq \nabla F(\bxt) + \lambda_i \obzt$. Let $\partial \bPhi^t \in \R^{d \times d}$ be a matrix whose rows are the gradients of the component functions, i.e., 
    \begin{equation*}
        \partial \bPhi^t = \begin{bmatrix}
            \nabla \bPhi_1(\by^t_1)^\top \\ \vdots \\ \nabla \bPhi_d(\by^t_d)^\top   
        \end{bmatrix},
    \end{equation*} it then follows that $\bPhi(\nabla F(\bxt) + \obzt) - \bPhi(\nabla F(\bxt)) = \partial \bPhi^t \obzt$, hence $\brt = \E_{\obzt}[\partial \bPhi^t \obzt]$. Using this identity, we then have
    \begin{align*}
        \|\brt\| \stackrel{(a)}{\leq} \E_{\obzt}\|\partial \bPhi^t\obzt\| &\stackrel{(b)}{\leq} \sqrt{d}K\E\|\obzt\| \stackrel{(c)}{\leq} \frac{2\sqrt{2d}K}{B_t}\E\left[\left(\sum_{j = 2}^{B_t+2}\|\bz^t_j\|^p \right)^{\frac{1}{p}}\right] \\
        &\stackrel{(d)}{\leq} \frac{2\sqrt{2d}K}{B_t}\left(\sum_{j = 2}^{B_t+2}\E\|\bz^t_j\|^p \right)^{\frac{1}{p}} \stackrel{(e)}{\leq} 2\sqrt{2d}\sigma KB_t^{\frac{1-p}{p}}, 
    \end{align*} almost surely, where $(a)$ follows from Jensen's inequality, $(b)$ follows from Cauchy-Schwartz inequality, the bound $\|\partial \bPhi^t\|^2 \leq \sum_{i \in [d]}\|\nabla \bPhi_i(\by_i^t)\|^2$ and the uniform bound on the derivative of $\bPhi$ from Assumption \ref{asmpt:nonlin-nmsge}, which implies the same bound on the gradients of the component functions, $(c)$ follows from Lemma \ref{lm:bounding-the-noise}, $(d)$ follows from the fact that $p > 1$ and the reverse Jensen's inequality, while $(e)$ follows from applying the bound on the $p$-th moment of noise from Assumption \ref{asmpt:non-sym-p-moment}, completing the proof.
\end{proof}
}

We are now ready to prove Theorem \ref{thm:n-msge}. 

\begin{proof}[Proof of Theorem \ref{thm:n-msge}] 
    Using $L$-smoothness of $F$ and the update rule for \nmsge, we get 
    \begin{align}
        F(\bx^{t+1}) &\leq F(\bxt) - \alpha_t\langle \nabla F(\bxt), \bPsi^t \rangle + \frac{\alpha_t^2L}{2}\|\bPsi^t\|^2 \nonumber \\
        &\leq F(\bxt) - \alpha_t\langle \nabla F(\bxt), \bPhi^t \rangle - \alpha_t\langle \nabla F(\bxt), \overline{\bPhi}^t -\bPhi^t \rangle - \alpha_t\langle \nabla F(\bxt), \bPsi^t - \overline{\bPhi}^t \rangle + \frac{\alpha_t^2LC^2}{2} \nonumber \\
        &\leq F(\bxt) - \alpha_tG_t - \alpha_t\langle \nabla F(\bxt), \be^t \rangle - \alpha_t\langle \nabla F(\bxt), \brt \rangle + \frac{\alpha_t^2LC^2}{2}, \label{chap-beyond-sym-eq:nmsge-start}
    \end{align} where $G_t \triangleq \min\left\{\gamma_1\|\nabla F(\bxt)\|,\gamma_2\|\nabla F(\bxt)\|^2 \right\}$, $\bet = \bPsit - \obPhit$, $\brt = \overline{\bPhi}^t - \bPhi^t$, the second inequality follows by adding and subtracting $\bPhi^t$, $\overline{\bPhi}^t$ and using the uniform noise bound from Assumption \ref{asmpt:nonlin-nmsge}, while the third inequality follows from Lemma \ref{lm:key-iid}. Summing up the first $t$ terms and rearranging, we get
    \begin{equation}\label{chap-beyond-sym-eq:intermed-nonsym}
        \sum_{k = 1}^t\alpha_k G_k  - \Delta + \sum_{k = 1}^t\alpha_k\langle \nabla F(\bxk),\brk\rangle - \frac{LC^2}{2}\sum_{k = 1}^t\alpha_k^2 \leq -\sum_{k = 1}^t\alpha_k\langle \nabla F(\bxk), \bek \rangle.
    \end{equation} Define $Z_t \triangleq \sum_{k = 1}^t\alpha_k (M_k + \langle \nabla F(\bxk), \brk \rangle) - \Delta - \frac{LC^2}{2}\sum_{k = 1}^t\alpha_k^2$, where $M_k \triangleq G_k  - 4\alpha_kC^2\|\nabla F(\bxk)\|^2$, and consider the MGF of $Z_t$. Using \eqref{chap-beyond-sym-eq:intermed-nonsym} and following similar steps as in the proof of Theorem \ref{thm:n-sgd}, it readily follows that $\E[\exp(Z_t)] \leq 1$. Using the exponential Markov inequality, we have $\mathbb{P}(Z_t > \epsilon) \leq \exp(-\epsilon)\E[\exp(Z_t)] \leq \exp(-\epsilon)$, for any $\epsilon > 0$, or equivalently, for any $\delta \in (0,1)$
    \begin{equation}\label{chap-beyond-sym-eq:tail-bound}
        \mathbb{P}\left(\sum_{k = 1}^t\alpha_kM_k \leq \log(\nicefrac{1}{\beta}) + \Delta - \sum_{k = 1}^t\alpha_k\langle \nabla F(\bx^k), \brk\rangle + \frac{LC^2}{2}\sum_{k = 1}^t\alpha_k^2\right) \geq 1-\delta.
    \end{equation} Next, we focus on bounding the quantity $\sum_{k = 1}^t\alpha_k\langle \nabla F(\bx^k), \brk \rangle$. Using Young's inequality, i.e., $ab \leq \frac{\epsilon a^2}{2} + \frac{b^2}{2\epsilon}$, we get
    \begin{align*}
        -\sum_{k = 1}^t\alpha_k\langle \nabla F(\bx^k),\brk \rangle &\leq \frac{1}{2}\sum_{k = 1}^t\alpha_k^2\|\nabla F(\bx^k)\|^2 + \frac{1}{2}\sum_{k = 1}^t\|\brk\|^2 \\
        &\leq \frac{1}{2}\sum_{k = 1}^t\alpha_k^2\|\nabla F(\bx^k)\|^2 + 4\sigma^2dK^2\sum_{k = 1}^tB_k^{\frac{2(1-p)}{p}},
    \end{align*} almost surely, where we use $\epsilon = \alpha_k$ in Young's inequality and Lemma \ref{lm:gap-control} in the last inequality. Recalling the batch-size choice $B_k = k^{\frac{p}{2(p-1)}}$, we get, almost surely
    \begin{align}\label{chap-beyond-sym-eq:bdd-inner-prod}
        -\sum_{k = 1}^t\alpha_k\langle \nabla F(\bx^k), \brk \rangle &\leq \frac{1}{2}\sum_{k = 1}^t\left(8\sigma^2dK^2k^{-1} + \alpha_k^2\|\nabla F(\bx^k)\|^2\right) \nonumber \\
        &\leq 4\sigma^2dK^2\ln(t) + \frac{1}{2}\sum_{k = 1}^t\alpha_k^2\|\nabla F(\bx^k)\|^2.
    \end{align} Plugging \eqref{chap-beyond-sym-eq:bdd-inner-prod} into \eqref{chap-beyond-sym-eq:tail-bound}, we get, with probability at least $1 - \delta$
    \begin{equation*}
        \sum_{k = 1}^t\alpha_k \widetilde{M}_k \leq \log(\nicefrac{1}{\delta}) + \Delta + 4\sigma^2dK^2\ln(t) + \frac{LC^2}{2}\sum_{k = 1}^t\alpha_k^2,
    \end{equation*} where $\widetilde{M}_k \triangleq M_k - \frac{\alpha_k}{2}\|\nabla F(\bx^k)\|^2 = G_k - \alpha_k(4C^2+1/2)\|\nabla F(\bx^k)\|^2$. Combining with the previous inequality, using the fact that the sequence of step-sizes in non-increasing and dividing both sides by $\alpha_tt$, we get
    \begin{equation*}
        \frac{1}{t}\sum_{k = 1}^t \widetilde{M}_k \leq 2a^{-1}t^{\eta-1}\left(\log(\nicefrac{1}{\delta}) + \Delta + 4\sigma^2dK^2\ln(t) + \frac{a^2LC^2}{2}\sum_{k = 1}^t(k+1)^{-2\eta}\right).
    \end{equation*} We can bound the quantity $\widetilde{M}_k$ the same way as in Theorem \ref{thm:n-sgd}, with the  modified step-size requirement given by $a \leq \min\left\{ 1,\frac{\gamma_2}{8C^2+1}, \frac{(1-\eta)\gamma_1}{(8C^2+1)(LC+(1-\eta)\widetilde{\nabla})}\right\}$. Multiplying both sides by $2\gamma^{-1}$ completes the proof.
\end{proof}

{
\section{Computing $\beta,\gamma$}\label{app:example}

In this section we provide further insight into computing the constants $\gamma_1,\gamma_2$ and $\beta_1,\beta_2$ for specific noise and choice of nonlinearity. In particular, we focus on the setting from Examples \ref{ex:example-1} and \ref{ex:example-2}, namely the noise with PDF $P(\bx) = \prod_{i \in [d]}\rho(x_i)$, where $\rho(x) = \frac{\alpha - 1}{2(1 + |x|)^\alpha}$, for some $\alpha > 2$. We consider component-wise sign and smooth sign nonlinearities, given by $\psi_1(x) = \textnormal{sign}(x)$ and $\psi_2(x) = \tanh(x)$. Recalling Lemmas \ref{lm:key-iid} and \ref{lm:key-unified-state}, we know that $\gamma_1 = \frac{\phi_1^\prime(0)\xi}{2\sqrt{d}}$, $\gamma_2 = \frac{\phi_1^\prime(0)}{2d}$, with $\beta_1 = \frac{(\phi_2^\prime(0))^2}{4K_2\sqrt{d}}$ and $\beta_2 = \frac{\phi_2^\prime(0)}{2d}$. For the specific noise and sign nonlinearity, we proceed as follows. Applying Lemma 6 in \cite{polyak-adaptive-estimation}, it follows that $\phi_1^\prime(0) = (\textnormal{sign}(0^+) - \textnormal{sign}(0^-))\rho(0) = \alpha - 1$, where $\textnormal{sign}(0^+) = \lim_{x \rightarrow 0^+}\textnormal{sign}(x)$ (and same for $\textnormal{sign}(0^-)$). To estimate $\xi$, we first note that, for any $x \in \R$
\begin{align}\label{eq:phi-closed-form}
    \phi_1(x) = \int_{-\infty}^\infty\textnormal{sign}(x+z)\rho(z)dz &= \int_{-\infty}^{-|x|}-\rho(z)dz + \int_{-|x|}^{|x|}\textnormal{sign}(x)\rho(z)dz + \int_{|x|}^{\infty}\rho(z)dz \nonumber \\
    &= 2\textnormal{sign}(x)\int_{0}^{|x|}\rho(z)dz = \textnormal{sign}(x)\bigg(1 - \frac{1}{(1 + |x|)^{\alpha - 1}}\bigg),  
\end{align} where the third inequality follows from the identity $\int_{a}^b-f(x)dx = -\int_{-b}^{-a}f(-x)dx$ and the fact that $\rho$ is even. Using \eqref{eq:phi-closed-form}, it can be seen that, for any $x \in \R \setminus \{0\}$, we have $\phi_1^\prime(x) = \frac{(\alpha-1)}{(1+|x|)^{\alpha}}$. Using Taylor's expansion, it follows that
\begin{equation*}
    \phi_1(x) = \underbrace{\phi_1(0)}_{=0} + \phi^\prime_1(0)x + h_1(x)x = \phi_1^\prime(0)x + h_1(x)x,
\end{equation*} where $h_1: \R \mapsto \R$ is defined as
\begin{equation}\label{eq:reminder}
    h_1(x) = \phi^\prime_1(\tau_x x) - \phi^\prime_1(0) = (\alpha-1)\bigg(\frac{1}{(1+|\tau_xx|)^\alpha} - 1 \bigg),
\end{equation} for some $\tau_x \in (0,1)$. Recall that $\xi$ is defined in \cite[Lemma 3.2]{armacki2023high} as a strictly positive value such that $|h_1(x)| \leq \frac{\phi_1^\prime(0)}{2}$, for all $|x| \leq \xi$, hence, from \eqref{eq:reminder}, we have
\begin{equation*}
    |h_1(x)| \leq \frac{\phi_1^\prime(0)}{2} \iff 1 - \frac{1}{(1+|\tau_x x|)^\alpha} \leq \frac{1}{2} \iff |x\tau_x| \leq 2^{\frac{1}{\alpha}}-1.
\end{equation*} Since $\tau_x \in (0,1)$, it follows that $|x\tau_x| \leq |x|$, hence for any $|x| \leq 2^{\frac{1}{\alpha}}-1$, we have $|h_1(x)| \leq \frac{\phi_1^\prime(0)}{2}$, therefore, we can take $\xi$ to be $\xi = 2^{\frac{1}{\alpha}} - 1$. Finally, to simplify exposition, we can note that, for any $b > 0$, we have $2^{b} = 1 + b + h(b)$, where $h(b) = \sum_{k = 2}^{\infty}\binom{b}{k}$ and $h(b) \rightarrow 0$ as $b \rightarrow 0$, therefore $\xi = 2^{\frac{1}{\alpha}}-1 \approx \frac{1}{\alpha}$, for $\alpha$ sufficiently large. It then readily follows that $\gamma_1 = \frac{\alpha-1}{2\alpha\sqrt{d}}$ and $\gamma_2 = \frac{\alpha-1}{2d}$. Next, recalling the proof of Lemma \ref{lm:polyak-tsypkin}, it follows that 
\begin{align*}
    \phi_2^\prime(0) = \int_{-\infty}^{\infty}\psi_2^\prime(z)\rho(z)dz &= 1 - \int_{-\infty}^{\infty}\tanh^2(z)\rho(z)dz \stackrel{(a)}{=} 1 - (\alpha-1)\int_{0}^{\infty}\frac{\tanh^2(z)}{(z+1)^{\alpha}}dz \\ &\stackrel{(b)}{=} 1 - (\alpha-1)\int_{0}^\infty\left( \frac{1}{(z+1)^{\alpha}} - \frac{4e^{2z}}{(e^{2z}+1)^2(z+1)^{\alpha}} \right) dz \\
    &\stackrel{(c)}{=} 1 - (\alpha-1)\left(\frac{1}{\alpha - 1} - J_{\alpha} \right) = (\alpha - 1)J_{\alpha},
\end{align*} where $(a)$ follows from the evenness of $\tanh^2$ and $\rho$, in $(b)$ we use the identity $\tanh^2(z) = 1 - \frac{4e^{2z}}{(e^{2z}+1)^2}$, while $(c)$ follows from $\int_{0}^{\infty}\frac{1}{(z+1)^\alpha}dz = \frac{1}{\alpha-1}$ and defining $J_{\alpha} \triangleq \int_{0}^{\infty}\frac{4e^{2z}}{(e^{2z}+1)^2(z+1)^\alpha}dz$. Moreover, recalling that $\psi^{\prime\prime}$ is uniformly bounded, it can then be shown that $|\psi^{\prime\prime}(x)| \leq \frac{4\sqrt{3}}{9} \triangleq K_2$. Using these facts, it now readily follows that $\beta_1 = \frac{3\sqrt{3}(\alpha-1)^2J^2_{\alpha}}{16\sqrt{d}}$ and $\beta_2 = \frac{(\alpha-1)J_\alpha}{2d}$, completing the derivations.

\section{An illustrative example}\label{app:bounds}

In this section we specialize the rates from Theorem \ref{thm:n-sgd} for specific choices of nonlinearity and noise, showing how our theory can be used to inform the choice of nonlinearity for specific problem instances. A similar example appears in \cite{armacki2023high}. We consider the noise with PDF from Example \ref{ex:example-1}, for some $\alpha > 2$ and step-size with $\eta = 1/2$. Consider the component-wise and joint clipping, with thresholds $m > 1$ and $M > 0$, respectively. It can be shown that $C_{cc} = m\sqrt{d}$, $\beta_{1,cc} = \frac{[1-(m+1)^{-\alpha}](m-1)}{2\sqrt{d}}$, $\beta_{2,cc} = \frac{1-(m+1)^{-\alpha}}{2d}$ for component-wise and $C_{jc} = M$, $\beta_{1,jc} = \left[\frac{\alpha - 1}{2}\right]^d\min\{1/2,M/2\}$, $\beta_{2,jc} = \left[\frac{\alpha - 1}{2}\right]^d\min\{1,M\}$ for joint clipping, see \cite{armacki2023high} for details. As such, for $d$ sufficiently large, we have $\beta_{cc} = \frac{1-(m+1)^{-\alpha}}{2d}$ and $\beta_{jc} = \left[\frac{\alpha - 1}{2}\right]^d\min\{1/2,M/2\}$. We focus on the problem related constants that figure in the bound, ignoring the rate and global constants. Moreover, noting that the second term is of the form $\mathcal{O}\Big(\frac{a}{\beta}\Big)$, with $a \leq \min\left\{ 1,\frac{\beta_2}{8C^2}, \frac{(1-\eta)\beta_1}{8C^2(LC+(1-\eta)\widetilde{\nabla})}\right\}$, it then follows that the dependence on $\beta$ cancels out in the second term and we can focus on the first term only. For simplicity, let $a = \frac{\beta}{C^2}$, ignoring the other problem related constants. We then have the following expressions
\begin{align*}
    \text{Component clipping: } &\frac{md^3(\log(\nicefrac{1}{\delta}) + \Delta)}{[1-(m+1)^{-\alpha}]^2}, \\
    \text{Joint clipping: } &\frac{M^2(\log(\nicefrac{1}{\delta}) + \Delta)}{[(\alpha-1)/2]^{2d}\min\{1,M^2\}}.
\end{align*} Note that the leading term for component clip shows a dependence on problem dimension, in the form of $d^3$, while the leading term for joint clip has an exponential dependence on $d$, via $[(\alpha-1)/2]^{-2d}$. As $\alpha$ is an inherent property of the noise, whenever $\alpha \in (2,3)$, (i.e., variance is unbounded and noise is heavy-tailed), we have $[(\alpha-1)/2]^{-2d} \rightarrow \infty$ as $d \rightarrow \infty$, at an exponential rate. Thus, joint clipping has a significantly worse dependence on problem dimension compared to component-wise clipping, indicating that component-wise clipping is preferred under this noise and high-dimensional problems. This is further confirmed in \cite{zhang2020adaptive}, who empirically show that component clipping shows better dimension dependence for some noise instances than the joint clipping operator.

\section{On the noise from Example \ref{ex:example-1}}\label{app:power-law}

As discussed in the main body, the noise considered in Example 1 is an instance of noise with a power-law tail decay, i.e., a noise $\bz \in \R^d$, such that $\Prob(\|\bz\| > t) = Ct^{-\alpha}$, for some $\alpha,C > 0$ and any $t > 0$. Instances of noise with power-law tail decay include, among others, Pareto and symmetric $\alpha$-stable distributions, which have been widely observed in practice, see, e.g., \cite{simsekli2019tail,heavy-tail-phenomena,heavy-tail-book,pmlr-v238-battash24a}, and references therein. To see that the noise from Example \ref{ex:example-1} indeed has power-law tail decay, consider first a random variable $Z$ with PDF $\rho(z) = \frac{\alpha-1}{2(|z|+1)^\alpha}$, where $\alpha > 2$. Then for any $t > 0$, we have
\begin{equation*}
    \Prob(Z > t) = \int_{t}^\infty\frac{\alpha-1}{2(z+1)^\alpha}dz = \frac{\alpha-1}{2}\int_{t+1}^{\infty}\frac{1}{s^{\alpha}}ds = \frac{\alpha-1}{2}\times\frac{(t+1)^{-\alpha + 1}}{\alpha - 1} = \frac{1}{2}(t+1)^{-\beta},
\end{equation*} where $\beta \triangleq \alpha - 1 > 1$ (since $\alpha > 2$). Next, consider the multivariate version from Example \ref{ex:example-1}, namely $\bz = [Z_1,\ldots,Z_d]$, where $Z_i$ are IID, with PDF $\rho$. Using the fact that $\max_{i \in [d]}|Z_i| \leq \|\bz\| \leq \sqrt{d}\max_{i \in [d]}|Z_i|$, we get, for any $t > 0$
\begin{align*}
    \Prob(\|\bz\| > t) &\leq \Prob(\max_{i \in [d]}|Z_i| > t/\sqrt{d}) = 1 - \Prob(\max_{i \in [d]}|Z_i| \leq t/\sqrt{d}) \\ 
    &= 1 - \big(1 - \Prob(|Z_1| > t/\sqrt{d})\big)^d = 1 - \big(1 - (t/\sqrt{d}+1)^{-\beta})^d \leq d(t/\sqrt{d} + 1)^{-\beta},
\end{align*} where in the last inequality we used the fact that $1 - (1 - x)^d \leq dx$, for any $x \in [0,1]$ and $d \geq 1$ (which itself stems from Bernoulli's inequality, i.e., $(1 + y)^r \geq 1 + ry$, for any $r \in \R \setminus (0,1)$ and $y \geq -1$). Additionally, for any $t \geq d^{\frac{1}{\beta}}$, we have
\begin{equation*}
    \Prob(\|\bz\| > t) \geq \Prob(\max_{i \in [d]}|Z_i| > t) = 1 - (1 - (t+1)^{-\beta})^d \geq \frac{d}{2}(t+1)^{-\beta},
\end{equation*} where the last inequality follows from $1 - (1-x)^d \geq \frac{dx}{1+dx}$ (again stemming from another variant of Bernoulli's inequality, i.e., $(1+y)^r \leq \frac{1}{1-ry}$, for $y \in [-1,\frac{1}{r})$ and $r \geq 0$) and the threshold $t \geq d^{\frac{1}{\beta}}$. Therefore, for $t$ sufficiently large, we have
\begin{equation*}
    \frac{d}{2}(t+1)^{-\beta} \leq \Prob(\|\bz\| > t) \leq d(t/\sqrt{d} + 1)^{-\beta},
\end{equation*} demonstrating a power-law tail decay.
}

\end{document}